\theoremstyle{plain}
\theoremstyle{definition}
\theoremstyle{remark}
\icmltitlerunning{A Group Fairness Framework for Post-Processing Everything}
\begin{document}

\twocolumn[
\icmltitle{\ours: A Group Fairness Framework for Post-Processing Everything}
\icmlsetsymbol{work}{*}

\begin{icmlauthorlist}
\icmlauthor{Alexandru \c{T}ifrea$^*$}{eth}
\icmlauthor{Preethi Lahoti}{g}
\icmlauthor{Ben Packer}{g}
\icmlauthor{Yoni Halpern}{g}
\icmlauthor{Ahmad Beirami}{g}
\icmlauthor{Flavien Prost}{g}
\end{icmlauthorlist}

\icmlaffiliation{eth}{Department of Computer Science, ETH Zurich}
\icmlaffiliation{g}{Google DeepMind}

\icmlcorrespondingauthor{Alexandru \c{T}ifrea}{alexandru.tifrea@inf.ethz.ch}
\icmlcorrespondingauthor{Flavien Prost}{fprost@google.com}

\vskip 0.3in
]

\printAffiliationsAndNotice{$^*$Work done during an internship at Google.} %

\begin{abstract}

Despite achieving promising fairness-error trade-offs, in-processing mitigation techniques for group fairness cannot be employed in numerous practical applications with limited computation resources or no access to the training pipeline of the prediction model. In these situations, post-processing is a viable alternative. However, current methods are tailored to specific problem settings and fairness definitions and hence, are not as broadly applicable as in-processing. In this work, we propose a framework that turns any regularized in-processing method into a post-processing approach. This procedure prescribes a way to obtain post-processing techniques for a much broader range of problem settings than the prior post-processing literature. We show theoretically and through extensive experiments that our framework preserves the good fairness-error trade-offs achieved with in-processing and can improve over the effectiveness of prior post-processing methods. Finally, we demonstrate several advantages of a modular mitigation strategy that disentangles the training of the prediction model from the fairness mitigation, including better performance on tasks with partial group labels.\footnote{Code is available at \url{https://github.com/google-research/google-research/tree/master/postproc_fairness}.}

\end{abstract}

\section{Introduction}
\label{sec:intro}

As machine learning (ML) algorithms are deployed in applications with a profound social impact, it becomes crucial that the biases they exhibit \citep{bickel75, dastin18, mehrabi21} are properly mitigated. 
Of particular importance is being equitable with respect to the different subgroups in the data (i.e.\ \emph{group fairness}) where groups
are determined by sensitive demographic attributes such as race, sex, age, etc \citep{barocas23}. To operationalize fairness, one can choose between the several dozen alternative definitions of fairness \citep{narayanan18, hort23} capturing different notions of equity.

\begin{figure}[t]
    \centering
    \includegraphics[width=\columnwidth]{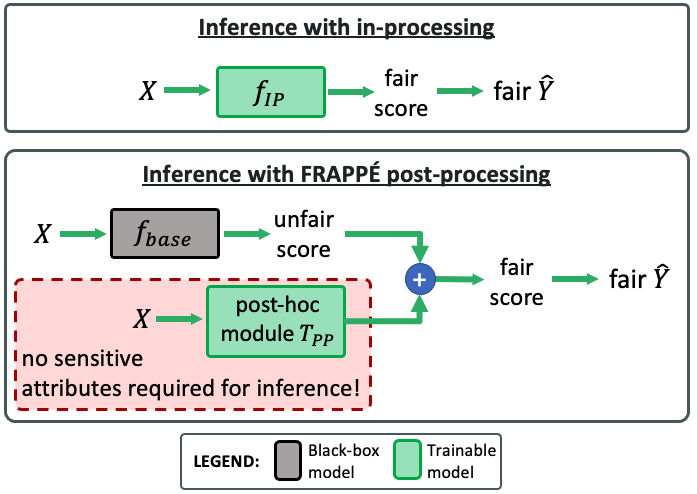}
    \caption{\textbf{Inference with $\ours$ and in-processing.} $\ours$ methods add the output of post-hoc module $\mult$ to the unfair scores output by pre-trained model $\fctbase$. Unlike prior post-processing methods, $\ours$ does not require sensitive attributes for inference. 
    While in-processing trains the entire prediction model $\fct_{IP}$ to induce fairness, $\ours$ only trains the post-hoc module.
    Note that, for classification, thresholding the predicted scores yields outputs $\Yhat$, while for regression $\Yhat$ coincides with the score.
    }
    \label{fig:teaser_inference}
\vspace{-0.7cm}
\end{figure}

One of the most studied mitigation paradigms for group fairness is in-processing \citep{hort23}, which changes the training procedure, for instance, by adding a fairness regularizer or constraint to the training loss.
In addition to their good performance, in-processing methods are appealing thanks to their broad applicability: to induce a new notion of fairness, one simply needs to quantify fairness violations and use that as a regularizer or constraint to train a new prediction model.

In practice, however, retraining the prediction model to induce fairness is often infeasible. For example, complex prediction models are challenging to retrain when only limited computational resources are available \citep{cruz23}. To make matters worse, there is often no access to the parameters of the prediction model, which can only be queried to produce outputs (e.g. logits) for the provided inputs. For instance, when using one of the increasingly popular AutoML platforms \citep{he21}, one often has little to no control over the training objective, making it impossible to induce the desired fairness notion via in-processing. Additionally, in-processing might not be effective in a multi-component system, as prior research on compositional fairness \citep{dwork19, atwood23} shows that debiasing each component individually might not lead to a fair outcome.

In these situations, post-processing techniques offer the most compelling way to ensure fair predictions via a post-hoc module that transforms the outputs of a pre-trained base model (\Cref{fig:teaser_inference}).
However, current post-processing techniques are not nearly as broadly applicable as in-processing. The recent survey of \citet{hort23} found that over $200$ methods (out of $341$ surveyed approaches) used in-processing, covering a broad class of fairness definitions and problem settings. 
In contrast, the survey identifies only $56$ post-processing methods which are tailored to specific problem settings (e.g.\ problems with binary sensitive attributes \citep{pleiss17, kim20}) and specific fairness definitions (e.g.\ \citet{hardt16} focuses on equal opportunity/odds; \citet{xian23} considers statistical parity). The recent method of \citet{alghamdi22} significantly extends the applicability of post-processing but is still confined to problems with discrete sensitive attributes and notions of fairness based on a conditional mean score.

Furthermore, existing post-processing methods require that sensitive attributes are known at inference time, despite it being often untenable in practice \citep{veale17}. This complex landscape leads to the question:

\begin{figure}[t]
    \centering
    \includegraphics[width=\columnwidth]{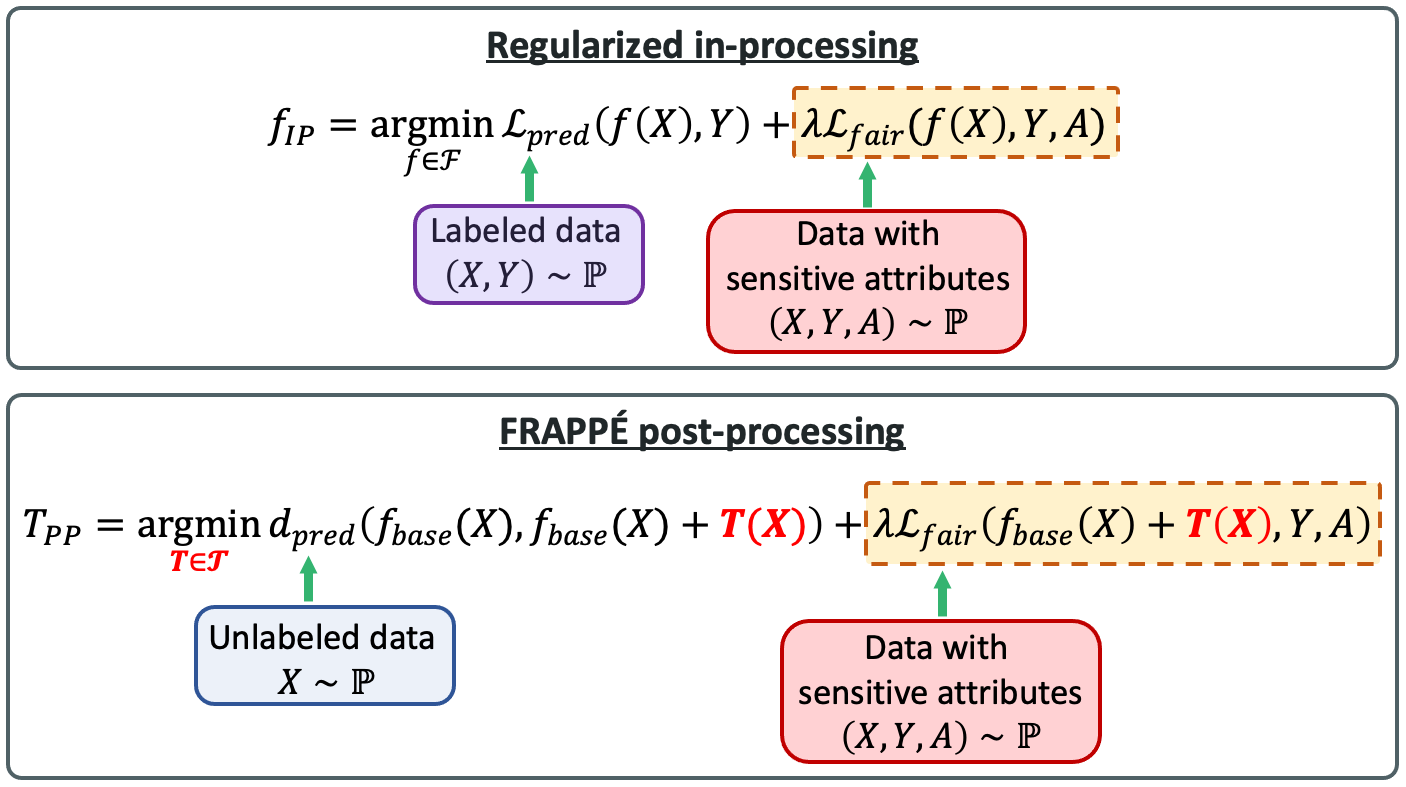}
    \caption{\textbf{$\ours$ and in-processing training objectives.} Unlike existing post-processing techniques, $\ours$ methods can be trained with \emph{any} in-processing fairness regularizer $\fairterm$ (orange box). In contrast to in-processing, $\ours$ only trains the post-hoc module $\mult(X)$ instead of the entire prediction model $\fct$. Loss terms are computed on data that is labeled, unlabeled or annotated with sensitive attributes, as indicated. $\regpp$ measures the difference between the outputs of the base and the fair models (see \Cref{sec:method}).}
    \label{fig:teaser_training}
\vspace{-0.4cm}
\end{figure}

\vspace{-0.3cm}
\begin{center}
\emph{Can we design a post-processing module to induce group fairness \\which satisfies the following desiderata?}
\end{center}
\vspace{-0.3cm}
\begin{enumerate}[label=D{\arabic*}.]
\vspace{-0.2cm}
    \item \emph{Works for any pre-trained models that output scores (e.g. logits, continuous labels).}
    \vspace{-0.1cm}
    \item \emph{Can trade off fairness and prediction error effectively for any quantifiable notion of fairness.}
    \vspace{-0.1cm}
    \item \emph{Does not require sensitive attributes at inference time.}
\end{enumerate}
\vspace{-0.2cm}

To answer this question, in this work we propose a \textit{Fairness Framework for Post-Processing Everything} (\textbf{$\ours$}) that turns any regularized in-processing method into a post-processing approach. As highlighted in \Cref{fig:teaser_training}, the resulting method is \textit{modular}, namely it decouples training the (unfair) base model from learning the post-hoc module $\mult(X)$. Importantly, $\ours$ methods are designed to allow training the post-hoc module using \textit{any} arbitrary fairness regularizer. 
Finally, the post-hoc module of $\ours$ methods models a function of covariates $X$, thus not requiring explicit knowledge of the sensitive attributes at inference time, as shown in \Cref{fig:teaser_inference}.

Our contributions are as follows:
\vspace{-0.2cm}
\begin{enumerate}[leftmargin=*]

\vspace{-0.2cm}
    \item Motivated by our theoretical result that establishes a connection between the in- and post-processing training objectives, we propose a novel framework to train a modular post-processing method using an in-processing fairness regularizer (\Cref{sec:method}). The procedure is designed to solve the limitations of prior in- and post-processing approaches captured by desiderata D1 -- D3.
    \vspace{-0.2cm}
    \item We complement the theoretical findings with extensive experiments (\Cref{sec:experiments}) 
    which show that $\ours$ methods do not degrade the fairness-error trade-off of their in-processing counterparts
    for several in-processing regularizers targeting different fairness definitions (MinDiff \citep{prost19} for equal opportunity, \citet{cho20} for statistical parity, \citet{mary19} for equal odds) and various commonly used datasets (Adult, COMPAS, HSLS, ENEM, Communities~\&~Crime).

    \vspace{-0.2cm}
    \item We demonstrate empirically the advantages of our framework's modular design.
    Unlike prior post-processing techniques, $\ours$ can induce \textit{any} notion of group fairness. Moreover, even when prior approaches are applicable, our experiments reveal that $\ours$ methods are on par or better compared to competitive post-processing techniques such as \citet{alghamdi22}. Finally, we provide evidence that modular $\ours$ methods can perform significantly better than their in-processing counterparts 
    on data with partial group labels, when sensitive annotations are scarce (\Cref{sec:novel_failure}).
\end{enumerate}

\vspace{-0.5cm}
\section{Problem setting}
\label{sec:pb_setting}

We consider prediction tasks where the goal is to predict labels $y \in \Yset$ (discrete in classification, or continuous for regression) from covariates $\xvec \in \Xset$ with low prediction error. A simple learning algorithm, Empirical Risk Minimization or ERM \citep{vapnik91}, that minimizes the prediction loss on an i.i.d.\ dataset is known to achieve great average-case error. However, this simple strategy does not guarantee good fairness \citep{menon18, chen18, zhao19, sagawa20, bardenhagen21, sanyal2022}. In applications where fairness is important, it is necessary to adjust the learning algorithm to promote greater equity.

\vspace{-0.3cm}
\paragraph{Group fairness.} A common fairness consideration is the model impact on sensitive groups. In the framework of group fairness, there exists a sensitive attribute $A$ (discrete or continuous) with respect to which we expect an algorithm to be equitable. Different flavors of group fairness are captured formally by different definitions, e.g.\ statistical parity (SP) \citep{calders09, dwork12}, equal opportunity (EqOpp), equalized odds (EqOdds) \citep{hardt16}. We refer to \citet{barocas23} and the respective prior works for details on these fairness definitions. Since fairness and predictive performance are often at odds \citep{menon18, chen18, zhao19}, the literature focuses on achieving a good trade-off. Two remarkably effective paradigms at reaching a good fairness-error trade-off in practice are in-processing and post-processing \citep{caton23, hort23}.\footnote{There also exist pre-processing approaches \citep{zemel13, madras18, lahoti19} that try to debias the data distribution. However, their performance in practice is usually significantly worse compared to in- and post-processing methods \citep{zehlike21, caton23, hort23}.}

\vspace{-0.3cm}
\paragraph{In-processing for group fairness.} Methods in this category seek to optimize a prediction loss (e.g.\ cross-entropy, mean squared error) while at the same time encouraging the prediction model to be fair. 
Regularized in-processing methods \citep{beutel2019, prost19, mary19, cho20, lowy2022} consider an optimization objective with a fairness violation penalty added to the prediction loss as a regularization term (\Cref{fig:teaser_training}).

\vspace{-0.3cm}
\paragraph{Post-processing for group fairness.} To induce fairness, post-processing techniques \citep{hardt16, kamiran18, nandy22, alghamdi22, cruz23} adjust the scores output by a pre-trained prediction model $\fctbase$. The current literature on post-processing methods for classification or regression focuses exclusively on \textit{group-dependent} transformations $\fctfair(\xvec) = T_A(\fctbase(\xvec))$, where $\fctbase$ and $\fctfair$ are the pre-trained and the fair models, respectively. The \emph{post-hoc transformation} (or \emph{post-hoc module}) $T_A$ is selected 
based on the value of the sensitive attribute $A$, from a set containing one learned transformation for each possible value of $A$.

\vspace{-0.2cm}
\section{Proposed framework}
\label{sec:method}

\vspace{-0.1cm}
In this section we introduce the $\ours$ framework that transforms a regularized in-processing method for group fairness into a post-processing one.\footnote{While we focus on \textit{regularized} in-processing objectives, our framework can easily be extended to constrained methods as well \citep{cotter19, chierichetti19, celis19}, as exemplified in~\Cref{fig:ip_vs_pp_reductions_main} for the method of \citet{agarwal18}.} Unlike prior post-processing approaches, instead of a group-dependent transformation that depends explicitly on the sensitive attribute, $\ours$ methods employ an additive term that is a function of all covariates $\xvec$ (the choice of the additive transformation is explained in \Cref{sec:method_description}):
\begin{flalign}
  \fctfair(\xvec) = \fctbase(\xvec) + \mult(\xvec).
\end{flalign} 
In what follows, we argue that $\ours$ methods are specifically designed to overcome the shortcomings of prior post-processing approaches captured in desiderata D1 -- D3 (and discussed in more detail in~\Cref{sec:related_work}), while also enjoying the advantages of a modular design (e.g.\ reduced computation time, no need to access training pipeline and data). 

\vspace{-0.2cm}
\subsection{Theoretical motivation: An equivalence between in- and post-processing for GLMs}
\label{sec:warmup}

We begin by motivating the proposed method by establishing a connection between a regularized in-processing objective and a bi-level optimization problem akin to post-processing.
To illustrate this intuition, we consider predictors that are generalized linear models (GLM) \citep{nelder72} and take the form $\fctglm(\xvec)=\glmlink(\thetavec^\top \xvec)$ for parameters $\thetavec \in \RR^D$ and a link function $\glmlink: \RR \rightarrow \RR$ (e.g.\ identity or sigmoid, for linear or logistic regression, respectively). Given datasets $\Dpred=\{(\xvec_i, y_i)\}_{i=1}^\npred$ and $\Dsens=\{(\xvec_j, y_j, a_j)\}_{i=1}^\nsens$ drawn i.i.d.\ from the same distribution\footnote{Often, in practice, $\Dpred$ and $\Dsens$ may even coincide. We leave a discussion of the implications of a potential distribution shift between $\Dpred$ and $\Dsens$ as future work.}, we analyze generic regularized optimization problems of the form 
\begin{flalign}
    \optip(\thetavec; \lambda) &= \frac{1}{\npred} \sum_{(\xvec, y) \in \Dpred} \predloss(\xvec, y; \thetavec) \notag \\
    &+ \lambda \regterm(\thetavec; \Dsens),
    \label{eq:optipglm}
\end{flalign}
where $\predloss$ is the prediction loss and $\regterm$ is an arbitrary regularizer capturing a fairness violation penalty. We consider loss functions that can be written as 
\begin{equation}
\glmloss(\xvec, y; \thetavec)=\glmpart(\thetavec) - \thetavec^\top\glmtrans(\xvec, y),
\label{eq:glmloss}
\end{equation}
with $\glmpart: \RR^D \rightarrow \RR$ strictly convex. The function $\glmtrans$ is a sufficient statistic with respect to $\thetavec$ and $\glmpart$ can be viewed as a partition function \citep{wainwright08}. Standard loss functions for linear regression (e.g.\ mean squared error) or classification (e.g.\ logistic loss) follow this pattern.

For every optimization problem that takes the form in~\Cref{eq:optipglm}, consider the following corresponding bi-level optimization problem: 
\begin{align}
&\optpp(\thetavec; \lambda)= \breg(\thetavec, \thetabase) + \lambda \regterm(\thetavec; \Dsens), \notag \\&\text{with } \thetabase := \arg\min_{\thetavec} \frac{1}{\npred} \sum_{(\xvec, y) \in \Dpred} \predloss(\xvec, y; \thetavec),
\label{eq:optppglm}
\end{align}
where we denote by $\breg(\thetavec, \phivec):=\glmpart(\thetavec) - \glmpart(\phivec) - \nabla \glmpart(\phivec)^\top (\thetavec - \phivec)$ the Bregman divergence \citep{bregman67} of the partition function $\glmpart(\thetavec)$. Intuitively, the first term encourages that the outputs produced by the GLMs determined by $\thetavec$ and $\thetabase$ are similar.

\paragraph{Example for linear regression.} For instance, for linear regression and the mean squared error (MSE), the link function is the identity $\glmlink(z)=z$, and the loss function can be written as $\ell_\text{MSE}(\xvec, y; \thetavec)=\|\thetavec^\top \xvec - y\|^2=\thetavec^\top \xvec\xvec^\top \thetavec - 2y\thetavec^\top\xvec+c$, for a constant $c\ge0$. It follows that the Bregman divergence of $\glmpart(\thetavec)=\thetavec^\top \xvec\xvec^\top \thetavec$ is given by $\breg(\thetavec, \phivec)=(\thetavec - \phivec)^\top \xvec \xvec^\top (\thetavec - \phivec) = \|\thetavec^\top \xvec - \phivec^\top\xvec\|^2$, namely the MSE between the outputs of the models parameterized by $\thetavec$ and $\phivec$, for arbitrary $\thetavec, \phivec \in \RR^D$. 

\paragraph{Equivalence between $\optip$ and $\optpp$.} The following result establishes a connection between the optimization objectives $\optip$ (\Cref{eq:optipglm}) and $\optpp$ (\Cref{eq:optppglm}) introduced above (the proof is provided in~\Cref{appendix:ip_vs_pp_proof}).

\begin{restatable}{proposition}{ipvspp}
    Consider the optimization objectives introduced in~\Cref{eq:optipglm,eq:optppglm}. There exists a constant $C\in \RR$ such that for any $\thetavec \in \RR^D$ and $\lambda \ge 0$ we have \begin{equation}
        \optpp(\thetavec; \lambda) = \optip(\thetavec; \lambda)+C.
        \label{eq:ip_pp_equivalence}
    \end{equation}
 \label{prop:ip_vs_pp}
\end{restatable}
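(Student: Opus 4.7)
The plan is to directly exhibit the constant $C$ by showing that the difference $\optpp(\thetavec;\lambda) - \optip(\thetavec;\lambda)$ reduces to $\breg(\thetavec, \thetabase) - \frac{1}{\npred}\sum_{(\xvec,y)\in \Dpred} \predloss(\xvec, y; \thetavec)$ (the regularizer terms cancel), and that this difference is independent of $\thetavec$. The argument should rely on two ingredients: the additive form of the GLM loss in \Cref{eq:glmloss}, and the first-order optimality condition satisfied by $\thetabase$ as a minimizer of the (strictly convex) empirical loss.

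First I would plug the GLM loss form into the average prediction loss. Letting $\bar{t} := \frac{1}{\npred}\sum_{(\xvec,y)\in\Dpred} \glmtrans(\xvec, y)$, the average empirical loss becomes
\begin{equation}
\frac{1}{\npred}\sum_{(\xvec,y)\in\Dpred} \predloss(\xvec, y; \thetavec) = \glmpart(\thetavec) - \thetavec^\top \bar{t}.
\end{equation}
Because $\glmpart$ is strictly convex, this objective has a unique minimizer $\thetabase$ characterized by the stationarity condition $\nabla \glmpart(\thetabase) = \bar{t}$.

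Next I would expand the Bregman divergence using its definition and substitute the optimality condition to eliminate $\nabla \glmpart(\thetabase)$:
\begin{align}
\breg(\thetavec, \thetabase) &= \glmpart(\thetavec) - \glmpart(\thetabase) - \nabla\glmpart(\thetabase)^\top (\thetavec - \thetabase) \notag \\
&= \glmpart(\thetavec) - \thetavec^\top \bar{t} + \bigl(\thetabase^\top \bar{t} - \glmpart(\thetabase)\bigr).
\end{align}
The first two terms on the right are exactly the average empirical loss derived above, while the parenthesized expression depends only on the data and on $\thetabase$, not on the free variable $\thetavec$.

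Finally I would set $C := \thetabase^\top \bar{t} - \glmpart(\thetabase)$ and conclude that $\breg(\thetavec, \thetabase) = \frac{1}{\npred}\sum \predloss(\xvec, y; \thetavec) + C$; adding $\lambda \regterm(\thetavec; \Dsens)$ to both sides yields the claim $\optpp(\thetavec; \lambda) = \optip(\thetavec; \lambda) + C$ for all $\thetavec$ and $\lambda \ge 0$. I do not expect any serious obstacle here; the only subtlety is justifying the well-definedness of $\thetabase$ (which follows from strict convexity of $\glmpart$) and being careful that $C$ depends only on the data and on $\thetabase$, and not on $\thetavec$ or on $\lambda$.
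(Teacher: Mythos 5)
Your proposal is correct and follows essentially the same route as the paper's proof: both use the first-order optimality condition $\nabla\glmpart(\thetabase)=\frac{1}{\npred}\sum\glmtrans(\xvec_i,y_i)$ to rewrite the Bregman divergence as the average prediction loss plus the same constant $C=\thetabase^\top\bar{t}-\glmpart(\thetabase)$. Your explicit remark that $\thetabase$ is well-defined by strict convexity is a small but welcome addition the paper leaves implicit.
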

\vspace{-0.7cm}
It follows from~\Cref{prop:ip_vs_pp} that minimizing $\optip$ and $\optpp$ leads to the same solution, for any choice of the regularizer $\regterm$ and the regularization strength $\lambda$. In the context of fairness, this result implies that sweeping over the hyperparameter $\lambda$ gives rise to identical fairness-error Pareto frontiers between any regularized in-processing and the corresponding post-processing method trained with $\optpp$. Moreover, since the two optimization problems are identical up to a universal constant, properties established for an in-processing objective (e.g.\ smoothness \citep{cho20}, convergence rate of gradient descent \citep{lowy2022}, etc) carry over intrinsically to the $\optpp$ counterpart. 

\begin{figure*}[t]
\centering
\vspace{-0.3cm}
\begin{subfigure}[t]{0.23\textwidth}
\captionsetup{justification=centering}
    \includegraphics[height=3.55cm]{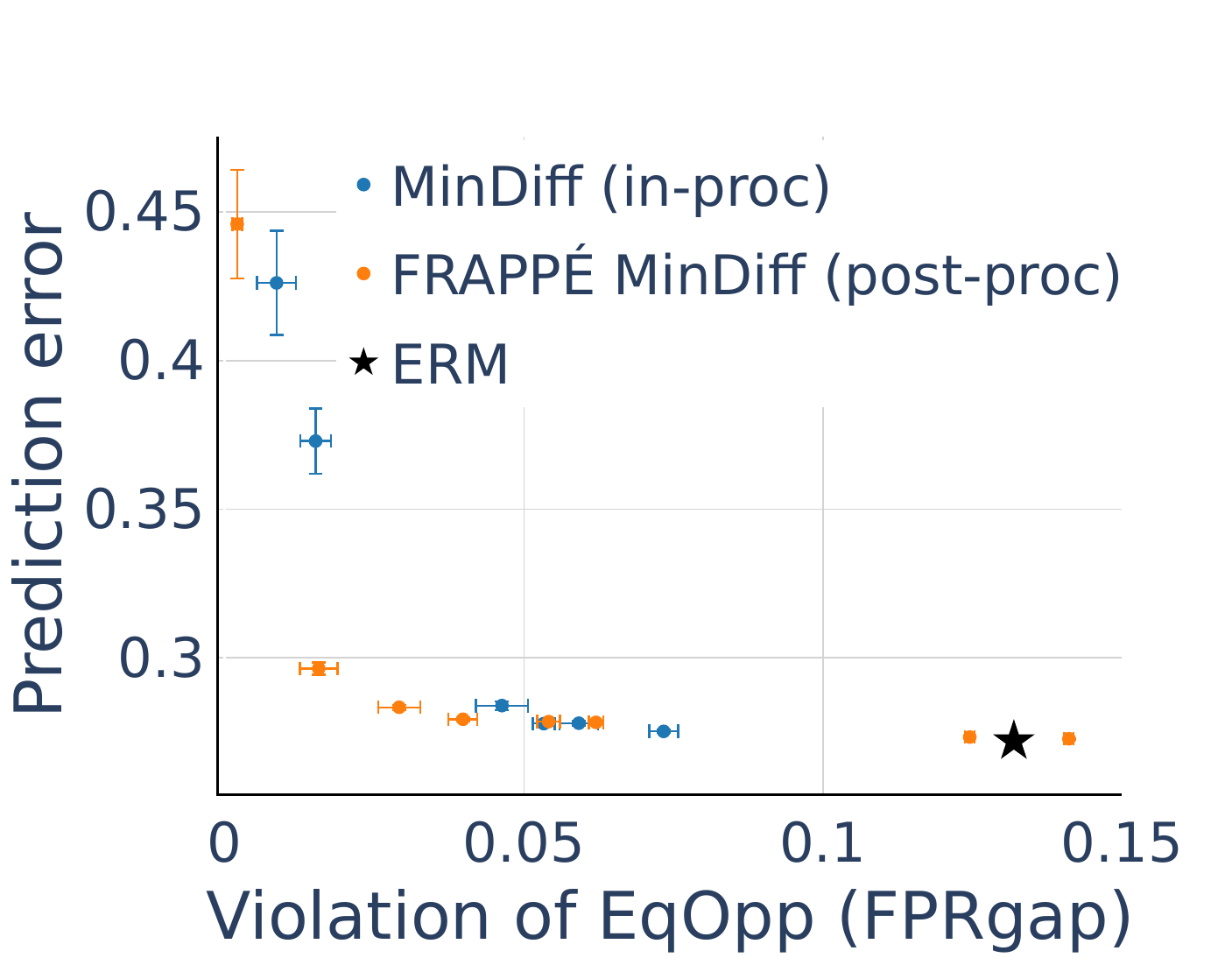}
\caption{\scriptsize $\optip=$ \textit{MinDiff}.\\Dataset: HSLS.}
\end{subfigure}
\hfill
\begin{subfigure}[t]{0.23\textwidth}
\captionsetup{justification=centering}
    \includegraphics[height=3.55cm]{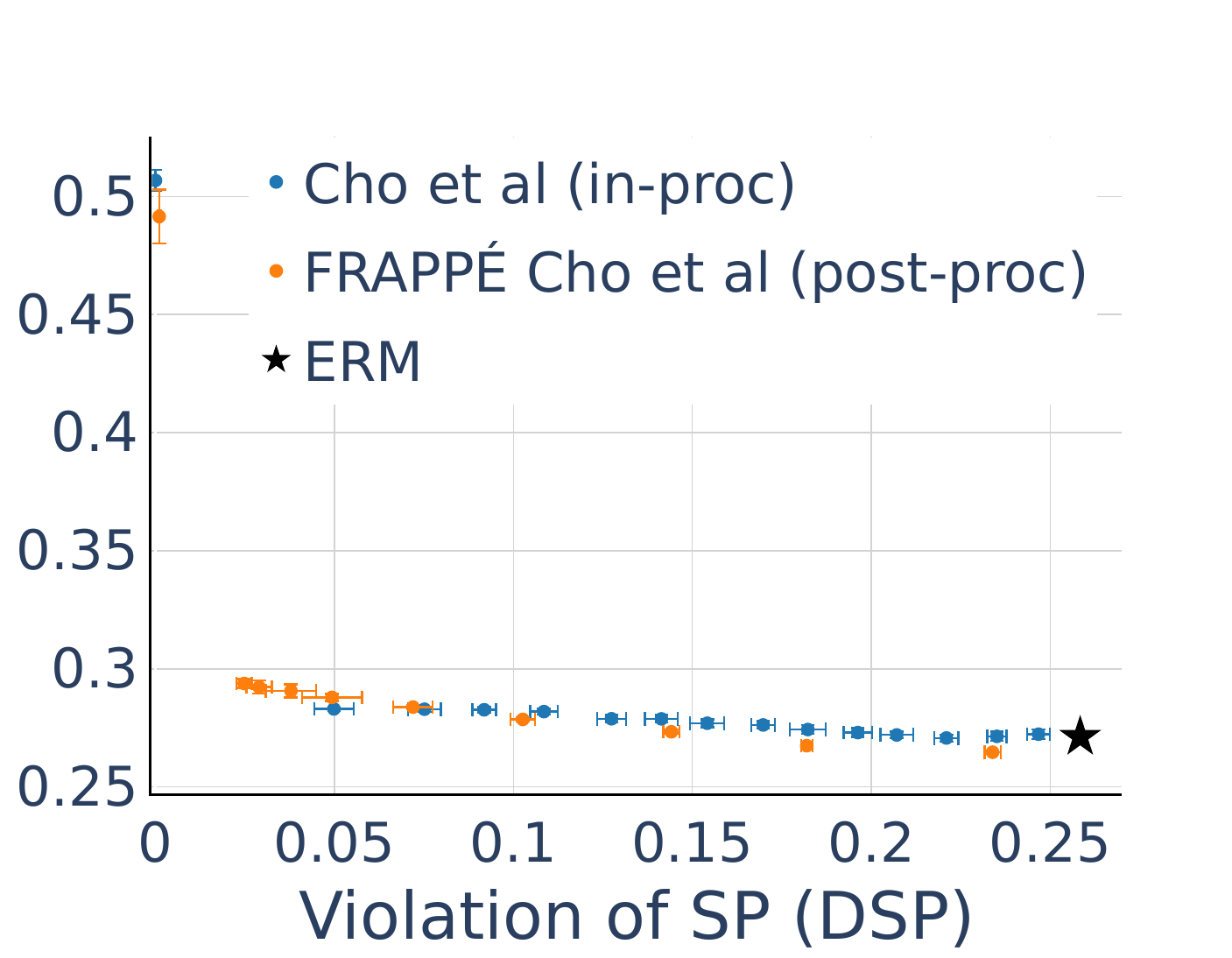}
\caption{\scriptsize $\optip=$ \citet{cho20}.\\Dataset: HSLS.}
\label{fig:ip_vs_pp_cho_main}
\end{subfigure}
\hfill
\begin{subfigure}[t]{0.23\textwidth}
\captionsetup{justification=centering}
    \includegraphics[height=3.55cm]{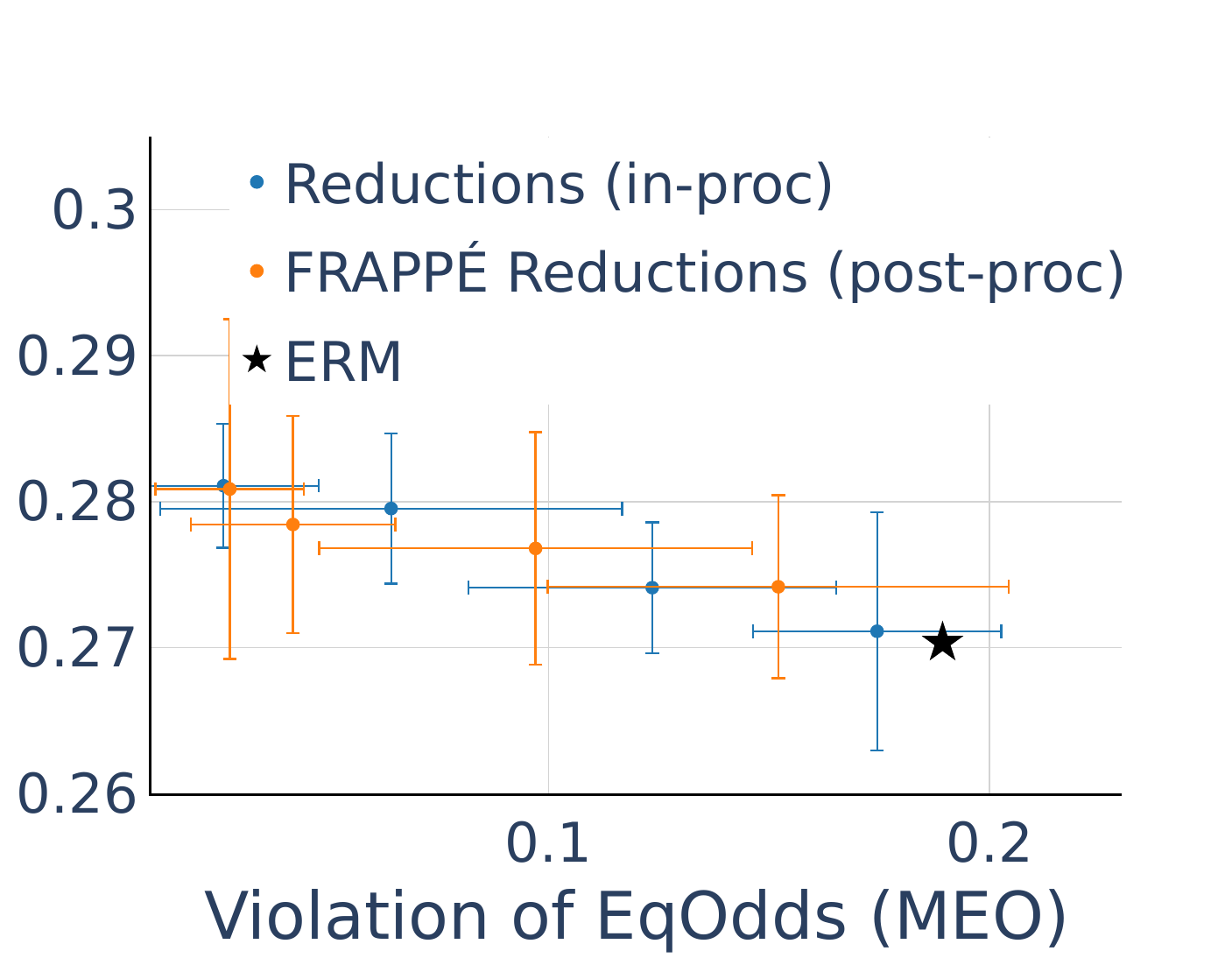}
\caption{\scriptsize $\optip=$ \textit{Reductions}.\\Dataset: HSLS.}
\label{fig:ip_vs_pp_reductions_main}
\end{subfigure}
\hfill
\begin{subfigure}[t]{0.24\textwidth}
\captionsetup{justification=centering}
    \includegraphics[height=3.55cm]{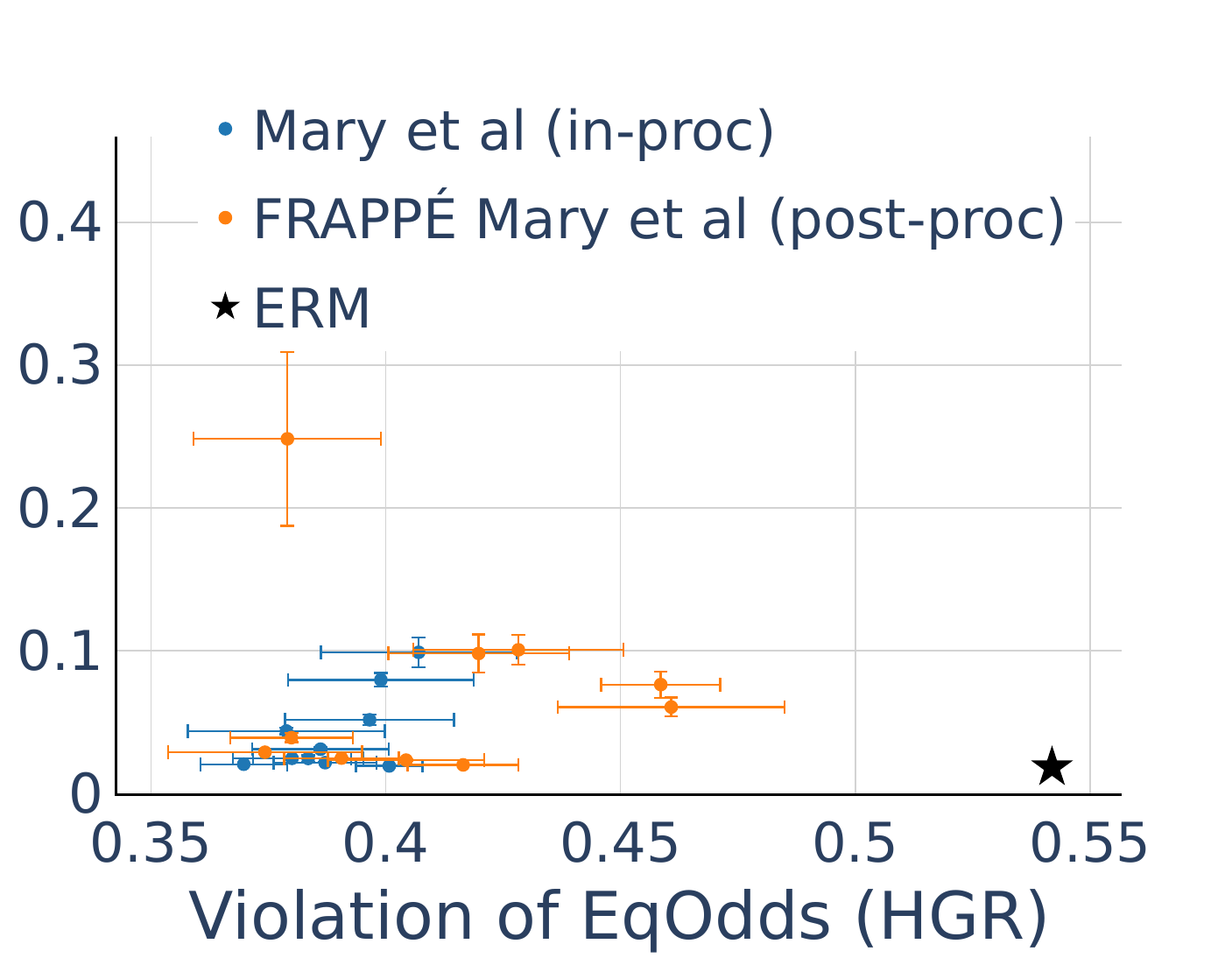}
\caption{\scriptsize $\optip=$ \citet{mary19}.\\Dataset: Communities~\&~Crime.}
\label{fig:mary}
\end{subfigure}

\vspace{-0.1cm}
\caption{Inducing three different definition of fairness (EqOpp, SP, and EqOdds) using in-processing methods and their $\ours$ post-processing variant leads to similar Pareto frontiers. Thanks to their modular design, $\ours$ methods only need to retrain the post-hoc transformation $\mult(x)$, instead of the entire prediction model.  \Cref{appendix:ip_vs_pp} shows similar results on the Adult, COMPAS and ENEM datasets.
Notably, $\ours$ \citet{mary19} is the first post-processing method that can operate on data with continuous sensitive attributes, such as Communities~\&~Crime.}
\label{fig:ip_vs_pp}
\vspace{-0.4cm}
\end{figure*}

\subsection{Proposed post-processing framework}
\label{sec:method_description}

We now describe how to turn the insights revealed by~\Cref{prop:ip_vs_pp} into a practical framework for training a post-processing method for group fairness with an in-processing objective. Moreover, we extend the intuition developed for GLMs to more generic function classes.

First, as mentioned before, numerous in-processing fairness mitigations \citep{prost19,mary19,cho20,lowy2022} consider optimization objectives that can be written like $\optip$, where the regularizer $\regterm$ captures a fairness violation penalty. 
On the other hand, the bi-level problem $\optpp$ can be used to train a post-processing method, where the inner optimization corresponds to obtaining the pre-trained model parameters $\thetabase$. While~\Cref{prop:ip_vs_pp} only holds for GLMs, both the $\optip$ objective and $\optpp$ can be considered in the context of training more generic model classes (e.g.\ neural networks).

Furthermore, recall that post-processing methods only modify the outputs of a pre-trained model, instead of training a model from scratch. The GLM scenario introduced above suggests to choose this post-hoc transformation to have the following additive form: $\fctfair(\xvec)=\glmlink((\thetabase + \thetamult)^\top\xvec)$. 
In particular, depending on the link function $\glmlink$, the post-hoc transformation can be additive in output space (e.g.\ for linear regression) or logit space (e.g.\ for logistic regression).
More generally, we can use the intuition developed for GLMs to propose the following post-processing transformation of a pre-trained model for regression and classification:
\begin{flalign}
  \fctfair(\xvec) = \fctbase(\xvec) + \mult(\xvec), \label{eq:pp_reg}
\end{flalign}   
where, for classification, $\fctbase(\xvec)$ and $\fctfair(\xvec)$ produce vectors of unnormalized logits. 
Importantly, $\ours$ methods only train the fairness correction $\mult(\xvec)$ which is often significantly less complex than the prediction model $\fctbase(\xvec)$, thus decreasing training time compared to in-processing. 

In conclusion, for an arbitrary in-processing method that solves a regularized objective like~\Cref{eq:optipglm}, our framework constructs the following optimization problem:
\begin{flalign}
    \mult:=\arg\min_{T} \hspace{0.2em}& \frac{1}{|\Dpp|} \sum_{\xvec \in \Dpp} \regpp((\fctbase + T)(\xvec); \fctbase(\xvec))\notag\\
    &+ \lambda \regterm(\fctbase+T; \Dsens), \label{eq:optpp}
\end{flalign}
where the two terms are computed on datasets $\Dsens =\{(\xvec_i, y_i, a_i)\}_{i=1}^{\nsens}$ and $\Dpp =\{\xvec_i\}_{i=1}^{\npp}$ drawn i.i.d.\ from the same distribution. Here $\regterm$ is the in-processing fairness regularizer, and $\regpp$ is the Bregman divergence in $\optpp$ or some other notion of discrepancy between the outputs of the $\fctfair$ and $\fctbase$ models (e.g.\ KL divergence for classification).

Unlike in-processing, $\ours$ is modular and can find a different fairness-error trade-off or mitigate fairness with respect to a different definition (e.g.\ SP, EqOdds, EqOpp) by just retraining the simple additive term $\mult$, instead of always retraining a new model $\fctfair$ from scratch (i.e.\ only modules in the green boxes in \Cref{fig:teaser_inference} need to be retrained).

Finally, minimizing the $\regpp$ term does not require labeled training data, which makes this procedure suitable when either the labels $Y$ or the sensitive attributes $A$ are difficult to collect \citep{awasthi2021evaluating, Prost2021}. This observation is particularly important for mitigating the
limitations of in-processing on data with partial group labels, when $|\Dsens|$ is small, as discussed in~\Cref{sec:novel_failure}.

\vspace{-0.2cm}
\subsection{Connection to related prior works}

The objective in~\Cref{eq:optpp} can be seen as a two-step boosting algorithm \citep{schapire90}, where the second step corrects the unfairness of the model $\fctbase$ obtained after the first step, similar to \citet{liu21, bardenhagen21}. 
This formulation of post-processing is also related to post-hoc methods for uncertainty calibration \citep{pleiss17, kumar19}, as well as techniques for \emph{model reprogramming}, such as \citet{zhang2023} (see~\Cref{appendix:reprogramming} for more details).
The additive post-hoc transformation that we employ has been considered in the past by works that focus exclusively on disparate performance and use a group-dependent \emph{logit adjustment} to improve worst-group error \citep{khan18, cao19, menon21logit}.
Furthermore, the idea of reusing the in-processing optimization objective to train a post-processing method is reminiscent of recent works on group distributionally robust optimization \citep{sagawa20}, which show that last layer retraining is equivalent to training the entire neural network \citep{menon21gdro, shi2023, labonte23}.
Finally, similar to our approach, unlabeled data has also been used to improve the trade-off between standard and robust error in the context of adversarial robustness \citep{carmon19, raghunathan20}.

\vspace{-0.2cm}
\section{Experimental setup}
\label{sec:evaluation_details}

We compare different fairness mitigation techniques by inspecting their fairness-error Pareto frontiers, obtained after varying the $\lambda$ hyperparameter in~\Cref{eq:optipglm,eq:optpp}. We quantify fairness violations with metrics tailored to each specific fairness definition: the FPR gap for EqOpp, the difference in SP for SP, and the mean equalized odds violation for EqOdds. For EqOdds, we also employ the same evaluation metric as \citet{mary19}, namely
HGR$_\infty(f(X), A|Y) \in[0,1]$, which takes small values when predictions $f(X)$ and sensitive attributes $A$ are conditionally independent given true labels $Y$ (see~\Cref{appendix:fairness_definitions} for details). For all metrics we report the mean and standard error computed over $10$ runs with different random seeds.

\paragraph{In-processing baselines.} To induce these notions of fairness, we use the $\ours$ framework to obtain a post-processing counterpart for several in-processing methods. We aim to cover a diverse set of regularized in-processing methods that have been demonstrated to perform well in recent thorough experimental studies \citep{cho20,jung2021,lowy2022,alghamdi22}. We identify the following as particularly competitive in-processing methods:
i)~for EqOpp, we consider MinDiff \citep{beutel2019, prost19}; ii)~for SP, the method of \citet{cho20}; and iii)~for EqOdds, \emph{Reductions} \citep{agarwal18} and the method of \citet{mary19}. We consider the KL divergence and the MSE as $\regpp$ in \Cref{eq:optpp}, for classification and regression, respectively.
\vspace{-0.1cm}

\begin{figure*}[t]
\centering
\vspace{-0.3cm}
\begin{subfigure}[t]{0.24\textwidth}
    \includegraphics[height=4cm]{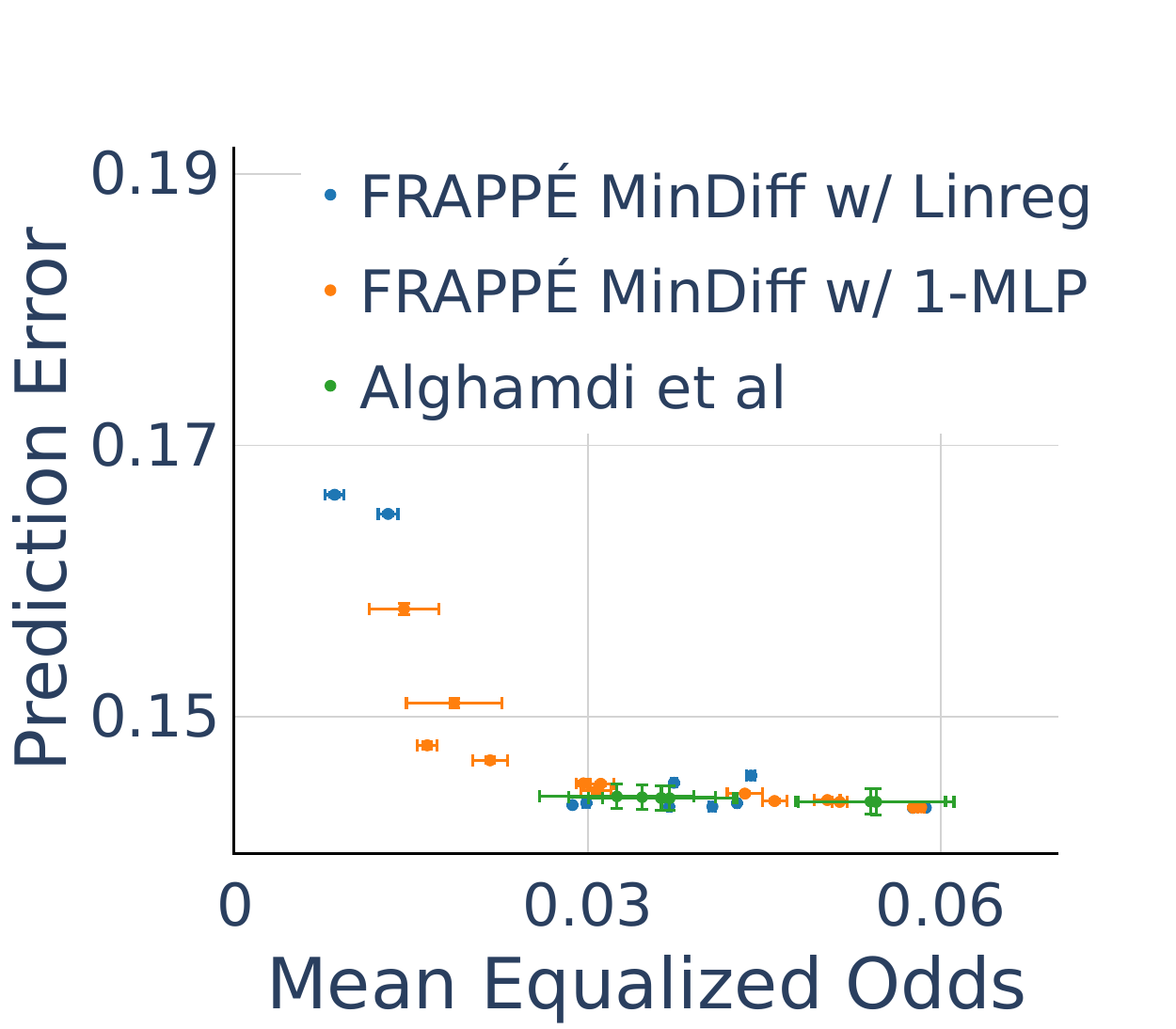}
    \caption{Adult}
\end{subfigure}
\hfill
\begin{subfigure}[t]{0.24\textwidth}
    \includegraphics[height=4cm]{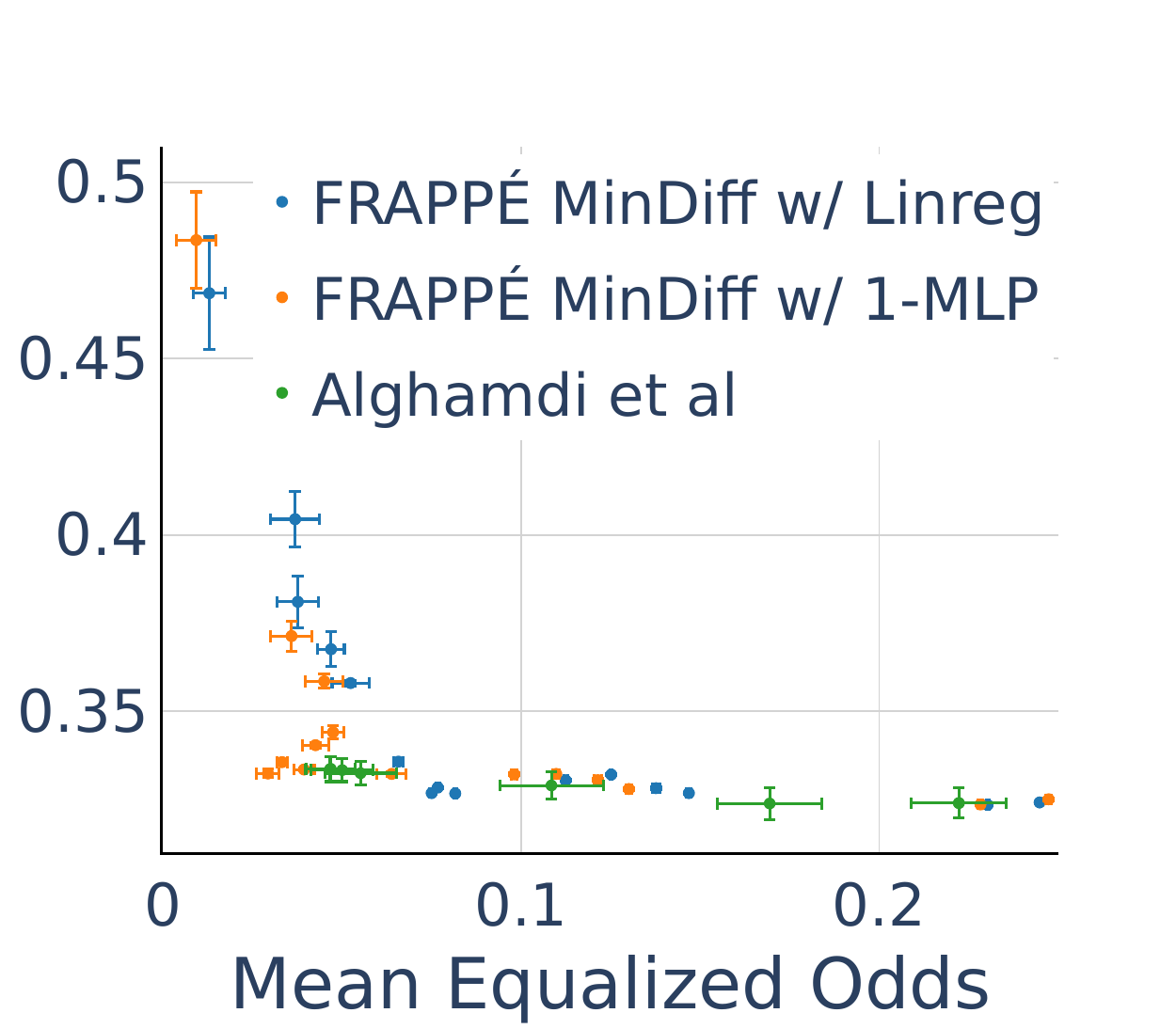}
    \caption{COMPAS}
    \label{fig:comparison_alghamdi_compas}
\end{subfigure}
\hfill
\begin{subfigure}[t]{0.24\textwidth}
    \includegraphics[height=4cm]{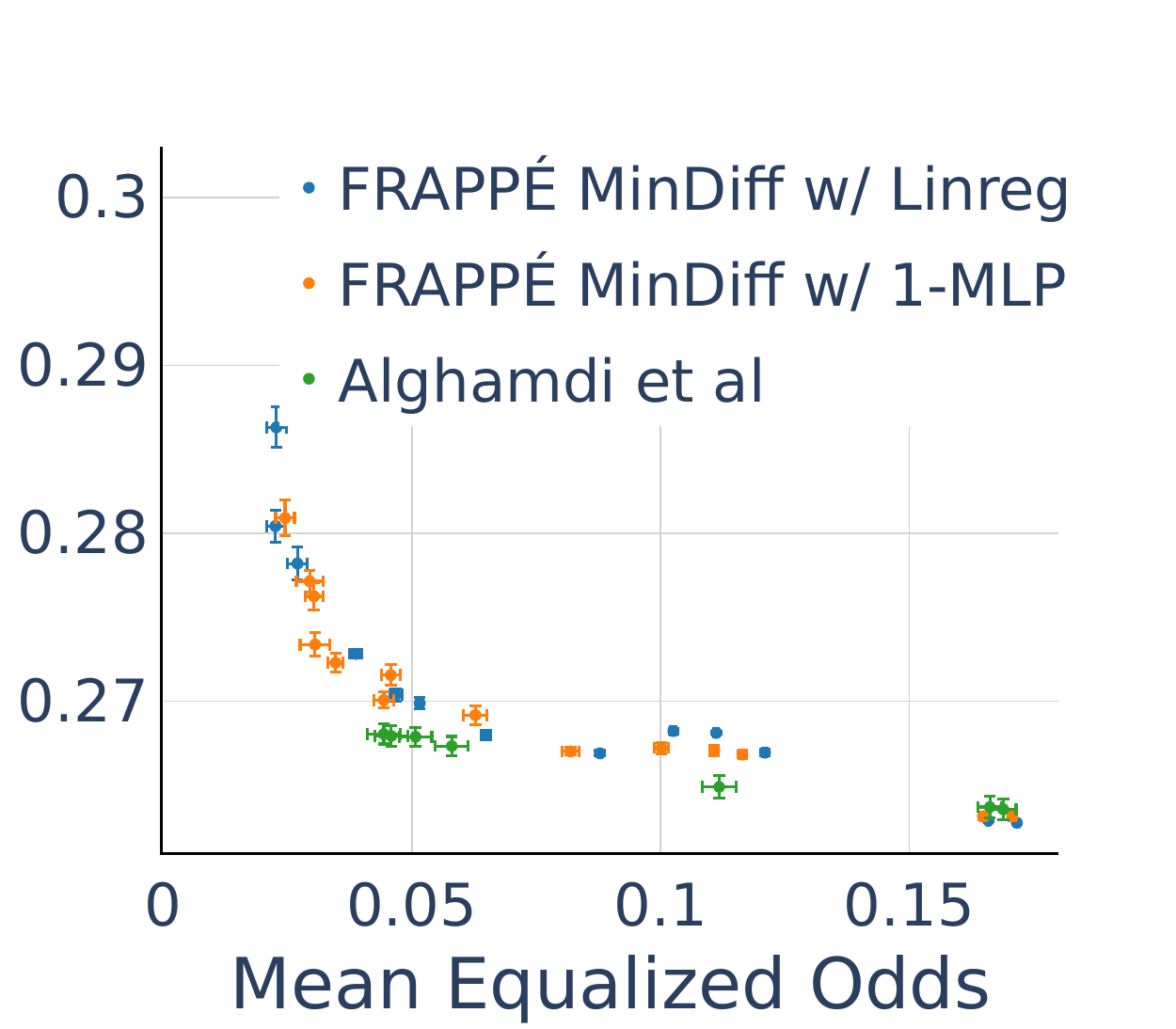}
    \caption{HSLS}
\end{subfigure}
\hfill
\begin{subfigure}[t]{0.24\textwidth}
    \includegraphics[height=4cm]{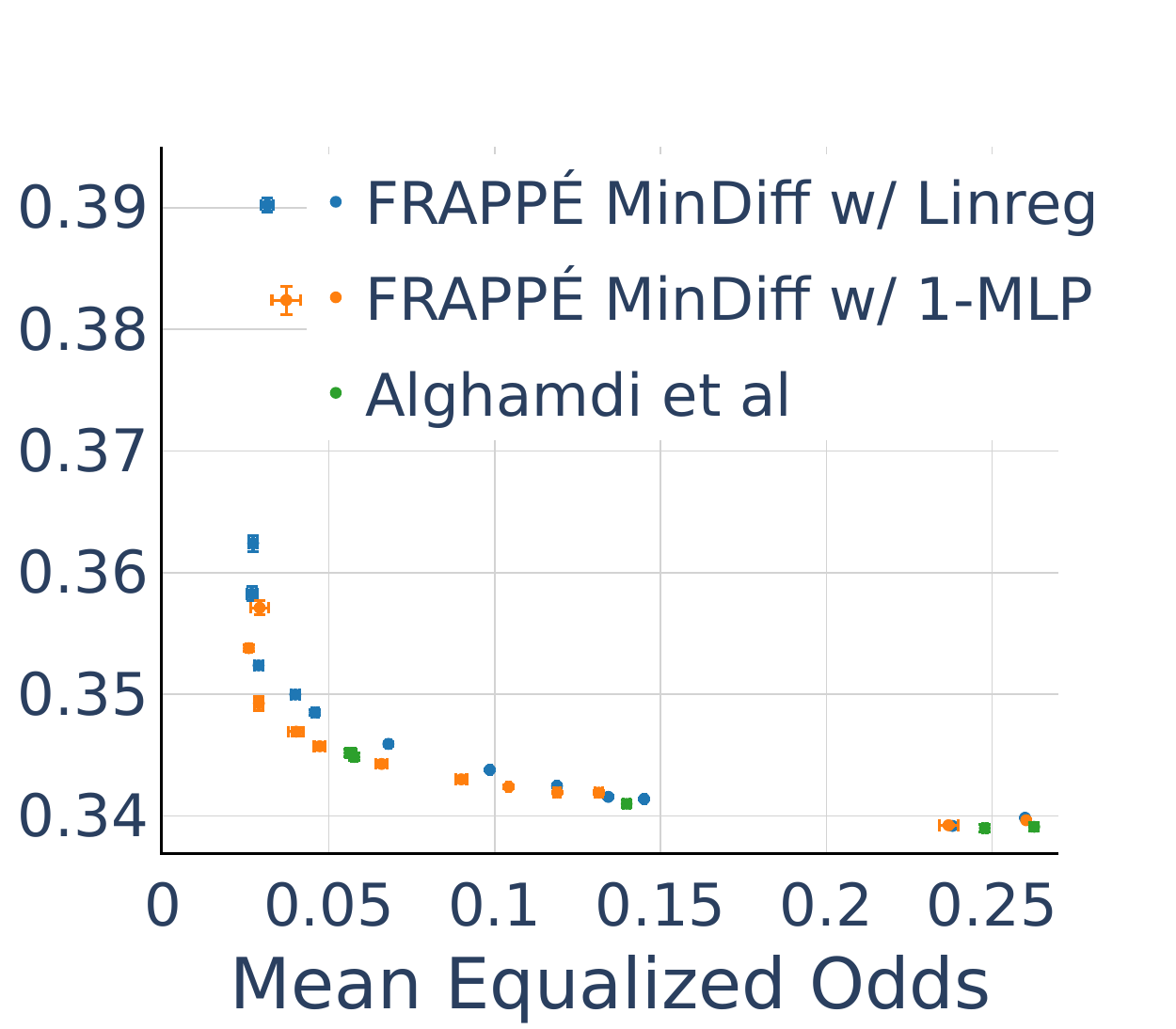}
    \caption{ENEM}
\end{subfigure}
\caption{Comparison between $\ours$ MinDiff for EqOdds and the best-performing post-processing method \citep{alghamdi22}, for random forest pre-trained models. See~\Cref{fig:app_more_baselines} for a comparison with more post-processing baselines. While in-processing MinDiff cannot be used with non-gradient based models, $\ours$ MinDiff performs on-par or better than competitive post-processing approaches such as \citet{alghamdi22}, even when the post-hoc transformation is as simple as linear regression or a 1-MLP.}
\label{fig:comparison_alghamdi}
\vspace{-0.2cm}
\end{figure*}

\paragraph{Datasets.} We conduct experiments on standard datasets for assessing fairness mitigation techniques, namely Adult \citep{becker96} and COMPAS \citep{angwin16}, as well as two recently proposed datasets: the high-school longitudinal study (HSLS) dataset \citep{jeong22}, and ENEM \citep{alghamdi22}. We also evaluate $\ours$ on data with continuous sensitive attributes (i.e.\ the Communities~\&~Crime dataset \citep{redmond09}), a setting where prior post-processing works cannot be applied. \Cref{appendix:datasets} provides more details about the datasets.

\paragraph{Prediction models.} We consider a broad set of model classes, varying from multi-layer perceptrons (MLPs) or gradient-boosted machines (GBMs), to non-gradient based models such as random forests (RFs). Both the in-processing and the pre-trained model are selected from these model classes. For $\ours$, the post-hoc module $\mult(X)$ is a much less complex model, e.g.\ linear regression.

Finally, the inherent fairness-error trade-off can pose serious challenges for hyperparameter tuning \citep{cruz2021}. We adopt the standard practice in the literature, and select essential hyperparameters such as the learning rate so as to minimize prediction error on a holdout validation set, for all the baselines in our experiments. We defer further experimental details to~\Cref{appendix:exp_details}.
\paragraph{Inducing fairness with partial group labels.}
Often, in practice, it is challenging to collect data with sensitive attributes, e.g.\ users of an online service may not be willing to disclose their gender, ethnicity etc \citep{hashimoto18, coston19, lahoti20, liu21, bardenhagen21, awasthi2021evaluating, Prost2021}. Therefore, the size of $\Dsens$ used to train fairness mitigations is significantly reduced, while $\Dpred$ may still be fairly large. In \Cref{sec:novel_failure} we present experiments on data with partial group labels, in which $\Dpred$ and $\Dpp$ from \Cref{eq:optpp,eq:optipglm} consist of all the available training data, while $\Dsens$ contains only a fraction of this data, annotated with sensitive attributes. 

\vspace{-0.1cm}
\section{Experimental results}
\label{sec:experiments}

In this section, we show empirically that $\ours$ methods satisfy the desiderata from \Cref{sec:intro}. More specifically, we show in extensive experiments that $\ours$ methods preserve the competitive fairness-error trade-offs achieved with in-processing techniques, for various notions of fairness (\underline{\textbf{$\texttt{D2}$}}), while enjoying the advantages of a post-processing method and being entirely agnostic to the prediction model class (\underline{\textbf{$\texttt{D1}$}}). Moreover, $\ours$ methods perform on par or better than existing post-processing approaches, without requiring that sensitive attributes be known at inference time (\underline{\textbf{$\texttt{D3}$}}). Finally, the $\ours$ framework helps to make post-processing fairness mitigations more broadly applicable, providing, for instance, the first post-processing method for data with \emph{continuous} sensitive attributes.

\vspace{-0.2cm}
\subsection{Can $\ours$ perform as well as in-processing?}
\label{sec:exp_ip_vs_pp}

The data processing inequality \citep{cover91} suggests that it may be challenging for post-processing approaches to match the performance of in-processing methods. In this section, we demonstrate experimentally that $\ours$ methods preserve the good fairness-error trade-offs achieved by their in-processing counterparts, for several different notions of fairness and in-processing techniques.
As suggested by the intuition in~\Cref{sec:warmup}, \Cref{fig:ip_vs_pp} confirms that for several fairness definitions it is indeed possible to match the Pareto frontiers of in-processing methods using a modular $\ours$ variant. We observe the same equivalence on all datasets we considered (\Cref{appendix:ip_vs_pp}). While the theoretical result assumes the same function class for the pre-trained model $\fctbase$ and the post-hoc module $\mult$, 
these experiments suggest that, in practice, the complexity of $\mult$ (i.e.\ linear model) can be significantly smaller than $\fctbase$ (i.e.\ 3-layer MLP). Importantly, the $\ours$ variant of \citet{mary19} constitutes the first post-processing approach that can be utilized when the sensitive attributes are continuous. We note that the large error bars in~\Cref{fig:mary} are due to the challenges of optimizing the loss of \citet{mary19}, discussed at length in \citet{lowy2022}.

\begin{figure*}[t]
\centering
\includegraphics[width=0.8\textwidth]{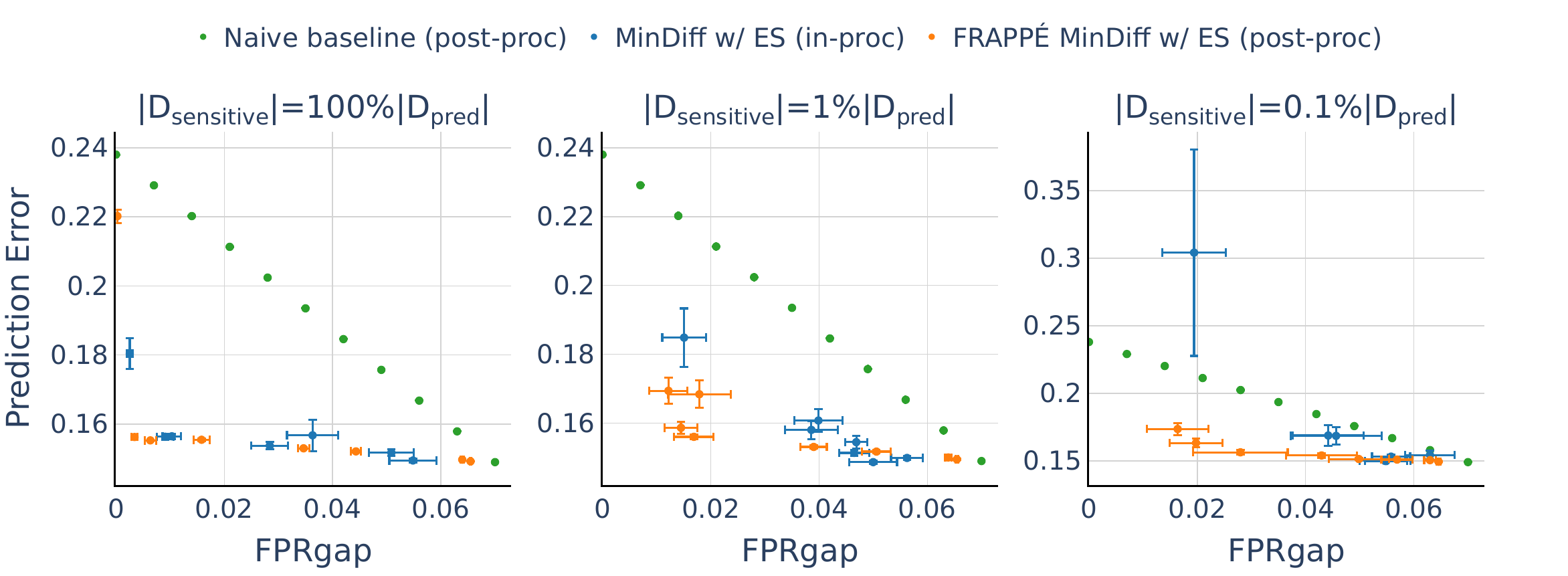}
\caption{In-processing MinDiff and $\ours$ MinDiff with partial group labels on the Adult dataset with optimal early-stopping (ES) regularization. Our post-processing algorithm continues to perform well even in the extreme case where in-processing cannot outperform the trivial baseline described in~\Cref{sec:novel_failure}.}
\label{fig:low_sample}
\vspace{-0.1cm}
\end{figure*}

\vspace{-0.2cm}
\paragraph{Computation cost.} Since $\ours$ methods only train the post-hoc module instead of the entire prediction model, they require only a fraction of the computational resources necessary for in-processing methods. Indeed, it takes $85.4$ and $113.4$ minutes to obtain the Pareto frontiers of MinDiff and \citet{cho20}, respectively, on the Adult dataset. In contrast, their $\ours$ variants only require $11.7$ and $14.8$ minutes, respectively (including the cost of training the base model), which is in line with the computation times we obtain on the same hardware with prior post-processing approaches (e.g.\ \citet{alghamdi22}). Moreover, the modular nature of $\ours$ significantly reduces the cost of changing the desired notion of fairness, set of sensitive attributes, or fairness-error trade-off. For instance, a total of two base models (one per dataset) has been used for \textit{all} the $\ours$ runs presented in \Cref{fig:ip_vs_pp}, while in-processing methods require retraining the entire prediction model from scratch repeatedly for each point shown in the figures.

\vspace{-0.1cm}
\subsection{$\ours$ compared to other post-processing}
\label{sec:pp_sota}
Post-processing techniques such as $\ours$ methods only train the post-hoc module instead of the entire prediction model, and hence, can mitigate group fairness for \textit{any} class of prediction models. 
However, unlike $\ours$, which can induce \emph{any} quantifiable notion of fairness, prior post-processing algorithms are only applicable for specific fairness definitions and problem settings. In this section we only focus on settings that are compatible with competitive prior post-processing approaches such as FairProjection \citep{alghamdi22}, and show that $\ours$ methods perform on-par or better. To illustrate the versatility of $\ours$, we consider non-gradient based models 
(e.g.\ random forests (RF)),
for which in-processing techniques such as MinDiff cannot be applied directly.
In \Cref{fig:comparison_alghamdi} we compare the Pareto frontiers obtained with $\ours$ MinDiff to the recent method of \citet{alghamdi22}, which significantly outperforms prior post-processing approaches.
$\ours$ MinDiff can sometimes surpass FairProjection considerably.
More specifically, compared to FairProjection, our approach can reduce the MEO by $53\%$, $37\%$, $32\%$ and $50\%$ on Adult, COMPAS, HSLS and ENEM, respectively, without sacrificing more than $2\%$ of the prediction error.
In \Cref{appendix:more_baselines} we compare $\ours$ MinDiff with more baselines that perform worse than FairProjection, and present results with several other pre-trained model classes, i.e.\ logistic regression and GBMs.

\vspace{-0.2cm}
\subsection{Modular methods on data with partial group labels}
\label{sec:novel_failure}

In this section we demonstrate how $\ours$ methods can alleviate the challenges faced by in-processing, when only training data with partial group labels is available. We 
argue that the good performance of $\ours$ in this regime is due to its modular design and present proof-of-concept experiments on the Adult dataset. Experiment details are deferred to \Cref{appendix:exp_details}.

\vspace{-0.2cm}
\paragraph{Limitations of in-processing with partial group labels.}
It has been observed recently that in-processing methods tend to perform poorly when only partial group labels are available for training \citep{jung2021, lokhande2022, nam2022, sohoni2022, zhang2023}.
Our experiments corroborate these findings.
In particular, we observe that minimizing the objective in~\Cref{eq:optipglm} can lead to overfitting the fairness regularizer term $\fairterm$, as shown in \Cref{fig:app_overfitting} in \Cref{appendix:overfitting}. 
Strong regularization (e.g.\ early-stopping \citep{caruana00}) can prevent overfitting, but it may induce unnecessary underfitting of the prediction loss $\predloss$ in~\Cref{eq:optipglm}, thus hurting the fairness-error trade-off. Indeed, \Cref{fig:low_sample} reveals that the performance of in-processing MinDiff deteriorates significantly on data with partial group labels.  
To show how challenging this setting is, we also present, for reference, the performance of a naive post-processing baseline that simply outputs the same prediction as the pre-trained model with probability $p$, and outputs the more favorable outcome with probability $1-p$. Varying the probability $p$ interpolates between prioritizing prediction error (for $p=1$) or fairness (for $p=0$). Even though this baseline is clearly inferior to in-processing when data is plentiful (\Cref{fig:low_sample} Left), when only partial group labels are available (\Cref{fig:low_sample} Right), in-processing struggles to surpass this naive approach. 

\vspace{-0.2cm}
\paragraph{$\ours$ methods with partial group labels.}
In-processing approaches train a single model to simultaneously minimize both the prediction loss and the fairness regularizer, and hence, finding the right balance between under- and overfitting can be challenging.
In contrast, the modular $\ours$ methods disentangle the training of the prediction model from the fairness mitigation, and hence, allow for finer-grained control of under- and overfitting.

Indeed, \Cref{fig:low_sample} shows a significant gap in performance between MinDiff and its $\ours$ variant when sensitive annotations are scarce, despite their similar performance when $\Dsens$ consists of all training data.
More specifically, both in- and post-processing achieve similar low values of the FPR gap, but only $\ours$ can maintain a good prediction error in addition to good fairness.
Futhermore, we show in~\Cref{appendix:frappe_without_es} that $\ours$ does not require early-stopping to outperform (early-stopped) MinDiff, thus eliminating
an important hyperparameter.

\begin{figure*}[t]
\centering
\includegraphics[width=0.65\textwidth]{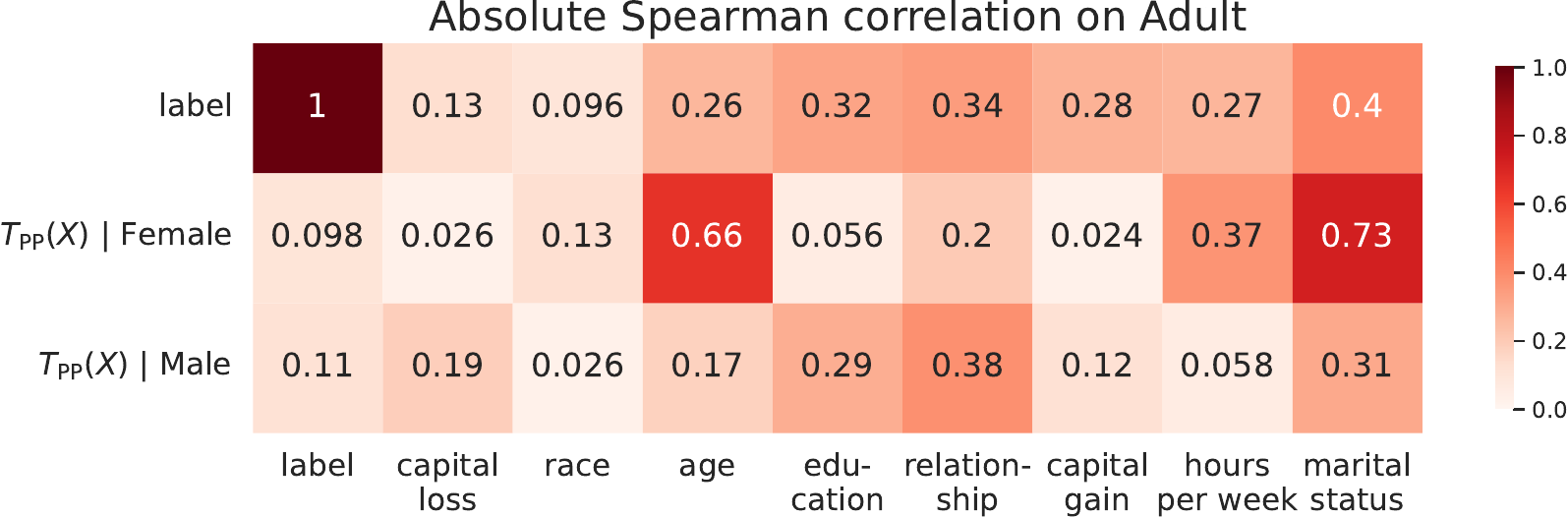}
\caption{The post-hoc transformation $\mult(X)$ is highly correlated with features that are predictive of the label (e.g.\ marital status, relationship), conditioned on gender.}
\label{fig:analysis_main}
\vspace{-0.2in}
\end{figure*}

\subsection{Analysis: What does the post-hoc module capture?}
\label{sec:analysis_of_t}

We now provide insights about the information captured in the learned post-hoc module $\mult(X)$. To this end, we use the absolute value of the Spearman's coefficient to measure the statistical correlation between the values of $\mult(X)$ and each of the input features, conditioned on the sensitive attribute (i.e.\ gender). For visualization purposes, we focus on datasets with a small number of covariates, i.e.\ Adult and COMPAS (we defer results on COMPAS to~\Cref{appendix:transformation_analysis}).

On the one hand, $\mult(X)$ is expected to be correlated with the sensitive attribute (as illustrated in \Cref{appendix:transformation_analysis}), since it is  trained specifically to improve fairness with respect to that sensitive attribute.
On the other hand, the $\ours$ post-hoc module is allowed to depend on all the covariates, not only the sensitive attribute. Therefore, alongside information about gender, $\mult(X)$ can also embed features that help to achieve a better fairness-error trade-off. Indeed, \Cref{fig:analysis_main} suggests that $\mult(X)$ tends to be conditionally correlated with features that are predictive of the class label (e.g.\ marital status, relationship). In contrast, a group-dependent transformation (like the ones considered by prior post-processing methods) would be conditionally independent of all features.
The explicit dependence of $\mult(X)$ on the entire $X$ (instead of only $A$), helps $\ours$ methods to achieve better fairness-error trade-offs than group-dependent post-processing techniques, as indicated in~\Cref{sec:pp_sota}.
\section{Related work}
\label{sec:related_work}

We now highlight the advantages and shortcomings of prior in- and post-processing methods (summarized in~\Cref{table:related_work}).

In-processing methods can easily induce virtually any quantifiable notion of group fairness into a prediction model \citep{beutel2019, prost19, mary19, cho20, lowy2022, baharlouei24}. 
Moreover, different mathematical tools can be used to enforce a fairness definition,
e.g.\ EqOdds can be induced using HSIC \citep{perez17}, Wasserstein distance \citep{jiang20}, exponential R\'enyi mutual information~\citep{mary19, lowy2022}, and R\'enyi correlation~\citep{baharlouei2019renyi}. In particular, the MinDiff method of \citet{beutel2019, prost19} uses MMD \citep{gretton12} to great effect, can be easily scaled to multiple groups and tasks \citep{atwood23}, and is the standard approach for inducing EqOpp in Tensorflow\footnote{\url{https://www.tensorflow.org/responsible_ai/model_remediation}} thanks to its good performance.

On the other hand, in-processing methods require access to the training pipeline and data as they retrain a new prediction model. They can incur large computational costs \citep{alghamdi22, cruz23}, and are often tailored to a specific model family (e.g.\ gradient-based methods \citep{prost19, lowy2022}, GBMs \citep{cruz23fairgbm}). Moreover, any change in the fairness definition, the set of sensitive attributes or the desired fairness-error trade-off require retraining the entire prediction model from scratch. These challenges are often cited as important obstacles for the broad adoption of fairness mitigations in practice \citep{veale17, holstein19}.

Post-processing is often more appealing for real-world applications, since it alleviates the aforementioned shortcomings of in-processing (i.e.\ it makes no assumptions on the nature of the pre-trained model 
and is less computationally expensive \citep{alghamdi22, cruz23}). However, current post-processing methods suffer from several limitations that hamper their applicability more broadly in practice. First, existing post-processing approaches are heavily tailored to specific problem settings (e.g.\ binary labels \citep{hardt16}) and specific fairness definitions (e.g.\ statistical parity \citep{xian23}, or fairness definitions based on conditional mean scores \citep{wei20, alghamdi22}, which do not include, for instance, fairness notions such as calibration \citep{pleiss17}).

Furthermore, to the best of our knowledge, all prior post-processing mitigations consist in a group-dependent transformation applied to a pre-trained model's outputs. This pattern has two undesired consequences. First, group-dependent transformations require that sensitive attributes are known at inference time. However, in practice, it is often infeasible to collect sensitive attributes at inference time (e.g.\ asking for the ethnicity of a person before predicting their credit score). Furthermore, attempting to infer the sensitive attribute for test-time samples is also undesirable, due to ethical concerns \citep{veale17, holstein19}, and harms caused by data biases \citep{chen19, kallus19}.
Second, prior post-processing approaches only work with discrete  (oftentimes even binary) sensitive attributes and cannot be applied to problems with continuous $A$ (e.g.\ age, income), even though certain in-processing methods are well-suited for this setting \citep{mary19}.

\vspace{-0.2cm}
\section{Conclusion}

\vspace{-0.1cm}
In this paper we propose a generic framework for training a post-processing method for group fairness using a regularized in-processing objective. 
We show theoretically and experimentally that $\ours$ methods enjoy the advantages of post-processing while not degrading the good fairness-error Pareto frontiers achieved with in-processing. 
Unlike prior approaches, our method does not require known sensitive attributes at inference time, and can induce any quantifiable notion of fairness on a broad set of problem settings, including when sensitive attributes are continuous (e.g.\ age, income).
Finally, we demonstrate how $\ours$ methods can alleviate the drop in performance that affects in-processing when only partial group labels are available.

\vspace{-0.2cm}
\section*{Broader Impact \& Limitations}

\vspace{-0.1cm}
The framework proposed in this work significantly expands the range of problem settings where post-processing mitigation techniques can be applied. In particular, $\ours$ methods can be employed to induce fairness in applications with limited computational resources or with no access to the training pipeline and training data of the prediction model. In addition, $\ours$ post-processing methods can help to overcome challenges faced by in-processing methods such as compositional fairness problems \citep{dwork19, atwood23}.

In contrast to $\ours$ methods, which are trained on triples $(\hat{Y}, A, Y)$, prior post-processing methods require access to $(X, A, Y)$ for training. However, even for prior methods, access to the features $X$ is still necessary in order to obtain the predictions $\hat{Y}$. Regarding the computation time required to train $\ours$ methods, we note that it is similar to other competitive post-processing methods (e.g.\ \citet{alghamdi22}). Both $\ours$ methods and prior post-processing techniques perform certain computations to find an appropriate post-hoc transformation. In the case of our approach, it suffices to optimize the parameters of a simple linear regression model to obtain the results shown in our experiments.

When it comes to evaluating algorithmic fairness, popular datasets such as Adult and COMPAS suffer from several limitations which have been pointed out a number of recent works \citep{bao21, ding2021, alghamdi22}. For this reason, we also report our main experimental results of \Cref{sec:exp_ip_vs_pp,sec:pp_sota} on two recently proposed datasets, HSLS \citep{jeong22} and ENEM \citep{alghamdi22}, which specifically address concerns raised about Adult and COMPAS.

Finally, our work does not attempt to provide new arguments in favor of algorithmic fairness. As frequently noted in the ML fairness literature \citep{Corbett17, kasy21, bao21, barocas23}, algorithmic interventions to induce fairness are not always aligned with the intended societal impact. Therefore, it remains the object of active research whether notions such as SP, EqOdds and EqOpp are suitable for evaluating the inequity of decision systems \citep{Buyl22, Ruggieri23, Majumder23}. Furthermore, we note that our work focuses specifically on mitigating group fairness.
Investigating whether our findings also apply to other notions of equity, such as individual fairness \citep{dwork12}, remains an important direction for future work.

\vspace{-0.2cm}
\section*{Acknowledgements}

\vspace{-0.1cm}
We are grateful to Alexander D'Amour, Ananth Balashankar, Amartya Sanyal, Flavio Calmon, and Jilin Chen for helpful discussions, and thank the anonymous reviewers for feedback on the manuscript. We also thank Flavio Calmon, Hsiang Hsu, Sina Baharlouei and Tom Stepleton for their help with reproducing some of the prior work results.

\newpage
\bibliography{postproc}

\begin{thebibliography}{99}
\providecommand{\natexlab}[1]{#1}
\providecommand{\url}[1]{\texttt{#1}}
\expandafter\ifx\csname urlstyle\endcsname\relax
  \providecommand{\doi}[1]{doi: #1}\else
  \providecommand{\doi}{doi: \begingroup \urlstyle{rm}\Url}\fi

\bibitem[Agarwal et~al.(2018)Agarwal, Beygelzimer, Dudik, Langford, and Wallach]{agarwal18}
Agarwal, A., Beygelzimer, A., Dudik, M., Langford, J., and Wallach, H.
\newblock A reductions approach to fair classification.
\newblock In \emph{Proceedings of the International Conference on Machine Learning}, 2018.

\bibitem[Alghamdi et~al.(2022)Alghamdi, Hsu, Jeong, Wang, Michalak, Asoodeh, and Calmon]{alghamdi22}
Alghamdi, W., Hsu, H., Jeong, H., Wang, H., Michalak, P.~W., Asoodeh, S., and Calmon, F.~P.
\newblock Beyond {A}dult and {COMPAS}: {F}airness in multi-class prediction via information projection.
\newblock In \emph{Advances in Neural Information Processing Systems (NeurIPS)}, 2022.

\bibitem[Angwin et~al.(2016)Angwin, Larson, Mattu, and Kirchner]{angwin16}
Angwin, J., Larson, J., Mattu, S., and Kirchner, L.
\newblock {Machine Bias}.
\newblock \emph{Propublica}, 2016.

\bibitem[Atwood et~al.(2023)Atwood, Tian, Packer, Deodhar, Chen, Beutel, Prost, and Beirami]{atwood23}
Atwood, J., Tian, T., Packer, B., Deodhar, M., Chen, J., Beutel, A., Prost, F., and Beirami, A.
\newblock Towards a scalable solution for improving multi-group fairness in compositional classification, 2023.

\bibitem[Awasthi et~al.(2021)Awasthi, Beutel, Kleindessner, Morgenstern, and Wang]{awasthi2021evaluating}
Awasthi, P., Beutel, A., Kleindessner, M., Morgenstern, J., and Wang, X.
\newblock Evaluating fairness of machine learning models under uncertain and incomplete information.
\newblock In \emph{Proceedings of the ACM Conference on Fairness, Accountability, and Transparency}, 2021.

\bibitem[Baharlouei et~al.(2019)Baharlouei, Nouiehed, Beirami, and Razaviyayn]{baharlouei2019renyi}
Baharlouei, S., Nouiehed, M., Beirami, A., and Razaviyayn, M.
\newblock R{\'e}nyi fair inference.
\newblock In \emph{International Conference on Learning Representations}, 2019.

\bibitem[Baharlouei et~al.(2024)Baharlouei, Patel, and Razaviyayn]{baharlouei24}
Baharlouei, S., Patel, S., and Razaviyayn, M.
\newblock f-{FERM}: {A} scalable framework for robust fair empirical risk minimization.
\newblock In \emph{International Conference on Learning Representations}, 2024.

\bibitem[Bao et~al.(2021)Bao, Zhou, Zottola, Brubach, Desmarais, Horowitz, Lum, and Venkatasubramanian]{bao21}
Bao, M., Zhou, A., Zottola, S.~A., Brubach, B., Desmarais, S., Horowitz, A.~S., Lum, K., and Venkatasubramanian, S.
\newblock It's {COMPAS}licated: {T}he messy relationship between {RAI} datasets and algorithmic fairness benchmarks.
\newblock In \emph{Conference on Neural Information Processing Systems Datasets and Benchmarks Track}, 2021.

\bibitem[Bardenhagen et~al.(2021)Bardenhagen, Tifrea, and Yang]{bardenhagen21}
Bardenhagen, V., Tifrea, A., and Yang, F.
\newblock Boosting worst-group accuracy without group annotations.
\newblock In \emph{NeurIPS 2021 Workshop on Distribution Shifts: Connecting Methods and Applications}, 2021.

\bibitem[Barocas et~al.(2023)Barocas, Hardt, and Narayanan]{barocas23}
Barocas, S., Hardt, M., and Narayanan, A.
\newblock \emph{Fairness and Machine Learning: {L}imitations and Opportunities}.
\newblock MIT Press, 2023.

\bibitem[Becker \& Kohavi(1996)Becker and Kohavi]{becker96}
Becker, B. and Kohavi, R.
\newblock {Adult}.
\newblock UCI Machine Learning Repository, 1996.

\bibitem[Beutel et~al.(2019)Beutel, Chen, Doshi, Qian, Woodruff, Luu, Kreitmann, Bischof, and Chi]{beutel2019}
Beutel, A., Chen, J., Doshi, T., Qian, H., Woodruff, A., Luu, C., Kreitmann, P., Bischof, J., and Chi, E.~H.
\newblock Putting fairness principles into practice: {C}hallenges, metrics, and improvements.
\newblock In \emph{Proceedings of the 2019 AAAI/ACM Conference on AI, Ethics, and Society}, 2019.

\bibitem[Bickel et~al.(1975)Bickel, Hammel, and O'Connell]{bickel75}
Bickel, P.~J., Hammel, E.~A., and O'Connell, J.~W.
\newblock Sex bias in graduate admissions: {D}ata from {B}erkeley.
\newblock \emph{Science}, 1975.

\bibitem[Bregman(1967)]{bregman67}
Bregman, L.
\newblock The relaxation method of finding the common point of convex sets and its application to the solution of problems in convex programming.
\newblock \emph{USSR Computational Mathematics and Mathematical Physics}, 1967.

\bibitem[Buyl \& De~Bie(2022)Buyl and De~Bie]{Buyl22}
Buyl, M. and De~Bie, T.
\newblock Inherent limitations of {AI} fairness, 2022.

\bibitem[Calders et~al.(2009)Calders, Kamiran, and Pechenizkiy]{calders09}
Calders, T., Kamiran, F., and Pechenizkiy, M.
\newblock Building classifiers with independency constraints.
\newblock In \emph{IEEE International Conference on Data Mining Workshops}, 2009.

\bibitem[Cao et~al.(2019)Cao, Wei, Gaidon, Arechiga, and Ma]{cao19}
Cao, K., Wei, C., Gaidon, A., Arechiga, N., and Ma, T.
\newblock Learning imbalanced datasets with label-distribution-aware margin loss.
\newblock In \emph{Advances in Neural Information Processing Systems}, 2019.

\bibitem[Carmon et~al.(2019)Carmon, Raghunathan, Schmidt, Duchi, and Liang]{carmon19}
Carmon, Y., Raghunathan, A., Schmidt, L., Duchi, J.~C., and Liang, P.~S.
\newblock Unlabeled data improves adversarial robustness.
\newblock In \emph{Advances in Neural Information Processing Systems}, 2019.

\bibitem[Caruana et~al.(2000)Caruana, Lawrence, and Giles]{caruana00}
Caruana, R., Lawrence, S., and Giles, C.
\newblock Overfitting in neural nets: {B}ackpropagation, conjugate gradient, and early stopping.
\newblock In \emph{Advances in Neural Information Processing Systems}, 2000.

\bibitem[Caton \& Haas(2023)Caton and Haas]{caton23}
Caton, S. and Haas, C.
\newblock Fairness in machine learning: {A} survey.
\newblock \emph{ACM Computing Surveys}, 2023.

\bibitem[Celis et~al.(2019)Celis, Huang, Keswani, and Vishnoi]{celis19}
Celis, L.~E., Huang, L., Keswani, V., and Vishnoi, N.~K.
\newblock Classification with fairness constraints: {A} meta-algorithm with provable guarantees.
\newblock In \emph{Proceedings of the Conference on Fairness, Accountability, and Transparency}, 2019.

\bibitem[Chen et~al.(2018)Chen, Johansson, and Sontag]{chen18}
Chen, I., Johansson, F.~D., and Sontag, D.
\newblock Why is my classifier discriminatory?
\newblock In \emph{Advances in Neural Information Processing Systems}, 2018.

\bibitem[Chen et~al.(2019)Chen, Kallus, Mao, Svacha, and Udell]{chen19}
Chen, J., Kallus, N., Mao, X., Svacha, G., and Udell, M.
\newblock Fairness under unawareness: {A}ssessing disparity when protected class is unobserved.
\newblock In \emph{Proceedings of the Conference on Fairness, Accountability, and Transparency}, 2019.

\bibitem[Chierichetti et~al.(2019)Chierichetti, Kumar, Lattanzi, and Vassilvtiskii]{chierichetti19}
Chierichetti, F., Kumar, R., Lattanzi, S., and Vassilvtiskii, S.
\newblock Matroids, matchings, and fairness.
\newblock In \emph{Proceedings of the International Conference on Artificial Intelligence and Statistics}, 2019.

\bibitem[Cho et~al.(2020)Cho, Hwang, and Suh]{cho20}
Cho, J., Hwang, G., and Suh, C.
\newblock A fair classifier using kernel density estimation.
\newblock In \emph{Advances in Neural Information Processing Systems}, 2020.

\bibitem[Chzhen et~al.(2020)Chzhen, Denis, Hebiri, Oneto, and Pontil]{chzhen20}
Chzhen, E., Denis, C., Hebiri, M., Oneto, L., and Pontil, M.
\newblock Leveraging labeled and unlabeled data for consistent fair binary classification.
\newblock In \emph{Advances in Neural Information Processing Systems}, 2020.

\bibitem[Corbett-Davies et~al.(2017)Corbett-Davies, Pierson, Feller, Goel, and Huq]{Corbett17}
Corbett-Davies, S., Pierson, E., Feller, A., Goel, S., and Huq, A.
\newblock Algorithmic decision making and the cost of fairness.
\newblock In \emph{Proceedings of the ACM SIGKDD International Conference on Knowledge Discovery and Data Mining}, 2017.

\bibitem[Cortes et~al.(2012)Cortes, Mohri, and Rostamizadeh]{cortes12}
Cortes, C., Mohri, M., and Rostamizadeh, A.
\newblock Algorithms for learning kernels based on centered alignment.
\newblock \emph{Journal of Machine Learning Research}, 2012.

\bibitem[Coston et~al.(2019)Coston, Ramamurthy, Wei, Varshney, Speakman, Mustahsan, and Chakraborty]{coston19}
Coston, A., Ramamurthy, K.~N., Wei, D., Varshney, K.~R., Speakman, S., Mustahsan, Z., and Chakraborty, S.
\newblock Fair transfer learning with missing protected attributes.
\newblock In \emph{Proceedings of the AAAI/ACM Conference on AI, Ethics, and Society}, 2019.

\bibitem[Cotter et~al.(2019)Cotter, Jiang, Wang, Narayan, You, Sridharan, and Gupta]{cotter19}
Cotter, A., Jiang, H., Wang, S., Narayan, T., You, S., Sridharan, K., and Gupta, M.~R.
\newblock Optimization with non-differentiable constraints with applications to fairness, recall, churn, and other goals.
\newblock \emph{Journal of Machine Learning Research}, 2019.

\bibitem[Cover \& Thomas(1991)Cover and Thomas]{cover91}
Cover, T.~M. and Thomas, J.~A.
\newblock \emph{Elements of information theory}.
\newblock Wiley-Interscience, 1991.

\bibitem[Cristianini et~al.(2001)Cristianini, Shawe-Taylor, Elisseeff, and Kandola]{cristianini02}
Cristianini, N., Shawe-Taylor, J., Elisseeff, A., and Kandola, J.
\newblock On kernel-target alignment.
\newblock In \emph{Advances in Neural Information Processing Systems}, 2001.

\bibitem[Cruz et~al.(2023)Cruz, Bel{\'e}m, Bravo, Saleiro, and Bizarro]{cruz23fairgbm}
Cruz, A., Bel{\'e}m, C.~G., Bravo, J., Saleiro, P., and Bizarro, P.
\newblock Fair{GBM}: {G}radient boosting with fairness constraints.
\newblock In \emph{The Eleventh International Conference on Learning Representations}, 2023.

\bibitem[Cruz \& Hardt(2023)Cruz and Hardt]{cruz23}
Cruz, A.~F. and Hardt, M.
\newblock Unprocessing seven years of algorithmic fairness, 2023.

\bibitem[Cruz et~al.(2021)Cruz, Saleiro, Bel{\'{e}}m, Soares, and Bizarro]{cruz2021}
Cruz, A.~F., Saleiro, P., Bel{\'{e}}m, C., Soares, C., and Bizarro, P.
\newblock Promoting fairness through hyperparameter optimization.
\newblock In \emph{{IEEE} International Conference on Data Mining ({ICDM})}, 2021.

\bibitem[Dastin(2018)]{dastin18}
Dastin, J.
\newblock Amazon scraps secret {AI} recruiting tool that showed bias against women, 2018.
\newblock URL \url{https://www.reuters.com/article/idUSKCN1MK0AG/}.

\bibitem[Dheeru \& Karra~Taniskidou(2017)Dheeru and Karra~Taniskidou]{dua17}
Dheeru, D. and Karra~Taniskidou, E.
\newblock {UCI} machine learning repository, 2017.
\newblock URL \url{http://archive.ics.uci.edu/ml}.

\bibitem[Ding et~al.(2021)Ding, Hardt, Miller, and Schmidt]{ding2021}
Ding, F., Hardt, M., Miller, J., and Schmidt, L.
\newblock Retiring {A}dult: {N}ew datasets for fair machine learning.
\newblock In \emph{Advances in Neural Information Processing Systems}, 2021.

\bibitem[Donini et~al.(2018)Donini, Oneto, Ben-David, Shawe-Taylor, and Pontil]{donini18}
Donini, M., Oneto, L., Ben-David, S., Shawe-Taylor, J., and Pontil, M.
\newblock Empirical risk minimization under fairness constraints.
\newblock In \emph{Advances in Neural Information Processing Systems}, 2018.

\bibitem[Dwork \& Ilvento(2018)Dwork and Ilvento]{dwork19}
Dwork, C. and Ilvento, C.
\newblock Fairness under composition.
\newblock In \emph{Innovations in Theoretical Computer Science Conference}, 2018.

\bibitem[Dwork et~al.(2012)Dwork, Hardt, Pitassi, Reingold, and Zemel]{dwork12}
Dwork, C., Hardt, M., Pitassi, T., Reingold, O., and Zemel, R.
\newblock Fairness through awareness.
\newblock In \emph{Proceedings of the Innovations in Theoretical Computer Science Conference}, 2012.

\bibitem[Gebelein(1941)]{gebelein}
Gebelein, H.
\newblock Das statistische problem der korrelation als variations- und eigenwertproblem und sein zusammenhang mit der ausgleichsrechnung.
\newblock \emph{ZAMM ‐ Zeitschrift für Angewandte Mathematik und Mechanik}, 1941.

\bibitem[Gretton et~al.(2005)Gretton, Bousquet, Smola, and Sch{\"o}lkopf]{gretton05}
Gretton, A., Bousquet, O., Smola, A., and Sch{\"o}lkopf, B.
\newblock Measuring statistical dependence with {H}ilbert-{S}chmidt norms.
\newblock In \emph{Conference on Algorithmic Learning Theory}, 2005.

\bibitem[Gretton et~al.(2012)Gretton, Borgwardt, Rasch, Sch{{\"o}}lkopf, and Smola]{gretton12}
Gretton, A., Borgwardt, K.~M., Rasch, M.~J., Sch{{\"o}}lkopf, B., and Smola, A.
\newblock A kernel two-sample test.
\newblock \emph{Journal of Machine Learning Research}, 2012.

\bibitem[Hardt et~al.(2016)Hardt, Price, Price, and Srebro]{hardt16}
Hardt, M., Price, E., Price, E., and Srebro, N.
\newblock Equality of opportunity in supervised learning.
\newblock In \emph{Advances in Neural Information Processing Systems}, 2016.

\bibitem[Hashimoto et~al.(2018)Hashimoto, Srivastava, Namkoong, and Liang]{hashimoto18}
Hashimoto, T., Srivastava, M., Namkoong, H., and Liang, P.
\newblock Fairness without demographics in repeated loss minimization.
\newblock In \emph{Proceedings of the International Conference on Machine Learning}, 2018.

\bibitem[He et~al.(2021)He, Zhao, and Chu]{he21}
He, X., Zhao, K., and Chu, X.
\newblock {AutoML}: {A} survey of the state-of-the-art.
\newblock \emph{Journal of Knowledge-Based Systems}, 2021.

\bibitem[Holstein et~al.(2019)Holstein, Wortman~Vaughan, Daum\'{e}, Dudik, and Wallach]{holstein19}
Holstein, K., Wortman~Vaughan, J., Daum\'{e}, H., Dudik, M., and Wallach, H.
\newblock Improving fairness in machine learning systems: {W}hat do industry practitioners need?
\newblock In \emph{Proceedings of the Conference on Human Factors in Computing Systems}, 2019.

\bibitem[Hort et~al.(2023)Hort, Chen, Zhang, Harman, and Sarro]{hort23}
Hort, M., Chen, Z., Zhang, J.~M., Harman, M., and Sarro, F.
\newblock Bias mitigation for machine learning classifiers: {A} comprehensive survey.
\newblock \emph{ACM Journal on Responsible Computing}, 2023.

\bibitem[Jeong et~al.(2022)Jeong, Wang, and Calmon]{jeong22}
Jeong, H., Wang, H., and Calmon, F.~P.
\newblock Fairness without imputation: {A} decision tree approach for fair prediction with missing values.
\newblock \emph{Proceedings of the AAAI Conference on Artificial Intelligence}, 2022.

\bibitem[Jiang et~al.(2020)Jiang, Pacchiano, Stepleton, Jiang, and Chiappa]{jiang20}
Jiang, R., Pacchiano, A., Stepleton, T., Jiang, H., and Chiappa, S.
\newblock Wasserstein fair classification.
\newblock In \emph{Proceedings of The Uncertainty in Artificial Intelligence Conference}, 2020.

\bibitem[Jung et~al.(2022)Jung, Chun, and Moon]{jung2021}
Jung, S., Chun, S., and Moon, T.
\newblock Learning fair classifiers with partially annotated group labels.
\newblock In \emph{Conference on Computer Vision and Pattern Recognition}, 2022.

\bibitem[Kallus et~al.(2020)Kallus, Mao, and Zhou]{kallus19}
Kallus, N., Mao, X., and Zhou, A.
\newblock Assessing algorithmic fairness with unobserved protected class using data combination.
\newblock \emph{Proceedings of the Conference on Fairness, Accountability, and Transparency}, 2020.

\bibitem[Kamiran et~al.(2018)Kamiran, Mansha, Karim, and Zhang]{kamiran18}
Kamiran, F., Mansha, S., Karim, A., and Zhang, X.
\newblock Exploiting reject option in classification for social discrimination control.
\newblock \emph{Journal of Information Sciences}, 2018.

\bibitem[Kasy \& Abebe(2021)Kasy and Abebe]{kasy21}
Kasy, M. and Abebe, R.
\newblock Fairness, equality, and power in algorithmic decision-making.
\newblock In \emph{Proceedings of the ACM Conference on Fairness, Accountability, and Transparency}, 2021.

\bibitem[Khan et~al.(2018)Khan, Hayat, Bennamoun, Sohel, and Togneri]{khan18}
Khan, S.~H., Hayat, M., Bennamoun, M., Sohel, F.~A., and Togneri, R.
\newblock Cost-sensitive learning of deep feature representations from imbalanced data.
\newblock \emph{IEEE Transactions on Neural Networks and Learning Systems}, 2018.

\bibitem[Kim et~al.(2020)Kim, Chen, and Talwalkar]{kim20}
Kim, J.~S., Chen, J., and Talwalkar, A.
\newblock {FACT: A} diagnostic for group fairness trade-offs.
\newblock In \emph{Proceedings of the International Conference on Machine Learning}, 2020.

\bibitem[Kumar et~al.(2019)Kumar, Liang, and Ma]{kumar19}
Kumar, A., Liang, P.~S., and Ma, T.
\newblock Verified uncertainty calibration.
\newblock In \emph{Advances in Neural Information Processing Systems}, 2019.

\bibitem[LaBonte et~al.(2023)LaBonte, Muthukumar, and Kumar]{labonte23}
LaBonte, T., Muthukumar, V., and Kumar, A.
\newblock Towards last-layer retraining for group robustness with fewer annotations.
\newblock In \emph{Advances in Neural Information Processing Systems}, 2023.

\bibitem[Lahoti et~al.(2019)Lahoti, Gummadi, and Weikum]{lahoti19}
Lahoti, P., Gummadi, K.~P., and Weikum, G.
\newblock Operationalizing individual fairness with pairwise fair representations.
\newblock \emph{Proceedings of the VLDB Endowment}, 2019.

\bibitem[Lahoti et~al.(2020)Lahoti, Beutel, Chen, Lee, Prost, Thain, Wang, and Chi]{lahoti20}
Lahoti, P., Beutel, A., Chen, J., Lee, K., Prost, F., Thain, N., Wang, X., and Chi, E.
\newblock Fairness without demographics through adversarially reweighted learning.
\newblock In \emph{Advances in Neural Information Processing Systems}, 2020.

\bibitem[Liu et~al.(2021)Liu, Haghgoo, Chen, Raghunathan, Koh, Sagawa, Liang, and Finn]{liu21}
Liu, E.~Z., Haghgoo, B., Chen, A.~S., Raghunathan, A., Koh, P.~W., Sagawa, S., Liang, P., and Finn, C.
\newblock Just {T}rain {T}wice: {I}mproving group robustness without training group information.
\newblock In \emph{Proceedings of the 38th International Conference on Machine Learning}, 2021.

\bibitem[Lokhande et~al.(2022)Lokhande, Sohn, Yoon, Udell, Lee, and Pfister]{lokhande2022}
Lokhande, V.~S., Sohn, K., Yoon, J., Udell, M., Lee, C.-Y., and Pfister, T.
\newblock Towards group robustness in the presence of partial group labels.
\newblock In \emph{ICML 2022: Workshop on Spurious Correlations, Invariance and Stability}, 2022.

\bibitem[Lowy et~al.(2022)Lowy, Baharlouei, Pavan, Razaviyayn, and Beirami]{lowy2022}
Lowy, A., Baharlouei, S., Pavan, R., Razaviyayn, M., and Beirami, A.
\newblock A stochastic optimization framework for fair risk minimization.
\newblock \emph{Transactions on Machine Learning Research}, 2022.

\bibitem[Madras et~al.(2018)Madras, Creager, Pitassi, and Zemel]{madras18}
Madras, D., Creager, E., Pitassi, T., and Zemel, R.
\newblock Learning adversarially fair and transferable representations, 2018.

\bibitem[Majumder et~al.(2023)Majumder, Chakraborty, Bai, Stolee, and Menzies]{Majumder23}
Majumder, S., Chakraborty, J., Bai, G.~R., Stolee, K.~T., and Menzies, T.
\newblock Fair enough: {S}earching for sufficient measures of fairness.
\newblock \emph{ACM Trans. Softw. Eng. Methodol.}, 2023.

\bibitem[Mary et~al.(2019)Mary, Calauz{\`e}nes, and Karoui]{mary19}
Mary, J., Calauz{\`e}nes, C., and Karoui, N.~E.
\newblock Fairness-aware learning for continuous attributes and treatments.
\newblock In \emph{Proceedings of the International Conference on Machine Learning}, 2019.

\bibitem[Mehrabi et~al.(2021)Mehrabi, Morstatter, Saxena, Lerman, and Galstyan]{mehrabi21}
Mehrabi, N., Morstatter, F., Saxena, N., Lerman, K., and Galstyan, A.
\newblock A survey on bias and fairness in machine learning.
\newblock \emph{ACM Computing Surveys}, 2021.

\bibitem[Menon \& Williamson(2018)Menon and Williamson]{menon18}
Menon, A.~K. and Williamson, R.~C.
\newblock The cost of fairness in binary classification.
\newblock In \emph{Proceedings of the Conference on Fairness, Accountability and Transparency}, 2018.

\bibitem[Menon et~al.(2021{\natexlab{a}})Menon, Jayasumana, Rawat, Jain, Veit, and Kumar]{menon21logit}
Menon, A.~K., Jayasumana, S., Rawat, A.~S., Jain, H., Veit, A., and Kumar, S.
\newblock Long-tail learning via logit adjustment.
\newblock In \emph{International Conference on Learning Representations}, 2021{\natexlab{a}}.

\bibitem[Menon et~al.(2021{\natexlab{b}})Menon, Rawat, and Kumar]{menon21gdro}
Menon, A.~K., Rawat, A.~S., and Kumar, S.
\newblock Overparameterisation and worst-case generalisation: friend or foe?
\newblock In \emph{International Conference on Learning Representations}, 2021{\natexlab{b}}.

\bibitem[Nam et~al.(2022)Nam, Kim, Lee, and Shin]{nam2022}
Nam, J., Kim, J., Lee, J., and Shin, J.
\newblock Spread spurious attribute: Improving worst-group accuracy with spurious attribute estimation.
\newblock In \emph{International Conference on Learning Representations}, 2022.

\bibitem[Nandy et~al.(2022)Nandy, DiCiccio, Venugopalan, Logan, Basu, and El~Karoui]{nandy22}
Nandy, P., DiCiccio, C., Venugopalan, D., Logan, H., Basu, K., and El~Karoui, N.
\newblock Achieving fairness via post-processing in web-scale recommender systems.
\newblock In \emph{Proceedings of the 2022 ACM Conference on Fairness, Accountability, and Transparency}, 2022.

\bibitem[Narayanan(2018)]{narayanan18}
Narayanan, A.
\newblock 21 fairness definitions and their politics.
\newblock \emph{Tutorial at the Conference on Fairness, Accountability, and Transparency}, 2018.

\bibitem[Nelder \& Wedderburn(1972)Nelder and Wedderburn]{nelder72}
Nelder, J.~A. and Wedderburn, R. W.~M.
\newblock Generalized linear models.
\newblock \emph{Journal of the Royal Statistical Society: Series A (General)}, 1972.

\bibitem[P{\'e}rez-Suay et~al.(2017)P{\'e}rez-Suay, Laparra, Mateo-Garc{\'i}a, Mu{\~{n}}oz-Mar{\'i}, G{\'o}mez-Chova, and Camps-Valls]{perez17}
P{\'e}rez-Suay, A., Laparra, V., Mateo-Garc{\'i}a, G., Mu{\~{n}}oz-Mar{\'i}, J., G{\'o}mez-Chova, L., and Camps-Valls, G.
\newblock Fair kernel learning.
\newblock In \emph{Conference on Machine Learning and Knowledge Discovery in Databases}, 2017.

\bibitem[Pleiss et~al.(2017)Pleiss, Raghavan, Wu, Kleinberg, and Weinberger]{pleiss17}
Pleiss, G., Raghavan, M., Wu, F., Kleinberg, J., and Weinberger, K.~Q.
\newblock On fairness and calibration.
\newblock In \emph{Advances in Neural Information Processing Systems}, 2017.

\bibitem[Prost et~al.(2019)Prost, Qian, Chen, Chi, Chen, and Beutel]{prost19}
Prost, F., Qian, H., Chen, Q., Chi, E.~H., Chen, J., and Beutel, A.
\newblock Toward a better trade-off between performance and fairness with kernel-based distribution matching, 2019.

\bibitem[Prost et~al.(2021)Prost, Awasthi, Blumm, Kumthekar, Potter, Wei, Wang, Chi, Chen, and Beutel]{Prost2021}
Prost, F., Awasthi, P., Blumm, N., Kumthekar, A., Potter, T., Wei, L., Wang, X., Chi, E.~H., Chen, J., and Beutel, A.
\newblock Measuring model fairness under noisy covariates: {A} theoretical perspective.
\newblock In \emph{Proceedings of the AAAI/ACM Conference on AI, Ethics, and Society}, 2021.

\bibitem[Raghunathan et~al.(2020)Raghunathan, Xie, Yang, Duchi, and Liang]{raghunathan20}
Raghunathan, A., Xie, S.~M., Yang, F., Duchi, J.~C., and Liang, P.
\newblock Understanding and mitigating the tradeoff between robustness and accuracy.
\newblock In \emph{Proceedings of the International Conference on Machine Learning}, 2020.

\bibitem[Redmond(2009)]{redmond09}
Redmond, M.
\newblock {Communities and Crime}.
\newblock UCI Machine Learning Repository, 2009.

\bibitem[Ruggieri et~al.(2023)Ruggieri, Alvarez, Pugnana, State, and Turini]{Ruggieri23}
Ruggieri, S., Alvarez, J.~M., Pugnana, A., State, L., and Turini, F.
\newblock Can we trust {F}air-{AI}?
\newblock \emph{Proceedings of the AAAI Conference on Artificial Intelligence}, 2023.

\bibitem[Sagawa et~al.(2020)Sagawa, Koh, Hashimoto, and Liang]{sagawa20}
Sagawa, S., Koh, P.~W., Hashimoto, T.~B., and Liang, P.
\newblock Distributionally robust neural networks.
\newblock In \emph{International Conference on Learning Representations}, 2020.

\bibitem[Sanyal et~al.(2022)Sanyal, Hu, and Yang]{sanyal2022}
Sanyal, A., Hu, Y., and Yang, F.
\newblock How unfair is private learning?
\newblock In \emph{Conference on Uncertainty in Artificial Intelligence}, 2022.

\bibitem[Schapire(1990)]{schapire90}
Schapire, R.~E.
\newblock The strength of weak learnability.
\newblock \emph{Machine Learning}, 1990.

\bibitem[Shi et~al.(2023)Shi, Daunhawer, Vogt, Torr, and Sanyal]{shi2023}
Shi, Y., Daunhawer, I., Vogt, J.~E., Torr, P., and Sanyal, A.
\newblock How robust is unsupervised representation learning to distribution shift?
\newblock In \emph{International Conference on Learning Representations}, 2023.

\bibitem[Sohoni et~al.(2022)Sohoni, Sanjabi, Ballas, Grover, Nie, Firooz, and Re]{sohoni2022}
Sohoni, N.~S., Sanjabi, M., Ballas, N., Grover, A., Nie, S., Firooz, H., and Re, C.
\newblock {BARACK}: {P}artially supervised group robustness with guarantees.
\newblock In \emph{ICML 2022: Workshop on Spurious Correlations, Invariance and Stability}, 2022.

\bibitem[Vapnik(1991)]{vapnik91}
Vapnik, V.
\newblock Principles of risk minimization for learning theory.
\newblock In \emph{Advances in Neural Information Processing Systems}, 1991.

\bibitem[Veale \& Binns(2017)Veale and Binns]{veale17}
Veale, M. and Binns, R.
\newblock Fairer machine learning in the real world: {M}itigating discrimination without collecting sensitive data.
\newblock \emph{Big Data \& Society}, 2017.

\bibitem[Veldanda et~al.(2023)Veldanda, Brugere, Chen, Dutta, Mishler, and Garg]{veldanda23}
Veldanda, A.~K., Brugere, I., Chen, J., Dutta, S., Mishler, A., and Garg, S.
\newblock Fairness via in-processing in the over-parameterized regime: {A} cautionary tale with {MinDiff} loss.
\newblock \emph{Transactions on Machine Learning Research}, 2023.

\bibitem[Wainwright \& Jordan(2008)Wainwright and Jordan]{wainwright08}
Wainwright, M. and Jordan, M.
\newblock {Graphical Models, Exponential Families, and Variational Inference}.
\newblock \emph{Foundations and Trends in Machine Learning}, 2008.

\bibitem[Wei et~al.(2020)Wei, Ramamurthy, and Calmon]{wei20}
Wei, D., Ramamurthy, K.~N., and Calmon, F.
\newblock Optimized score transformation for fair classification.
\newblock In \emph{Proceedings of the International Conference on Artificial Intelligence and Statistics}, 2020.

\bibitem[Xian et~al.(2023)Xian, Yin, and Zhao]{xian23}
Xian, R., Yin, L., and Zhao, H.
\newblock Fair and {{Optimal Classification}} via {{Post-Processing}}.
\newblock In \emph{Proceedings of the {{International Conference}} on {{Machine Learning}}}, 2023.

\bibitem[Zehlike et~al.(2021)Zehlike, Yang, and Stoyanovich]{zehlike21}
Zehlike, M., Yang, K., and Stoyanovich, J.
\newblock Fairness in ranking: {A} survey, 2021.

\bibitem[Zemel et~al.(2013)Zemel, Wu, Swersky, Pitassi, and Dwork]{zemel13}
Zemel, R., Wu, Y., Swersky, K., Pitassi, T., and Dwork, C.
\newblock Learning fair representations.
\newblock In \emph{Proceedings of the International Conference on Machine Learning}, 2013.

\bibitem[Zhang et~al.(2018)Zhang, Lemoine, and Mitchell]{zhang2018}
Zhang, B.~H., Lemoine, B., and Mitchell, M.
\newblock Mitigating unwanted biases with adversarial learning.
\newblock In \emph{Proceedings of the AAAI/ACM Conference on AI, Ethics, and Society}, 2018.

\bibitem[Zhang et~al.(2023)Zhang, Kuang, Chen, Liu, Wu, and Xiao]{zhang2023}
Zhang, F., Kuang, K., Chen, L., Liu, Y., Wu, C., and Xiao, J.
\newblock Fairness-aware contrastive learning with partially annotated sensitive attributes.
\newblock In \emph{International Conference on Learning Representations}, 2023.

\bibitem[Zhang et~al.(2022)Zhang, Zhang, Zhang, Fan, Li, Liu, and Chang]{zhang2022}
Zhang, G., Zhang, Y., Zhang, Y., Fan, W., Li, Q., Liu, S., and Chang, S.
\newblock Fairness reprogramming.
\newblock In \emph{Advances in Neural Information Processing Systems}, 2022.

\bibitem[Zhao \& Gordon(2019)Zhao and Gordon]{zhao19}
Zhao, H. and Gordon, G.
\newblock Inherent tradeoffs in learning fair representations.
\newblock In \emph{Advances in Neural Information Processing Systems}, 2019.

\end{thebibliography}
\bibliographystyle{icml2023}

\newpage
\appendix
\onecolumn

\section{Proof of~\Cref{prop:ip_vs_pp}}
\label{appendix:ip_vs_pp_proof}

\ipvspp*

\begin{proof}
By the definition of $\thetabase$ and from the first order optimality condition it holds that:
\begin{equation}
    \nabla\glmpart(\thetabase) = \frac{1}{\npred} \sum_{i\in[\npred]} \glmtrans(\xvec_i, y_i). \notag
\end{equation}
Plugging this identity into the Bregman divergence of the strongly convex function $\glmpart(\thetavec)$ allows us to write it as follows:
\begin{flalign}
    \breg(\thetavec, \thetabase) &=\glmpart(\thetavec) - \thetavec^\top \frac{1}{\npred} \sum_{i\in[\npred]} \glmtrans(\xvec_i, y_i) \notag \\ &+ \thetabase^\top \frac{1}{\npred} \sum_{i\in[\npred]} \glmtrans(\xvec_i, y_i) - \glmpart(\thetabase) \notag \\
    &= \frac{1}{\npred} \sum_{i\in[\npred]} \predloss(\xvec_i, y_i; \thetavec) + C, \notag
\end{flalign}
where we use the notation $C=\thetabase^\top \frac{1}{\npred} \sum_{i\in[\npred]} \glmtrans(\xvec_i, y_i) - \glmpart(\thetabase)$ for the terms that are independent of $\thetavec$.

Rearranging the terms yields that $\optpp(\thetavec;\lambda)=\optip(\theta;\lambda) + C$ which concludes the proof.
\end{proof}

\section{More related work}
\label{appendix:related_work}

First, in order to help position our work in the existing in- and post-processing literature, we present in \Cref{table:related_work} the specific shortcomings that we target with the desiderata D1 -- D3 introduced in \Cref{sec:intro}.

\begin{table*}[h]
\resizebox{\textwidth}{!}{

\begin{tabular}{lllllll} 
\toprule
Method                                                                                                          & \begin{tabular}[c]{@{}l@{}}Require changing\\prediction model\end{tabular} & \begin{tabular}[c]{@{}l@{}}Require access to\\training pipeline / data\end{tabular} & \begin{tabular}[c]{@{}l@{}}Computa-\\tion cost\end{tabular} & \begin{tabular}[c]{@{}l@{}}Sensitive \\attribute A\end{tabular} & \begin{tabular}[c]{@{}l@{}}Requires A\\ for inference\end{tabular} & Fairness definition                                                                                                                                  \\ 
\hline
\begin{tabular}[c]{@{}l@{}}In-processing methods~\\e.g. \citet{agarwal18, prost19},\\\citet{mary19,cho20}\end{tabular}                  & {\cellcolor[rgb]{0.957,0.8,0.8}}Yes                                      & {\cellcolor[rgb]{0.957,0.8,0.8}}Yes                                                   & {\cellcolor[rgb]{0.957,0.8,0.8}}High                        & {\cellcolor[rgb]{0.851,0.918,0.827}}Any                   & {\cellcolor[rgb]{0.851,0.918,0.827}}No                             & {\cellcolor[rgb]{0.851,0.918,0.827}}Any fairness penalty                                                                                             \\ 
\hline
\begin{tabular}[c]{@{}l@{}}Existing post-processing methods~\\e.g. \citet{hardt16, kamiran18},\\\citet{alghamdi22, xian23}\end{tabular} & {\cellcolor[rgb]{0.851,0.918,0.827}}No                                   & {\cellcolor[rgb]{0.851,0.918,0.827}}No                                                & {\cellcolor[rgb]{0.851,0.918,0.827}}Low                     & {\cellcolor[rgb]{0.957,0.8,0.8}}Discrete             & {\cellcolor[rgb]{0.957,0.8,0.8}}Yes                                & {\cellcolor[rgb]{0.957,0.8,0.8}}\begin{tabular}[c]{@{}>{\cellcolor[rgb]{0.957,0.8,0.8}}l@{}}Tailored to specific\\fairness definitions\end{tabular}  \\ 
\hline
\begin{tabular}[c]{@{}l@{}} \\\textbf{$\ours$ methods}\\\vspace{0.001cm}\end{tabular} & {\cellcolor[rgb]{0.851,0.918,0.827}}No                                   & {\cellcolor[rgb]{0.851,0.918,0.827}}No                                                & {\cellcolor[rgb]{0.851,0.918,0.827}}Low                     & {\cellcolor[rgb]{0.851,0.918,0.827}}Any                   & {\cellcolor[rgb]{0.851,0.918,0.827}}No                             & {\cellcolor[rgb]{0.851,0.918,0.827}}Any fairness penalty                                                                                             \\
\bottomrule
\end{tabular}
}
\caption{In-processing requires retraining the entire prediction model to induce fairness, but can be applied to a broad range of problem settings and to virtually any quantifiable notion of fairness. On the other hand, current post-processing methods are tailored to specific settings and fairness definitions. In contrast, $\ours$ methods are as broadly applicable as penalized in-processing methods, while not being confined to applications with access to the training pipeline of the prediction model.}
\label{table:related_work}
\end{table*}

In the remainder of this section we elaborate on some of the limitations of in-processing that have been previously documented in the literature and have not been discussed extensively in \Cref{sec:related_work}.

\paragraph{In-processing and compositional fairness.} It is often the case, in practical applications that multiple prediction models are employed, and their outputs are then all aggregated into a single decision. For instance, a candidate may apply for several jobs, each with their own selection criteria, but the outcome that is of interest to the candidate is whether at least one of the applications is successful. Similarly, complex decision problems may be broken down into finer grained tasks for the purpose of better interpretability. These situations are prone to compositional fairness issues \citep{dwork19}: even if all individual components are fair, it is not guaranteed that the aggregated decision will also be fair. In-processing techniques are inherently susceptible to limitations due to compositional fairness \citep{atwood23}. Indeed, in order to mitigate these issues, in-processing method could train all individual models simultaneously while enforcing that the aggregated decision is fair. However, this procedure raises huge logistical challenges which are often insurmountable in practice. In contrast, post-processing bypasses compositional fairness issues altogether. Applying a post-processing method to the final decision of a complex system with multiple prediction components that are aggregated into a single decision can treat the entire decision system as a black-box and overcome limitations due to compositional fairness. A thorough investigation of this intuition is left as future work.

\paragraph{In-processing when data has partial group labels.} In addition to the works mentioned in \Cref{sec:novel_failure}, there have been a few empirical observations that attempt to study this problem. In particular, \citet{veldanda23} investigate the performance of MinDiff \citep{prost19} in the overparameterized regime, where the complexity of the model fit to the training data increases while the sample size stays fixed. The authors show in experiments on image data that explicit regularization (e.g.\ early stopping) can improve the performance of MinDiff with overparameterized models. However, the impact of overparameterization on the fairness-error Pareto frontiers is not studied in detail. 

Furthermore, \citet{lowy2022} present experiments where the amount of data with sensitive annotations is reduced to $10\%$. In this case, the authors aim to compare their proposed method to other in-processing strategies, but no catastrophic loss in performance is noticeable. We hypothesize that this is due to the amount of data with sensitive attributes being still large enough to allow for good performance. In particular, our experiments (\Cref{fig:low_sample}) reveal that when the sensitive data is reduced to $0.1\%$ of the training set size, on Adult in-processing performs no better than the naive strategy of predicting the favorable outcome with probability $p$ and the output of the pre-trained model with probability $1-p$.

\section{Experiment details}
\label{appendix:all_exp_details}

\subsection{Details on fairness definitions}
\label{appendix:fairness_definitions}

In this section, we provide more details about the notions of group fairness used throughout this paper. We note that this is not an exhaustive list of fairness definitions, and other notions are possible and considered in the literature too (e.g.\ worst-group error). We refer to surveys such as \citet{caton23} books like \citet{barocas23} for more details.

\paragraph{Statistical parity (SP).} Also known as demographic parity, SP measures the difference between the frequency of favorable outcomes in the subpopulations determined by the values of the sensitive attribute $A$ \citep{dwork12}. To quantify the violation of the SP condition, several works \citep{donini18, jiang20, cho20} consider the difference with respect to statistical parity. Assuming that the favorable outcome is $y=1$, this quantity is defined as follows:
\begin{align}
    \SPgap(\fct) = \sum_{a}|\PP(\fct(X) = 1|A=a) - \PP(\fct(X)=1)|,
\end{align}
where the sum is over all the possible values of the sensitive attribute. Note that the sum can also be replaced with a ``$\max$'' operator in the formulation above.

\paragraph{Equal opportunity (EqOpp).} This fairness definition is tailored for settings with discrete labels $y$ and sensitive attributes $a$. Intuitively, EqOpp asks that a classifier is not more likely to assign the favorable outcome to one of the groups determined by the (discrete) sensitive attribute $a$. Assuming the negative class $y=0$ is more desirable, one can quantify the fairness of a binary predictor using the following:
\begin{align}
    \fprgap(\fct) = |\PP(\fct(X)\neq Y|Y=0, A=0) - \PP(\fct(X)\neq Y|Y=0, A=1)|.
\end{align}
    
This metric can also be generalized to multiclass classification.
    
\paragraph{Equalized odds (EqOdds).} This notion of fairness is satisfied if $A \perp \Yhat | Y$. Intuitively, EqOdds penalizes the predictor if it relies on potential spurious correlations between A and Y. One can quantify the violation of this definition of fairness using $\rho(A, \Yhat | Y)$, where $\rho$ is a measure of conditional statistical independence (e.g.\ HSIC \citep{gretton05}, CKA \citep{cristianini02, cortes12}, HGR \citep{gebelein} etc). One can either use one of these quantities to evaluate the fairness of a model (e.g.\ HGR$_\infty(f(X), A | Y)$ like in \citet{mary19}) or a metric such as mean equalized odds, which, for binary classification, can be defined as:
\begin{equation}
    \text{MEO}=\frac{\tprgap(\fct) + \fprgap(\fct)}{2},
\end{equation} 
where $\tprgap(\fct)$ is the gap in the true positive rate between groups and is defined similarly to $\fprgap(\fct)$.

\subsection{Datasets}
\label{appendix:datasets}

We briefly describe the datasets used throughout the experiments presented in the paper. 

The \textbf{Adult} dataset \citep{becker96} is perhaps the most popular dataset in the algorithmic fairness literature. The task it proposes is to predict whether the income of a person is over the $50,000\$$ threshold, having access to various demographic features. In our experiments, we consider gender as the sensitive attribute. We follow the procedure described in \citet{alghamdi22} to pre-process the data.

Alongside Adult, \textbf{COMPAS} \citep{angwin16} is also a well-established dataset for evaluating fairness mitigations. It contains information about defendents detained in US prisons. The task is to predict the individual risk of recidivism, while being fair with respect to race. We adopt the pre-processing methodology of \citet{alghamdi22} for this dataset.

The \textbf{Crimes~\&~Communities} dataset \citep{redmond09} is also part of the UCI repository \citep{dua17}, like Adult, and contains information about US cities. The task is a regression problem where the goal is to predict the amount of violent crimes and the sensitive attribute is the proportion of an ethnic group in the population. Hence, the sensitive attribute takes continuous values. For this dataset, we use the same pre-processing as \citet{mary19}.

The \textbf{HSLS} dataset \citep{jeong22} contains information about over $23,000$ students from high schools in the USA. The features consist in information about the students' demographic and academic performance, as well as data about the schools. The data is pre-processed using the same procedure as \citet{alghamdi22}. The task is to predict exam scores while being fair with respect to race.

\textbf{ENEM} \citep{alghamdi22} is a dataset of exam scores collected in Brazilian high schools. The dataset contains demographic and socio-economic information about the students. Once again, we use the same pre-processing methodology as \citet{alghamdi22}. Similar to HSLS, the goal is to predict the Humanities exam score, while the sensitive attribute is race.

\subsection{Baselines}
\label{appendix:baselines_description}

We compare the performance of $\ours$ methods obtained with our framework with several competitive in- and post-processing approaches.

\paragraph{In-processing baselines.} We consider three different regularized in-processing methods and one constrained in-processing approach for which we construct $\ours$ post-processing counterparts. First, MinDiff \citep{beutel2019, prost19} is an approach that uses MMD \citep{gretton12} to induce the statistical independence required for various fairness definitions to hold (i.e.\ EqOpp, EqOdds). The remarkable performance of this method led to it being included in standard fairness toolkits such as \texttt{tensorflow-model-remediation}.\footnote{\url{https://www.tensorflow.org/responsible_ai/model_remediation}} Furthermore, the method of \citet{cho20} employs kernel density estimation to construct a regularizer for certain fairness definition violations. In addition to EqOdds, this method can also be applied to induce SP. Finally, \citet{mary19} propose to use the Hirschfeld-Gebelein-R\'enyi (HGR) Maximum Correlation Coefficient to quantify statistical independence and propose an unfairness regularizer based on this metric. Besides these regularized in-processing methods, we also consider the Reductions approach \citep{agarwal18} in our comparison, which proposes solving a constrained optimization problem.
For all the baselines, we use the hyperparameters recommended in the respective papers. 

\paragraph{Post-processing baselines.} The FairProjection method of \citet{alghamdi22} is, to the best of our knowledge, one of the best performing post-processing mitigations. FairProjection adjusts the scores output by a classification method, using a different transformation for each sensitive group in the population. Alternatively, the methods of \citet{hardt16,chzhen20} change the decision threshold in a group-dependent manner. These two approaches do not prescribe a way to obtain an entire Pareto frontier, but rather a single point on the fairness-error trade-off. Finally, the Rejection-option classification method of \citet{kamiran18} exploits uncertainty in the decision of a classifier to decide what labels to output. For all of these methods, we use the results from the public code repository of \citet{alghamdi22}.

\subsection{Experiment details for training $\ours$ methods}
\label{appendix:exp_details}

For the comparison with in-processing methods, we use the pre-trained models recommended in the respective papers (i.e.\ 3-MLP with 128 hidden units on each layer for MinDiff and \citet{cho20}, and logistic regression for \citet{agarwal18} and \citet{mary19}. For the $\ours$ post-hoc transformation, we use linear regression to model $\mult(\xvec)$. We select the optimal learning rate by minimizing the prediction error on a held-out validation set. To obtain the Pareto frontiers, we vary the $\lambda$ coefficient that balances the prediction error and the fairness regularizer terms in the loss.

For the comparison with prior post-processing works in \Cref{sec:pp_sota} we use $\ours$ MinDiff. For these experiments, we employ a variant of MinDiff tailored to EqOdds which encourages not only the FPR gap to be small between sensitive groups, but also the FNR. Once again, we select the optimal learning rate using a validation set, and train linear regression and 1-MLP models with 64 hidden units as the $\mult(\xvec)$ post-hoc transformation. The pre-trained models are obtained following the instructions in \citet{alghamdi22}.

For the analysis of the post-hoc transformation (\Cref{sec:analysis_of_t}), we use a 3-MLP as the pre-trained model and assume $\mult(X)$ to be a 1-MLP trained using $\ours$ MinDiff, like in~\cref{sec:exp_ip_vs_pp}. We only consider Adult and COMPAS since they have fewer covariates, which makes them suitable for visualization.

For the experiments with partial group labels in \Cref{sec:novel_failure} we consider $\ours$ MinDiff for EqOpp, with a 3-MLP pre-trained model with 128 hidden units on each layer. We use a 1-MLP with 64 hidden units to model the post-processing transformation. The optimal learning rate and early-stopping epoch are selected so as to minimize prediction error on a held-out validation set.

\subsection{Measuring computation cost}
\label{appendix:comp_time}

To compare the computation time of $\ours$ methods and compare it to the correspoding in-processing methods we generate Pareto frontiers for each of the settings in \Cref{fig:ip_vs_pp}, where we always select $8$ different values for the coefficient that controls the fairness-error trade-off and repeat each experiment $10$ times with different random seeds. In total, for each of the three settings in \Cref{fig:ip_vs_pp} we perform $80$ experiments sequentially. For the post-processing methods, we include the time required to train a base model in the reported computation times. The machine we used for these measurements has $32$ 1.5 GHz CPUs.

\section{More experiments}

\subsection{Equivalence between in- and post-processing on more datasets}
\label{appendix:ip_vs_pp}

\Cref{fig:ip_vs_pp} demonstrates on the HSLS dataset that $\ours$ methods preserve the good fairness-error trade-off achieved by several regularized in-processing approaches, while also enjoying the advantages of post-processing methods. \Cref{fig:ip_vs_pp_mindiff,fig:ip_vs_pp_cho,fig:ip_vs_pp_reductions} complement \Cref{fig:ip_vs_pp} and present similar results on three more datasets: Adult, COMPAS and ENEM.

\begin{figure*}[ht]
\centering
\begin{subfigure}[t]{0.32\textwidth}
    \includegraphics[width=\textwidth]{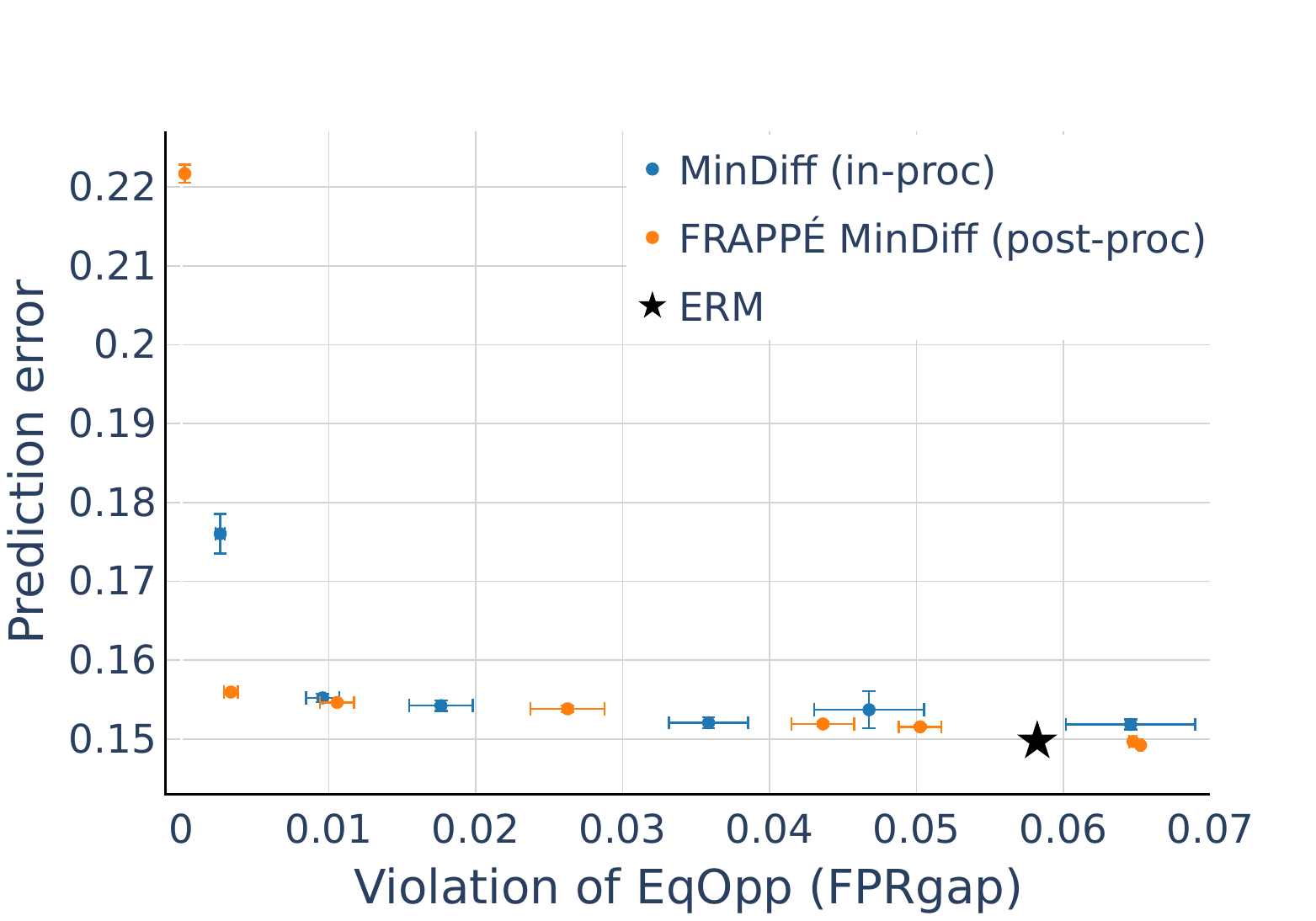}
\caption{$\optip=$ \textit{MinDiff} on Adult data.}
\end{subfigure}
\hfill
\begin{subfigure}[t]{0.32\textwidth}
    \includegraphics[width=\textwidth]{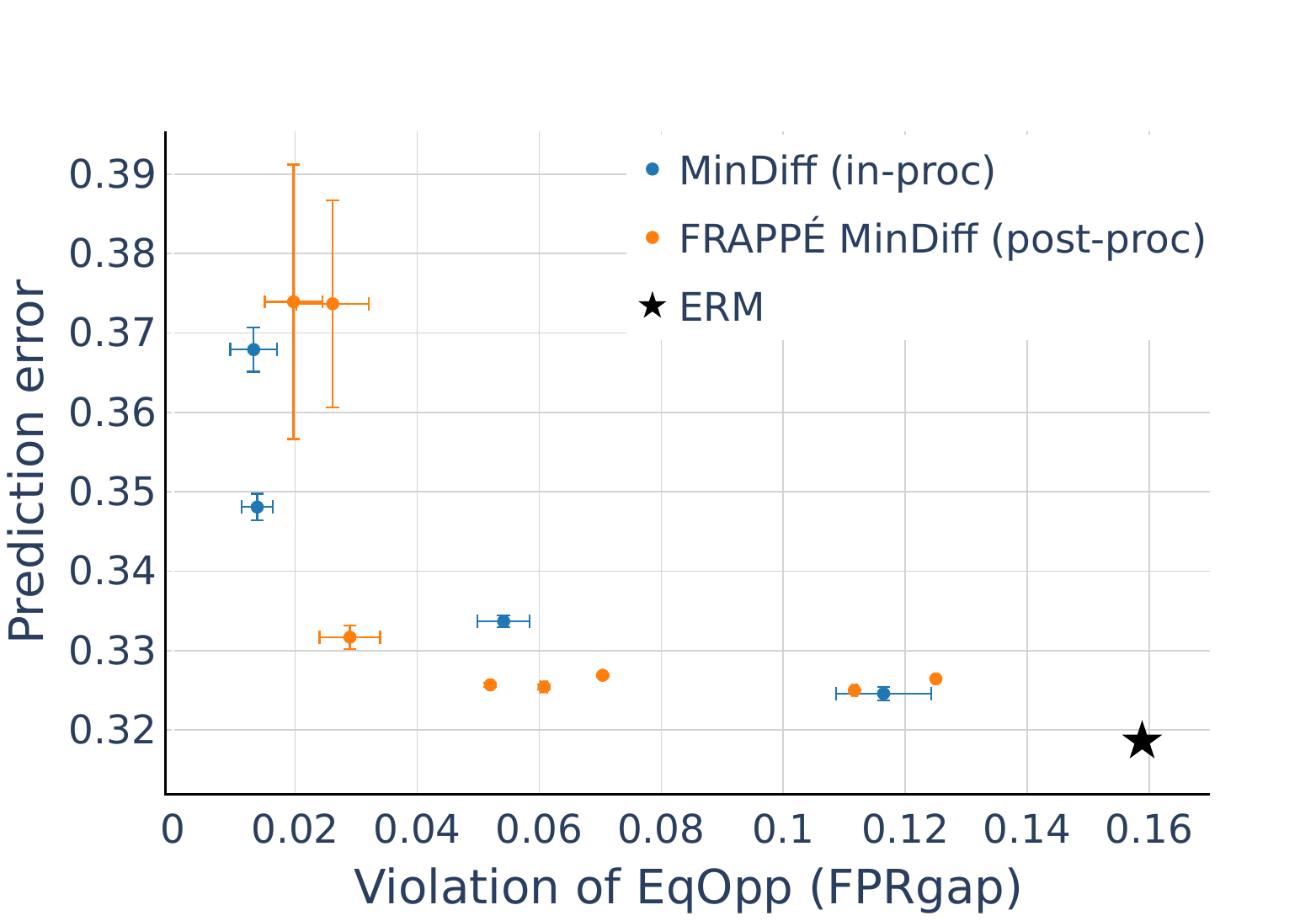}
\caption{$\optip=$ \textit{MinDiff} on COMPAS data.}
\end{subfigure}
\hfill
\begin{subfigure}[t]{0.32\textwidth}
    \includegraphics[width=\textwidth]{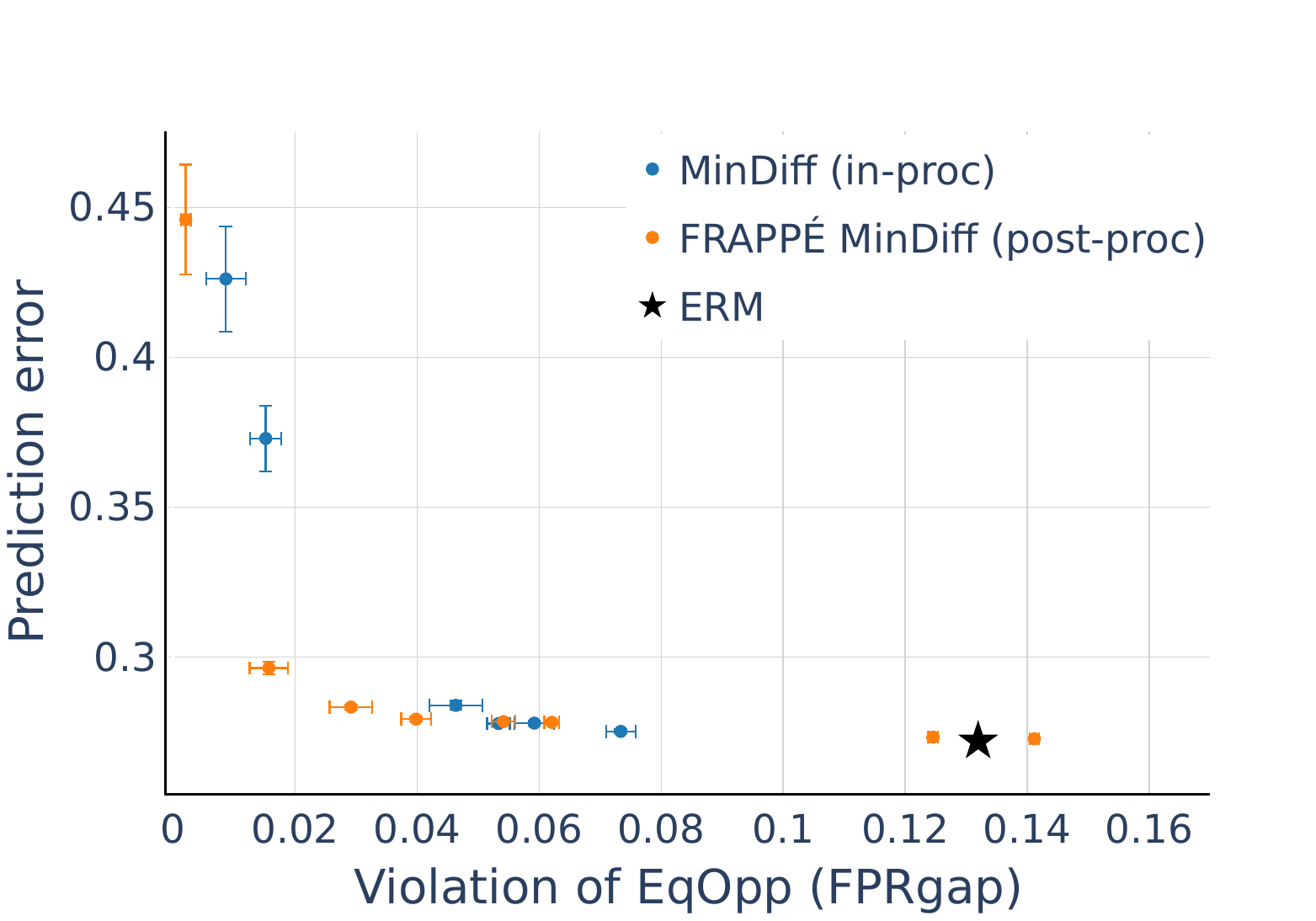}
\caption{$\optip=$ \textit{MinDiff} on ENEM data.}
\end{subfigure}
\caption{Inducing EqOpp using in-processing MinDiff and its $\ours$ post-processing variant leads to similar Pareto frontiers on several different datasets.}
\label{fig:ip_vs_pp_mindiff}
\vspace{-0.5cm}
\end{figure*}

\begin{figure*}[ht]
\centering
\begin{subfigure}[t]{0.32\textwidth}
    \includegraphics[width=\textwidth]{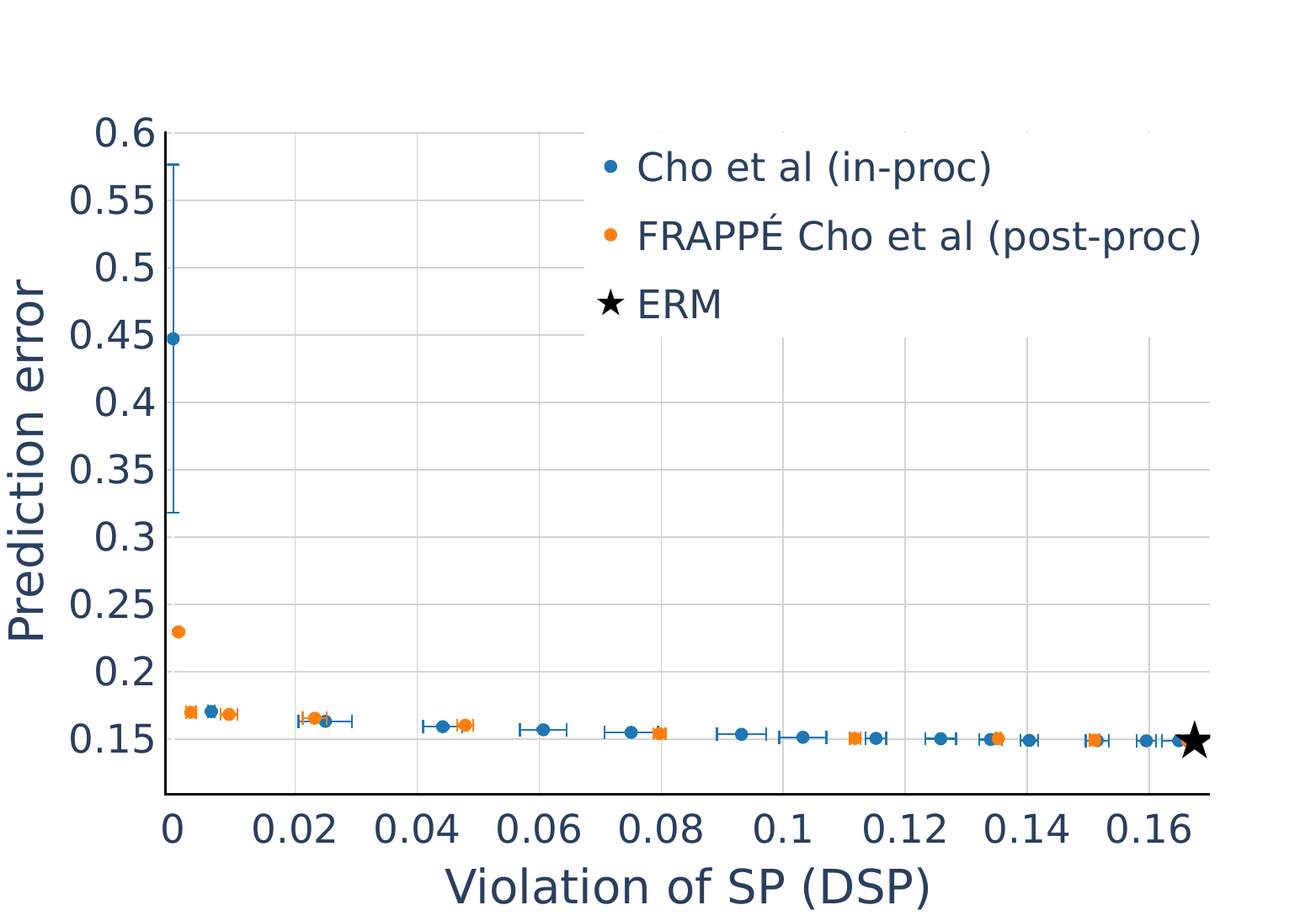}
\caption{$\optip=$ \citet{cho20} on Adult.}
\end{subfigure}
\hfill
\begin{subfigure}[t]{0.32\textwidth}
    \includegraphics[width=\textwidth]{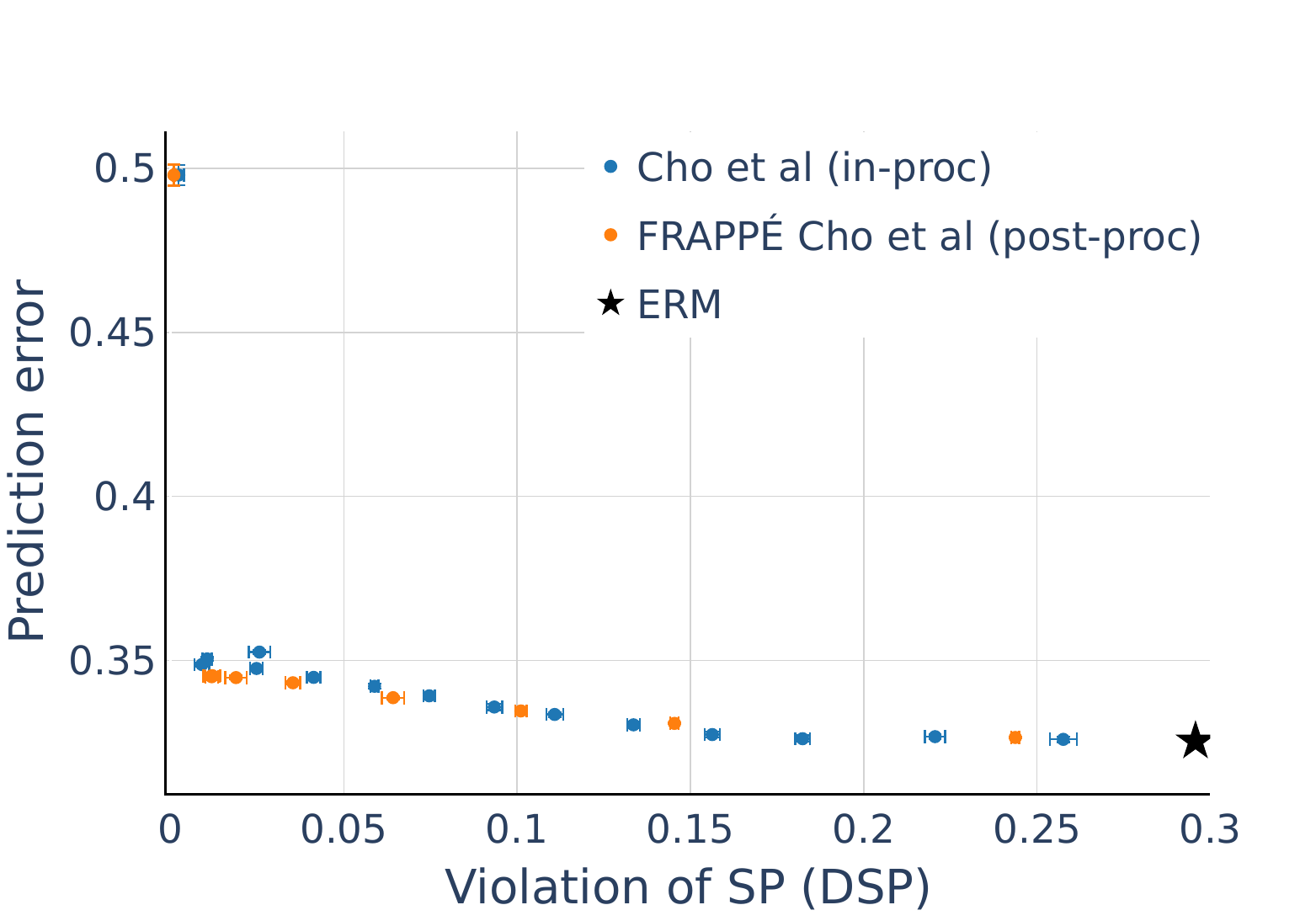}
\caption{$\optip=$ \citet{cho20} on COMPAS.}
\label{fig:ip_vs_pp_cho_compas}
\end{subfigure}
\hfill
\begin{subfigure}[t]{0.32\textwidth}
    \includegraphics[width=\textwidth]{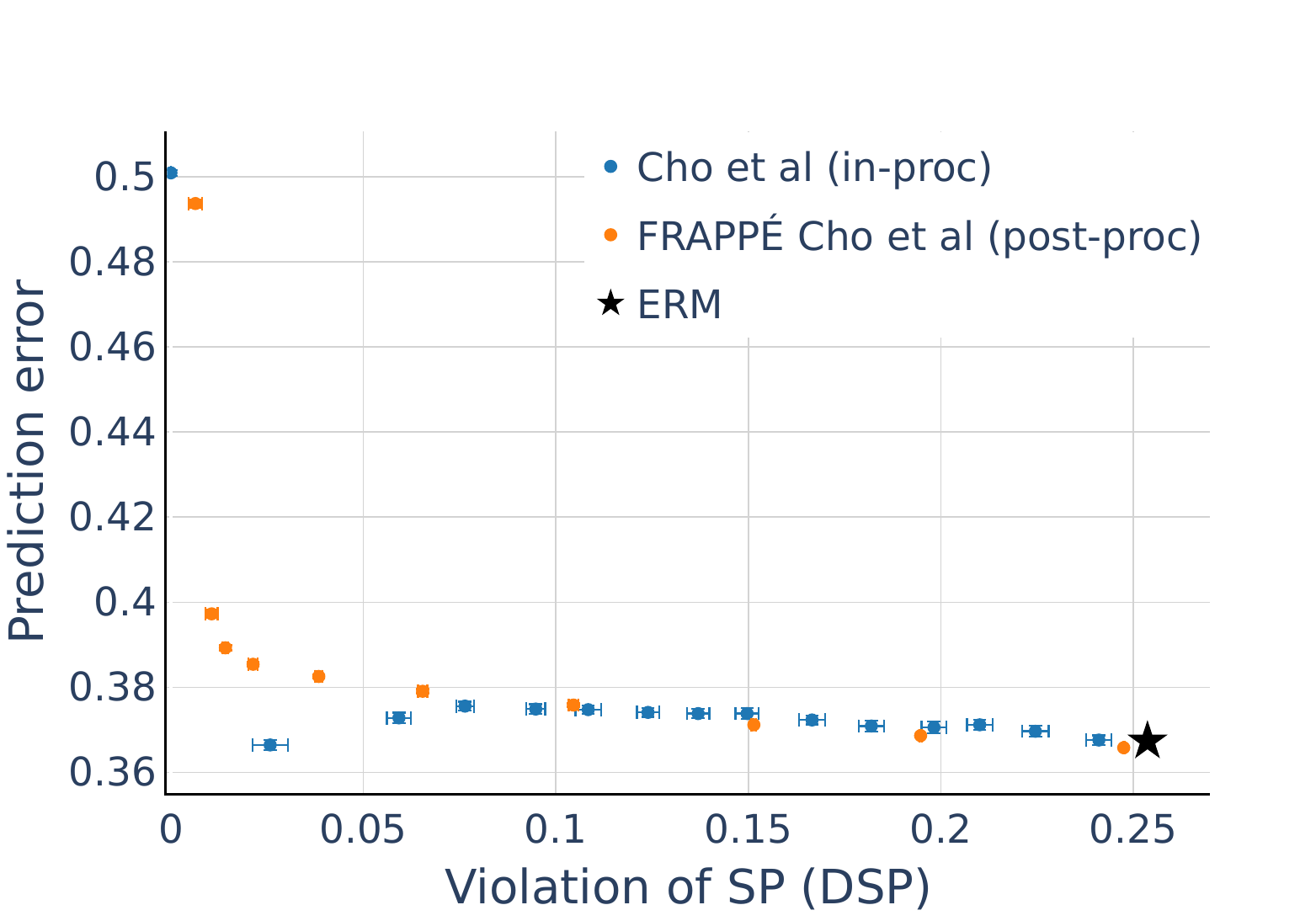}
\caption{$\optip=$ \citet{cho20} on ENEM.}
\end{subfigure}
\caption{Inducing SP using the in-processing  method of \citet{cho20} and its $\ours$ post-processing variant leads to similar Pareto frontiers on several different datasets.}
\label{fig:ip_vs_pp_cho}
\vspace{-0.2cm}
\end{figure*}

\begin{figure*}[!ht]
\centering
\begin{subfigure}[t]{0.32\textwidth}
    \includegraphics[width=\textwidth]{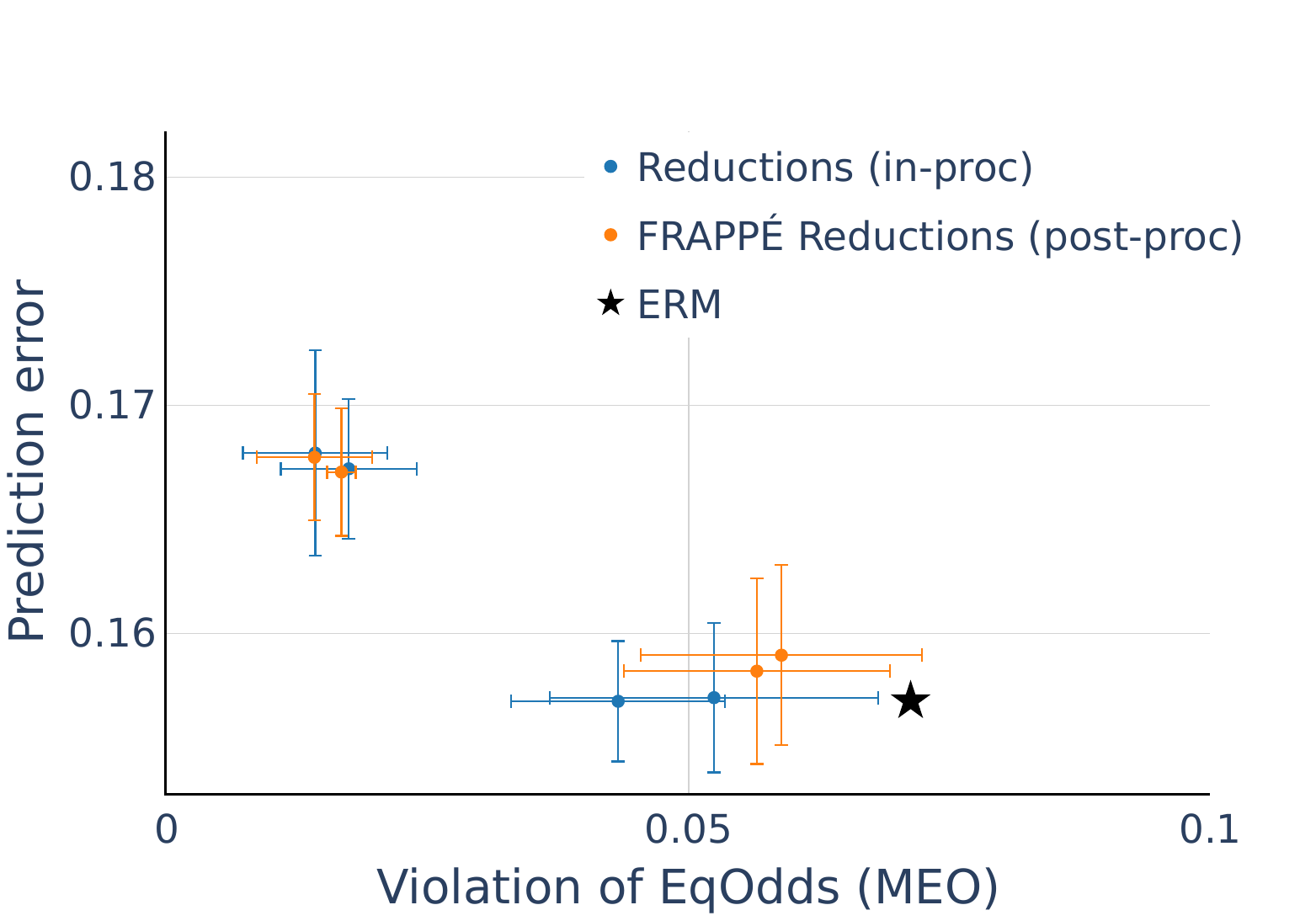}
\caption{$\optip=$ \textit{Reductions} on Adult.}
\end{subfigure}
\hfill
\begin{subfigure}[t]{0.32\textwidth}
    \includegraphics[width=\textwidth]{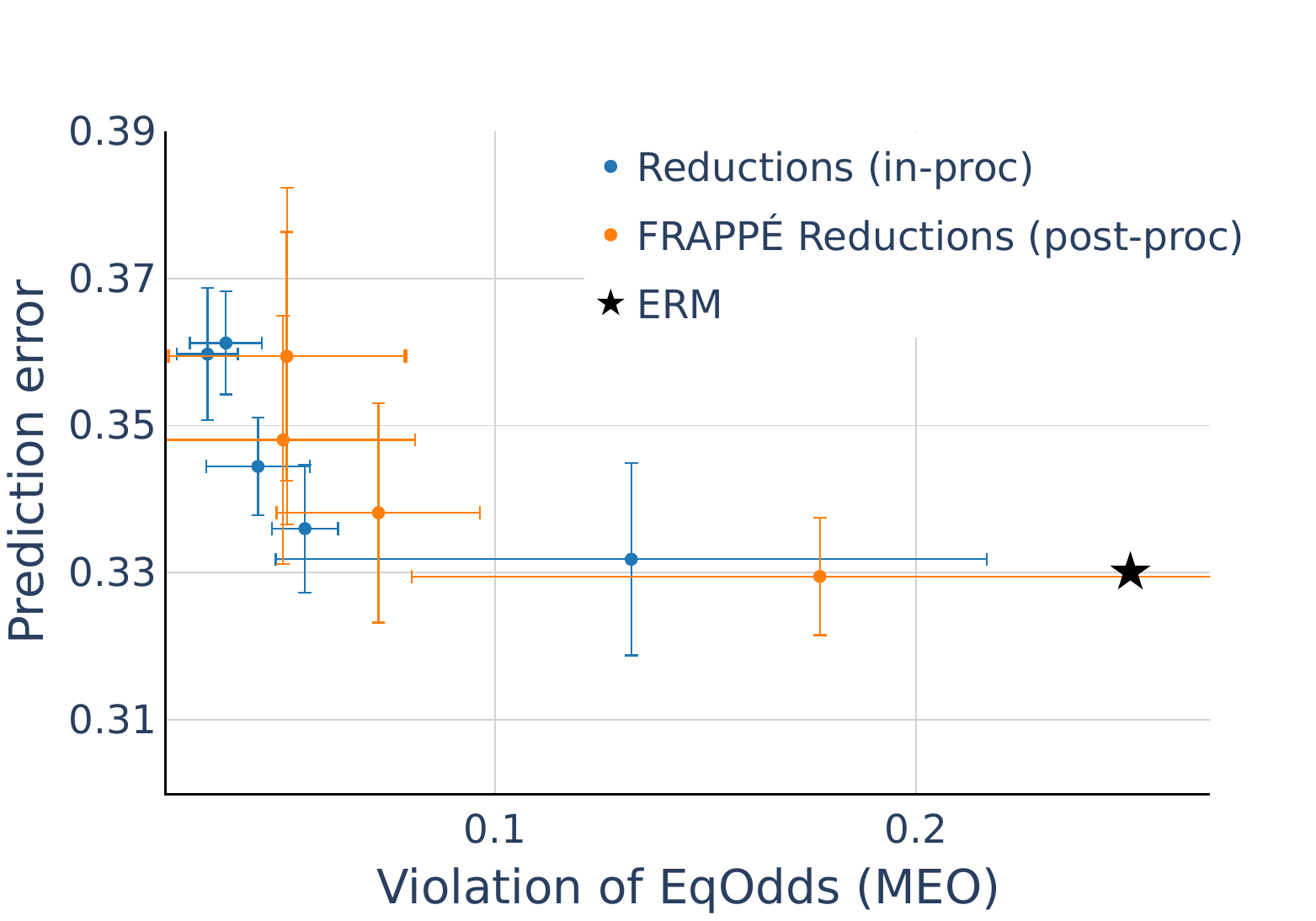}
\caption{$\optip=$ \textit{Reductions} on COMPAS.}
\end{subfigure}
\hfill
\begin{subfigure}[t]{0.32\textwidth}
    \includegraphics[width=\textwidth]{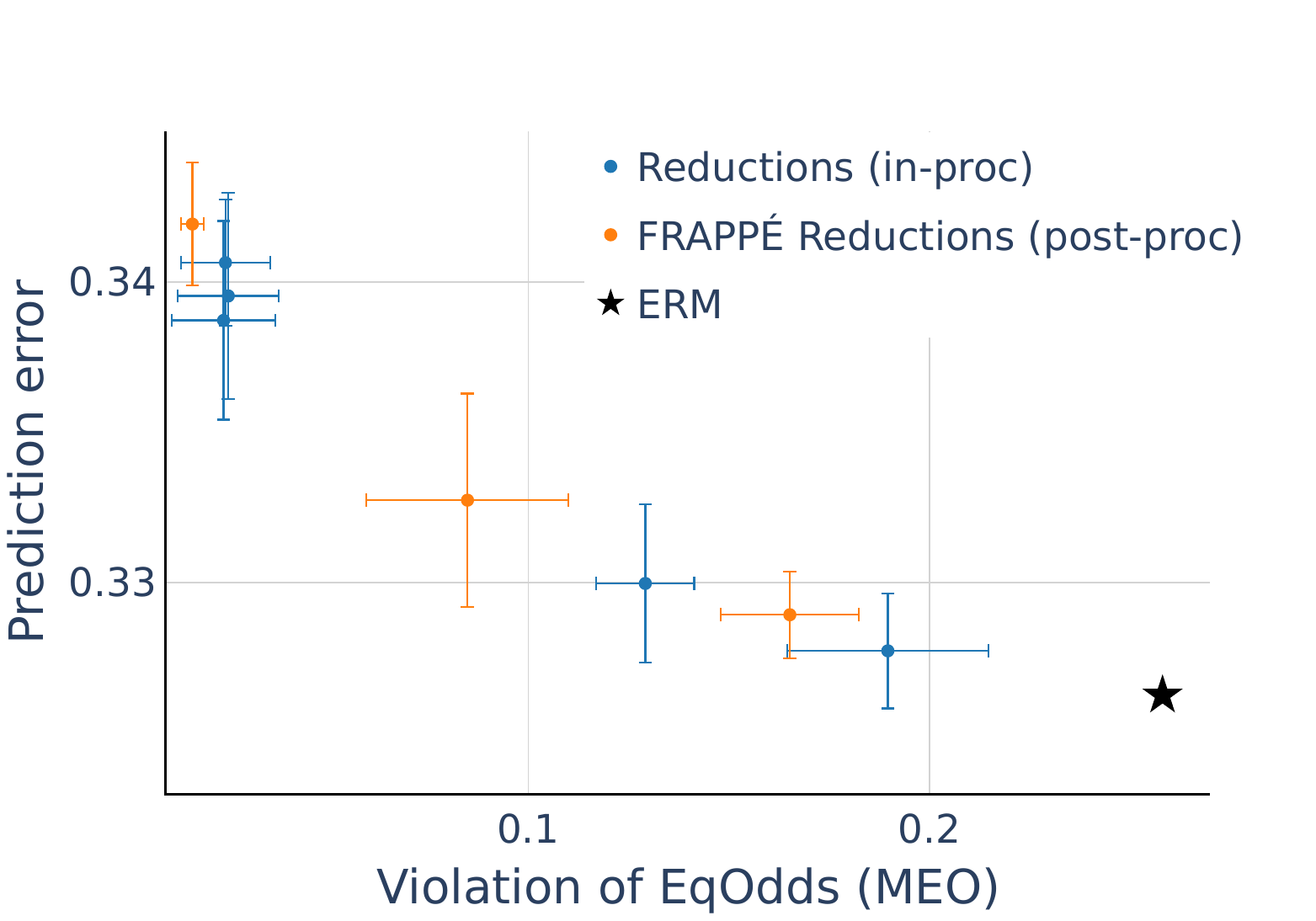}
\caption{$\optip=$ \textit{Reductions} on ENEM.}
\end{subfigure}
\caption{Inducing EqOdds using in-processing Reductions \citep{agarwal18} and its $\ours$ post-processing variant leads to similar Pareto frontiers on several different datasets.}
\label{fig:ip_vs_pp_reductions}
\vspace{-0.2cm}
\end{figure*}

\subsection{More comparisons with prior mitigations}
\label{appendix:more_baselines}

In this section we extend \Cref{fig:comparison_alghamdi} with the results obtained with more in- and post-processing baselines. We consider the same methods as \citet{alghamdi22}, described in more detail in \Cref{appendix:baselines_description}. Unless otherwise specified, the techniques presented in \Cref{fig:app_more_baselines} are post-processing approaches. Like in \Cref{fig:comparison_alghamdi}, we train $\ours$ MinDiff for EqOdds, where the post-hoc transformation is modeled by either linear regression or a simple 1-MLP with $64$ hidden units. The numbers for all the baselines are collected from the  public code of \citet{alghamdi22}.

In addition to using random forests (RF) as the base model, we also present results for logistic regression and GBMs as the pre-trained model for all three datasets in \Cref{fig:app_alghamdi_logreg} and \Cref{fig:app_alghamdi_gbm}, respectively. The figures reveal that the same trends observed for RF pre-trained models also occur for other classes of pre-trained models.

\begin{figure*}[ht]
\centering
\begin{subfigure}[t]{0.24\textwidth}
    \includegraphics[width=\textwidth]{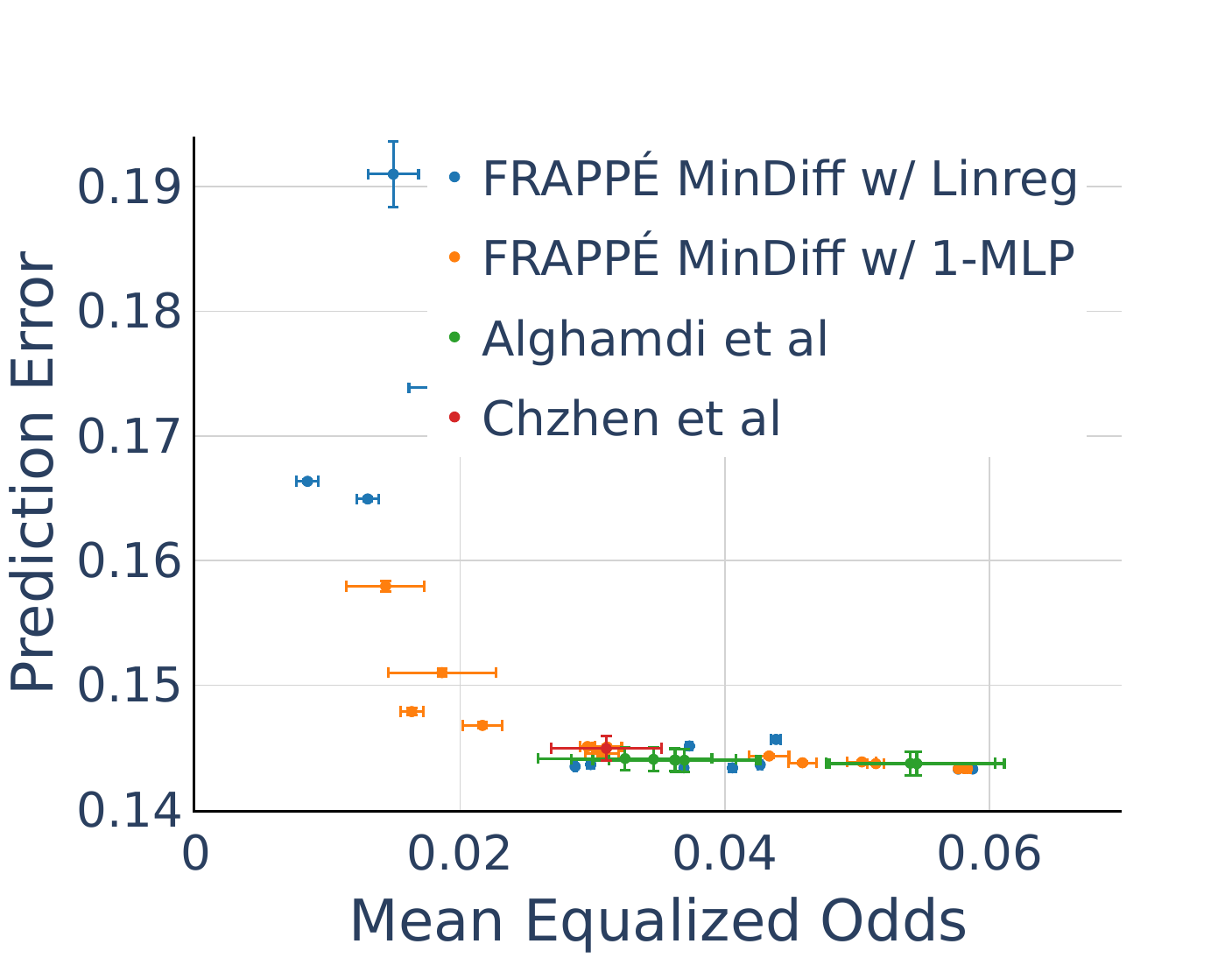}
    \caption{Adult}
\end{subfigure}
\hfill
\begin{subfigure}[t]{0.24\textwidth}
    \includegraphics[width=\textwidth]{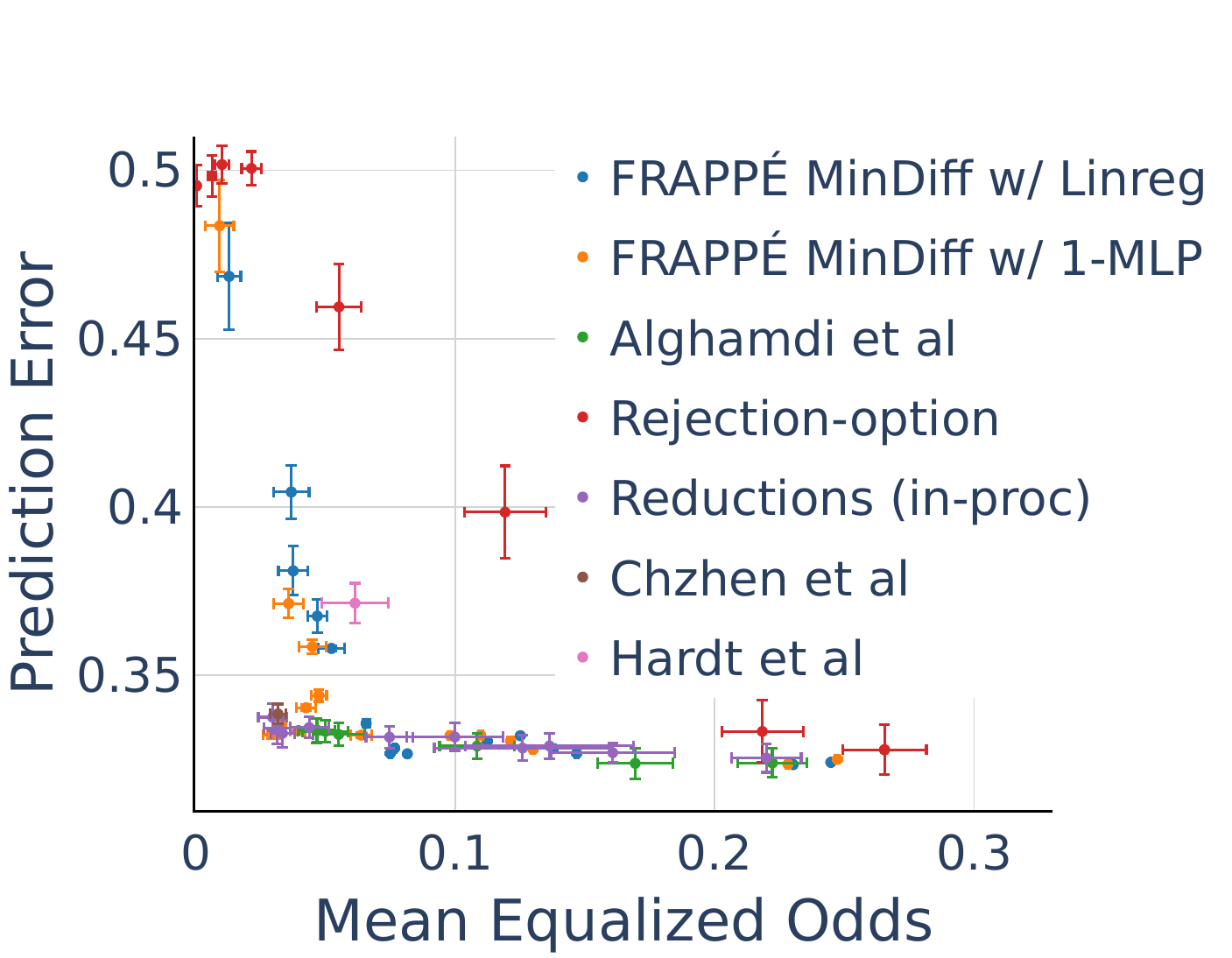}
    \caption{COMPAS}
\end{subfigure}
\hfill
\begin{subfigure}[t]{0.24\textwidth}
    \includegraphics[width=\textwidth]{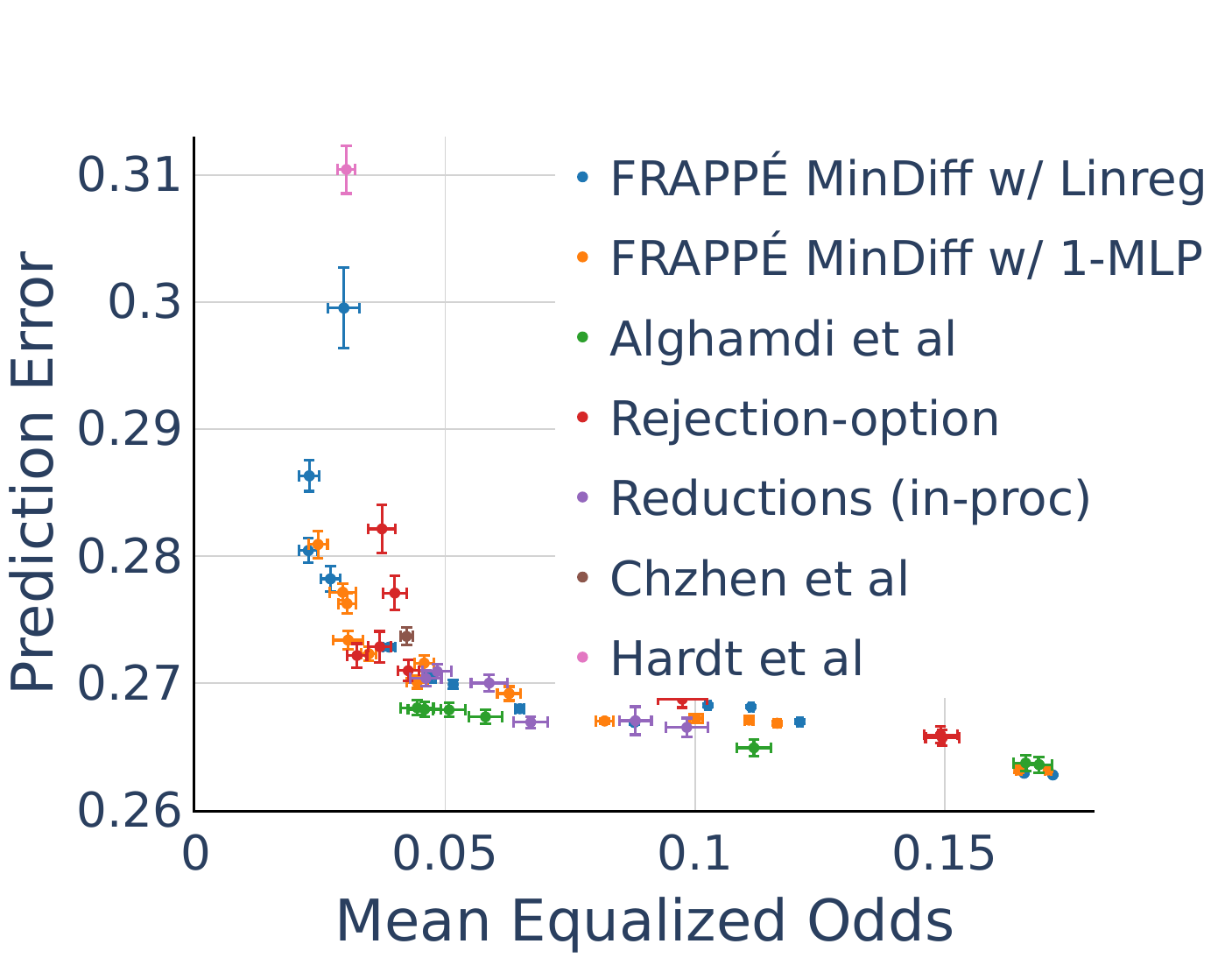}
    \caption{HSLS}
\end{subfigure}
\hfill
\begin{subfigure}[t]{0.24\textwidth}
    \includegraphics[width=\textwidth]{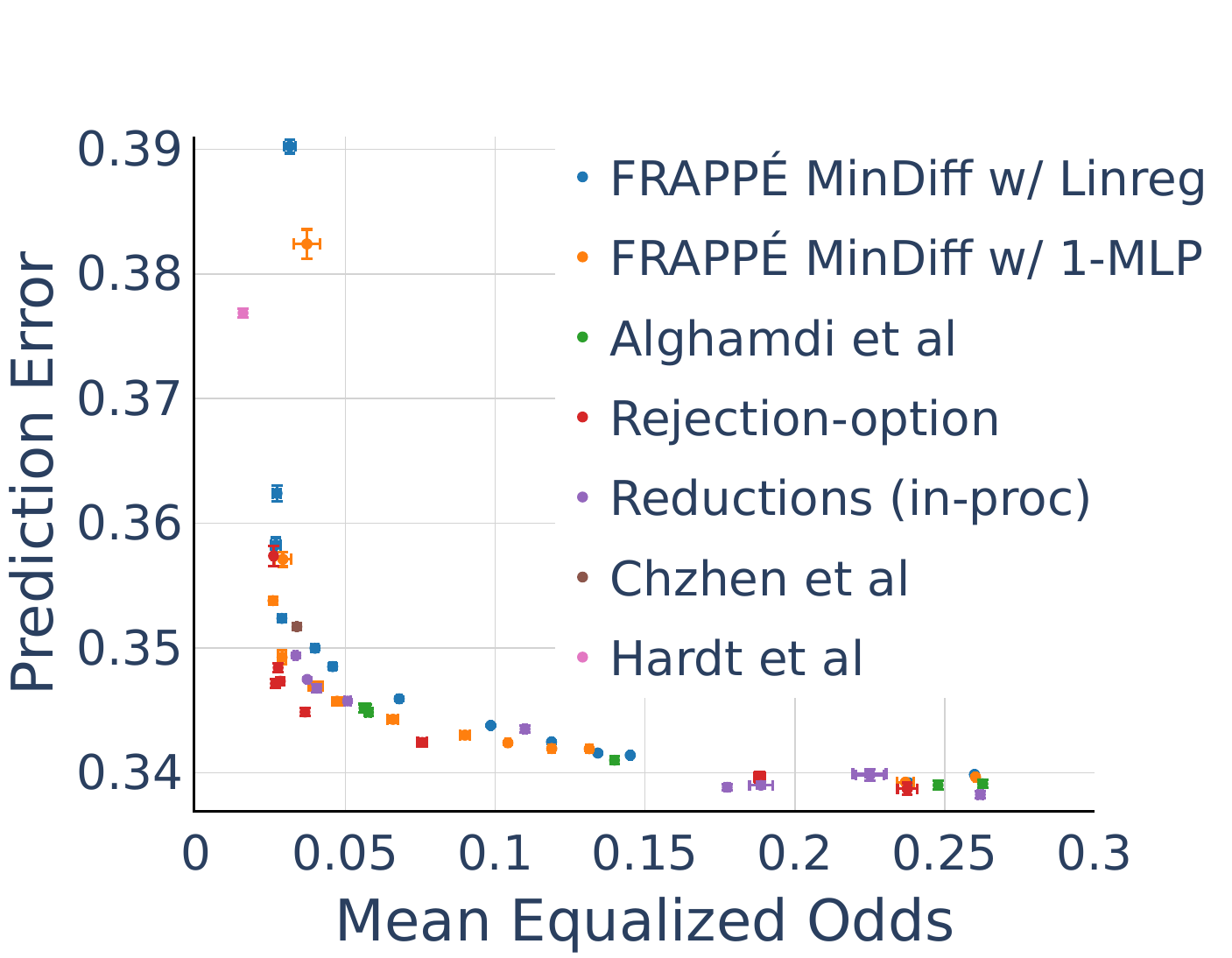}
    \caption{ENEM}
\end{subfigure}
\caption{Comparison between $\ours$ MinDiff for EqOdds and several other in- and post-processing methods for inducing group fairness. The pretrained model is \textbf{random forest (RF)}.}
\label{fig:app_more_baselines}
\vspace{-0.5cm}
\end{figure*}

\begin{figure*}[ht]
\centering
\begin{subfigure}[t]{0.24\textwidth}
    \includegraphics[width=\textwidth]{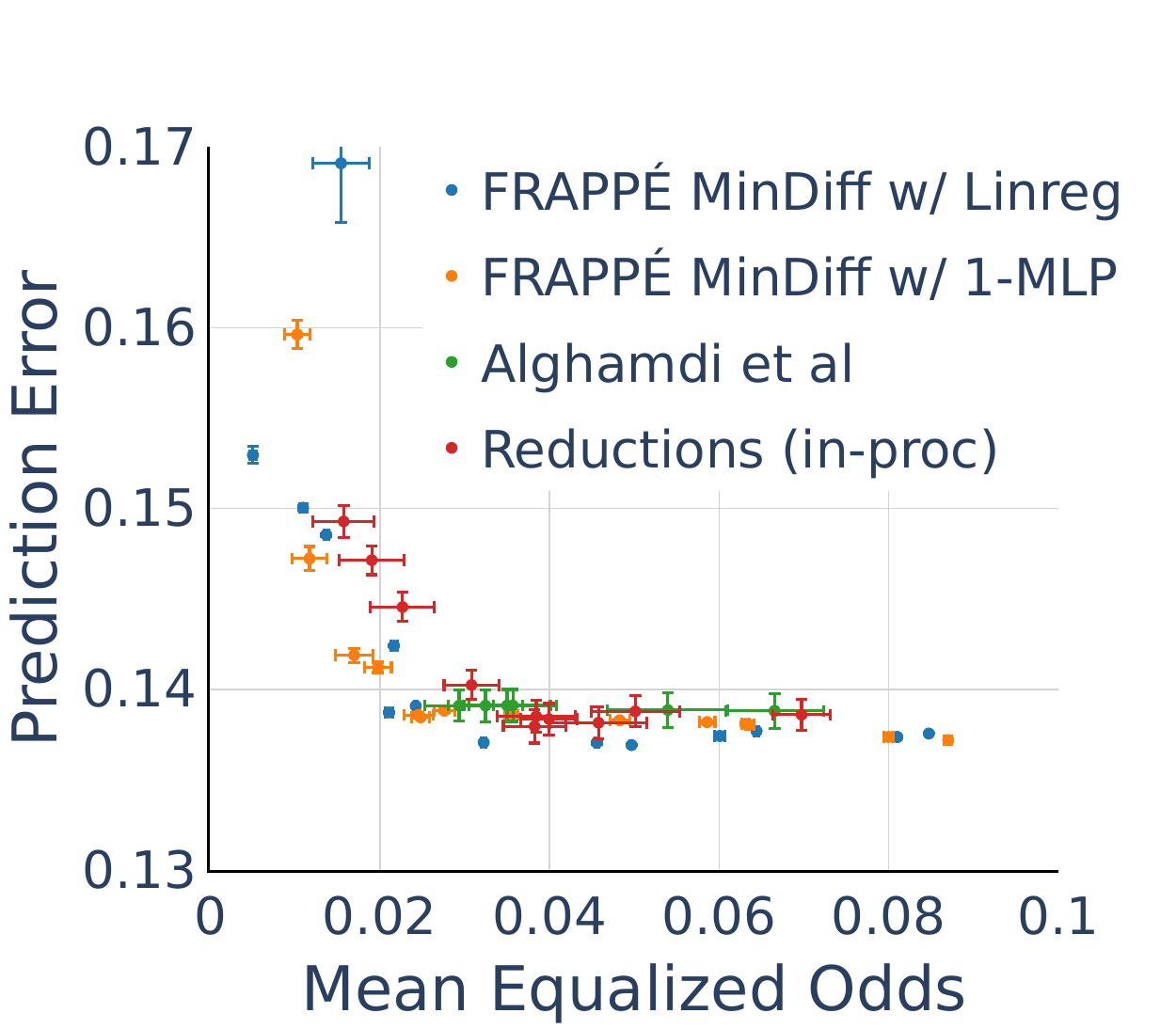}
    \caption{Adult}
\end{subfigure}
\hfill
\begin{subfigure}[t]{0.24\textwidth}
    \includegraphics[width=\textwidth]{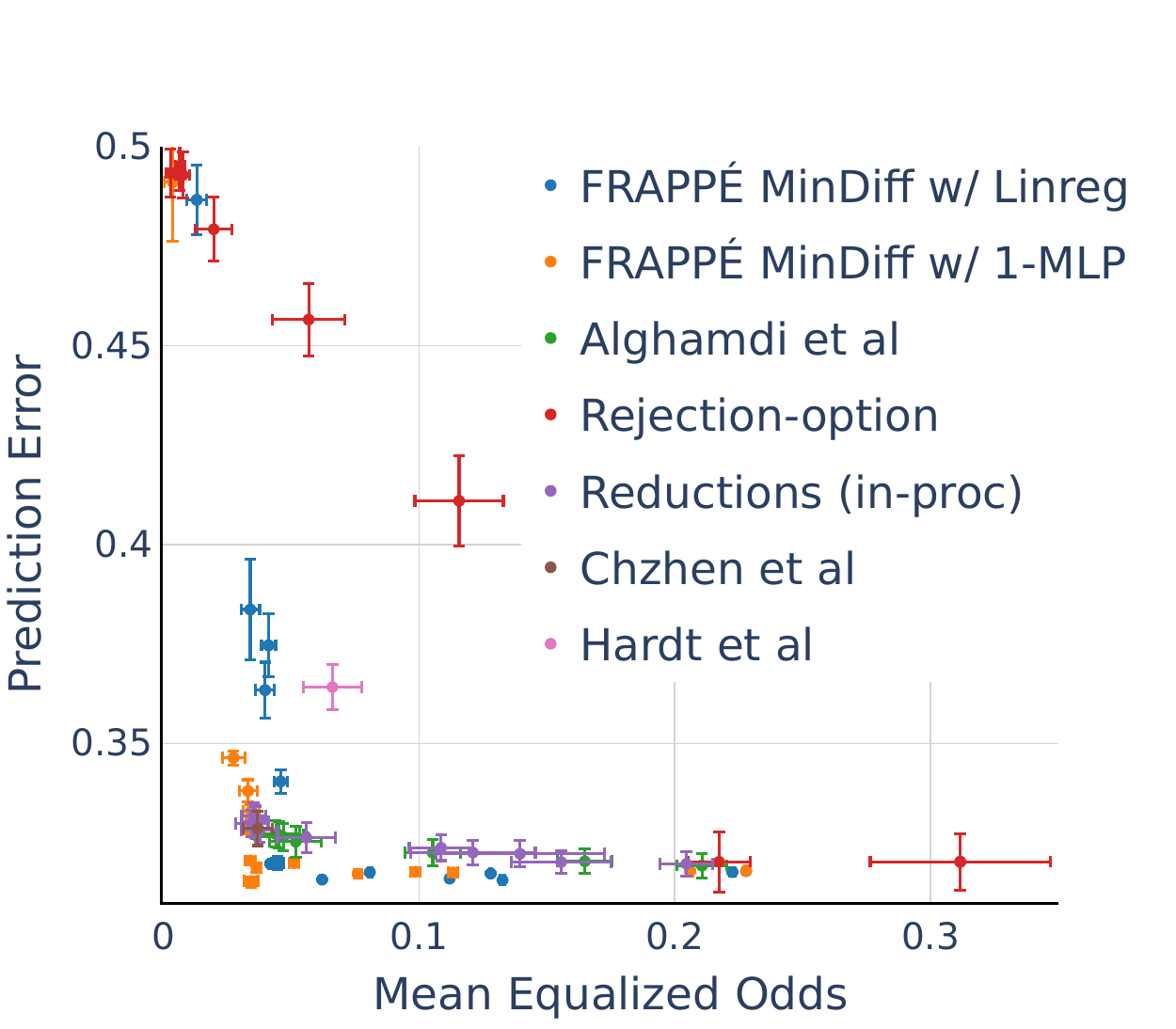}
    \caption{COMPAS}
\end{subfigure}
\hfill
\begin{subfigure}[t]{0.24\textwidth}
    \includegraphics[width=\textwidth]{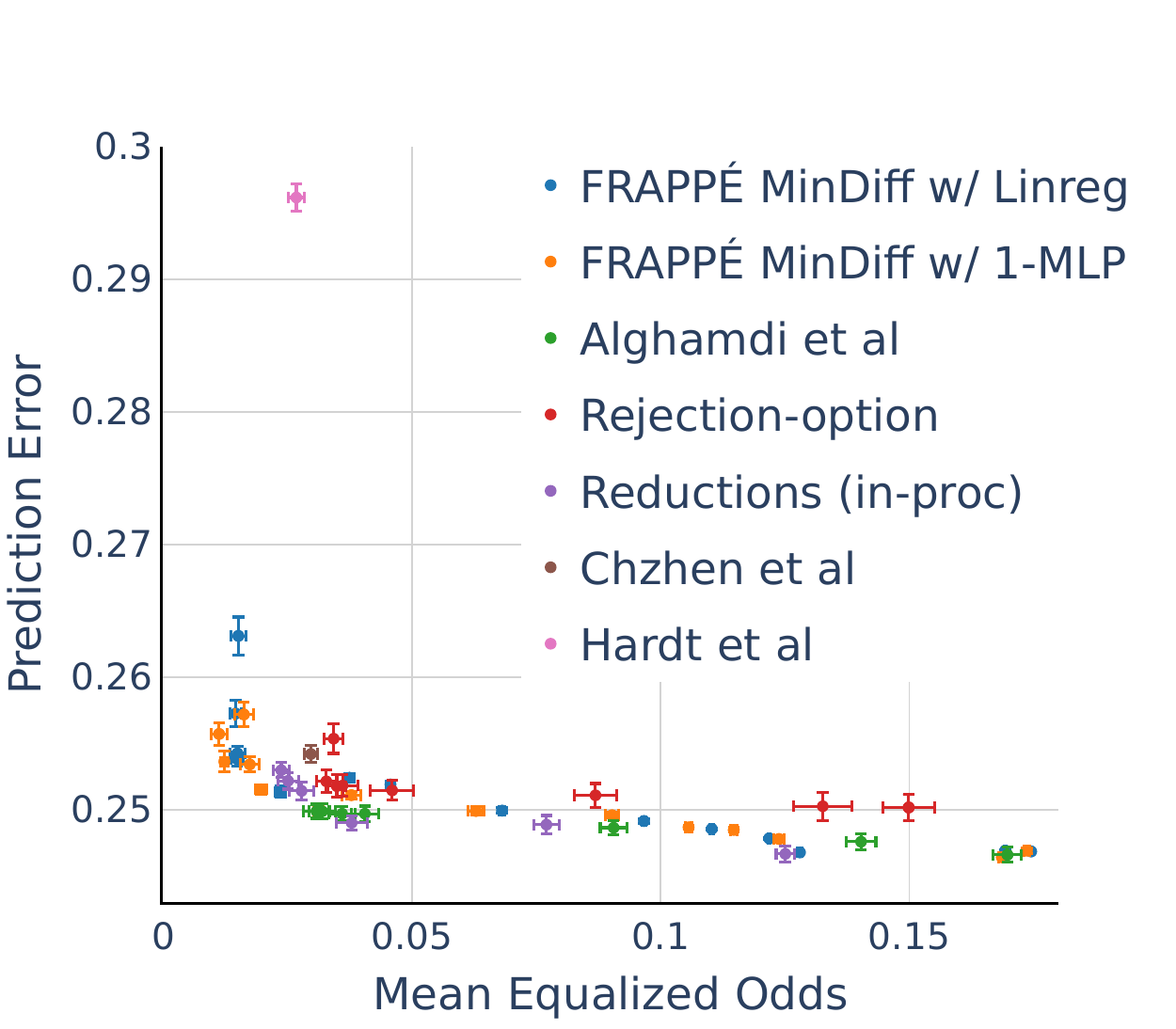}
    \caption{HSLS}
\end{subfigure}
\hfill
\begin{subfigure}[t]{0.24\textwidth}
    \includegraphics[width=\textwidth]{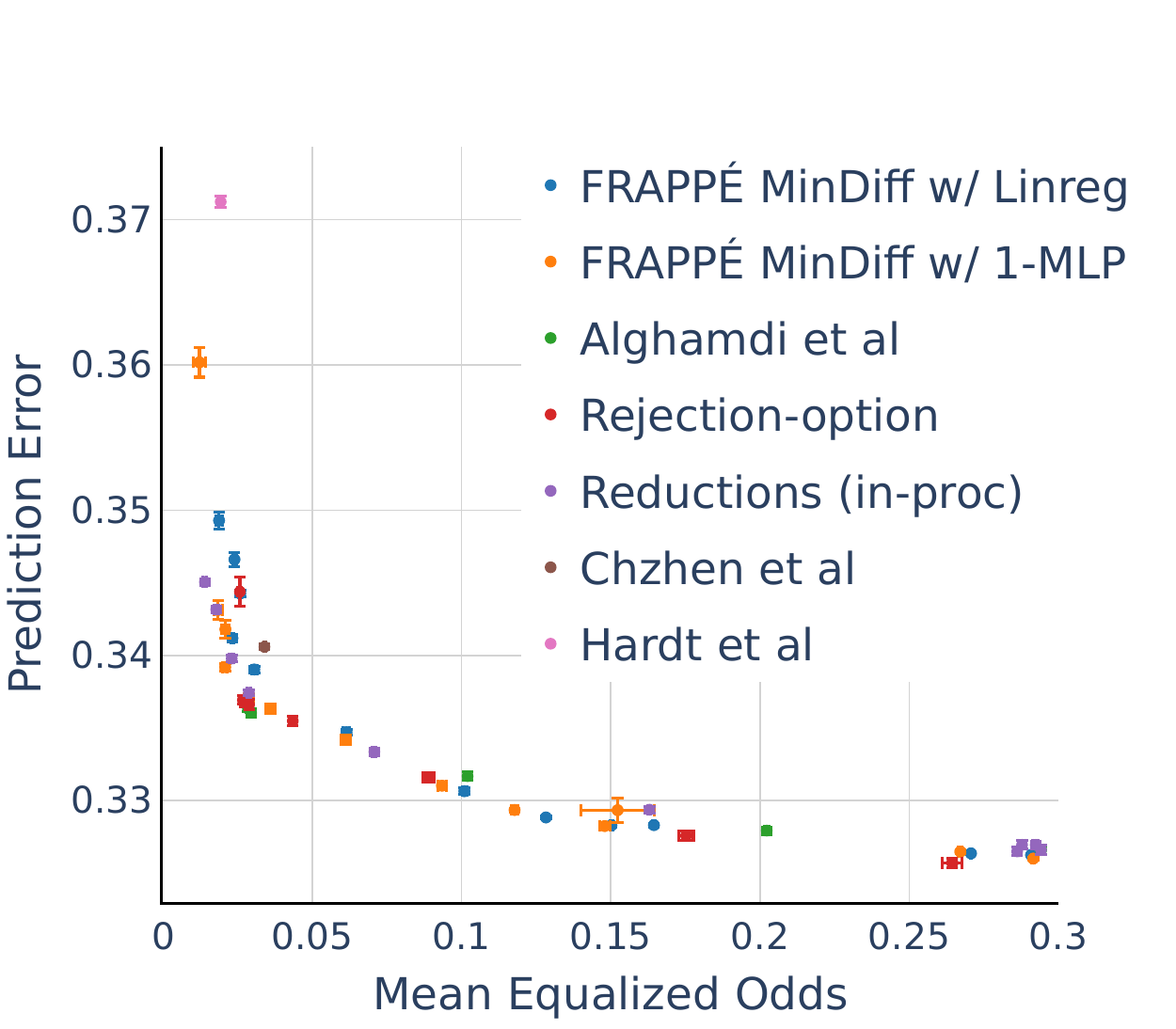}
    \caption{ENEM}
\end{subfigure}
\caption{Comparison between $\ours$ MinDiff for EqOdds and several other in- and post-processing methods for inducing group fairness. In contrast to \Cref{fig:app_more_baselines}, here the pre-trained model is \textbf{gradient-boosted machine (GBM)}.}
\label{fig:app_alghamdi_gbm}
\vspace{-0.5cm}
\end{figure*}

\begin{figure*}[!ht]
\centering
\begin{subfigure}[t]{0.24\textwidth}
    \includegraphics[width=\textwidth]{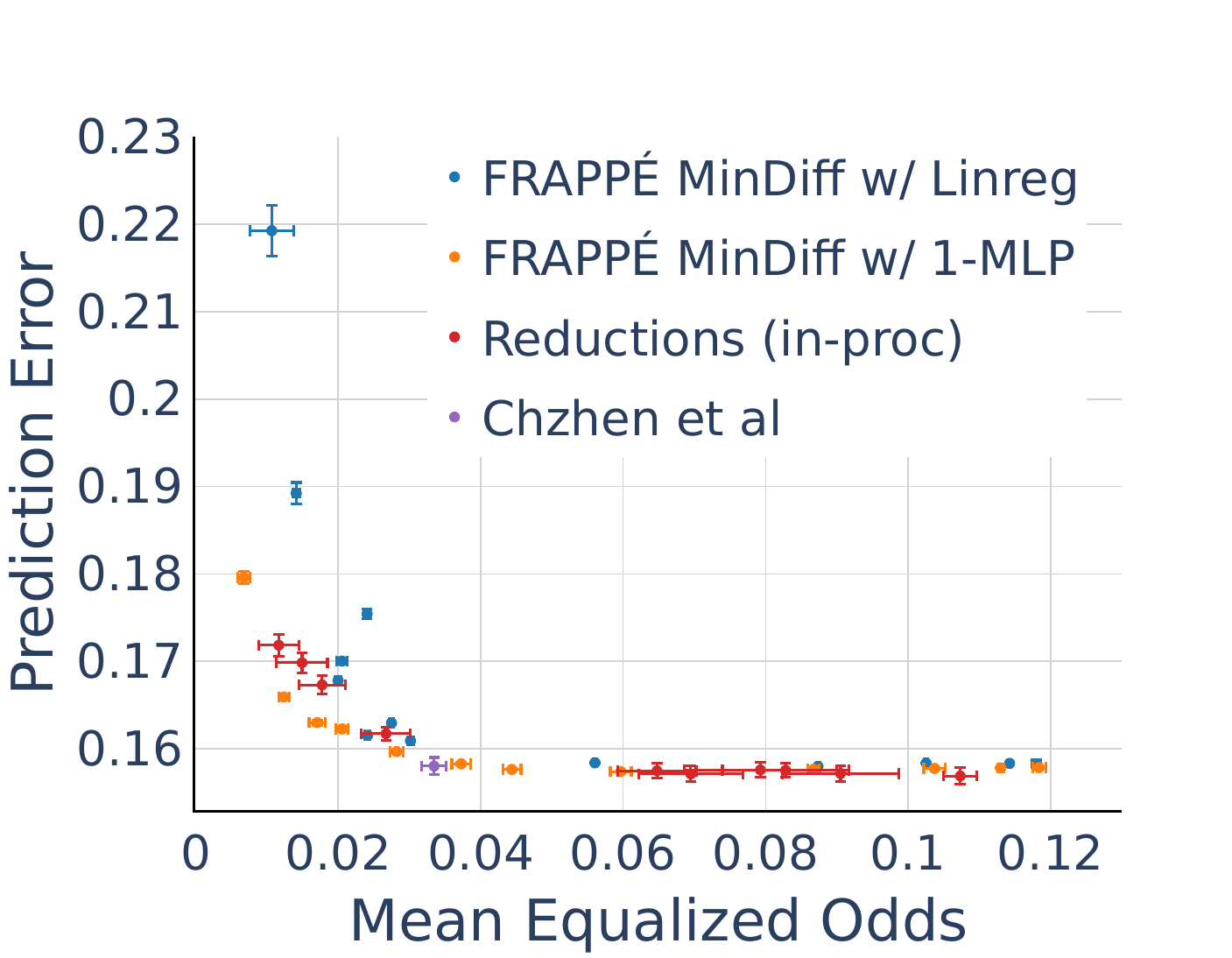}
    \caption{Adult}
\end{subfigure}
\hfill
\begin{subfigure}[t]{0.24\textwidth}
    \includegraphics[width=\textwidth]{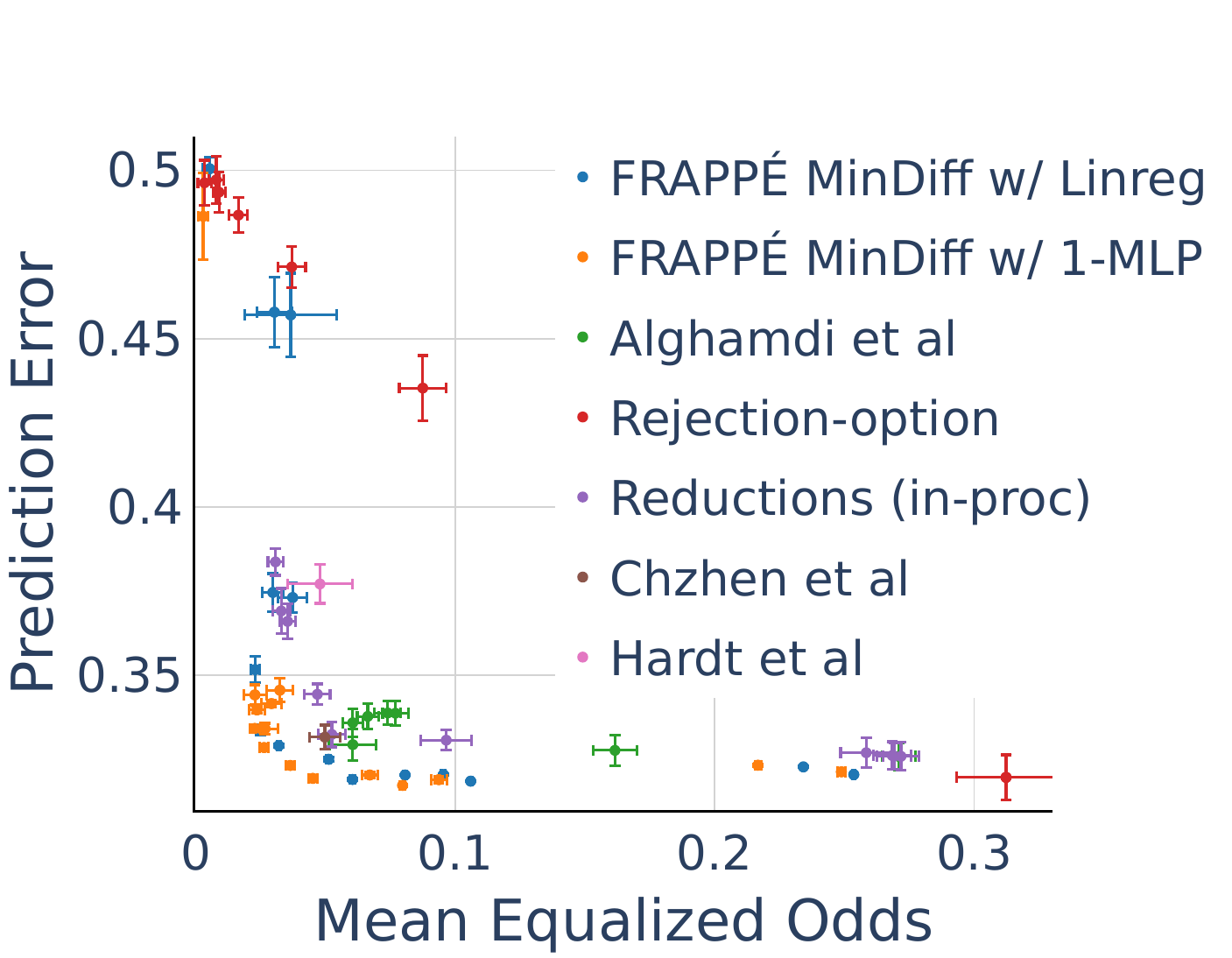}
    \caption{COMPAS}
\end{subfigure}
\hfill
\begin{subfigure}[t]{0.24\textwidth}
    \includegraphics[width=\textwidth]{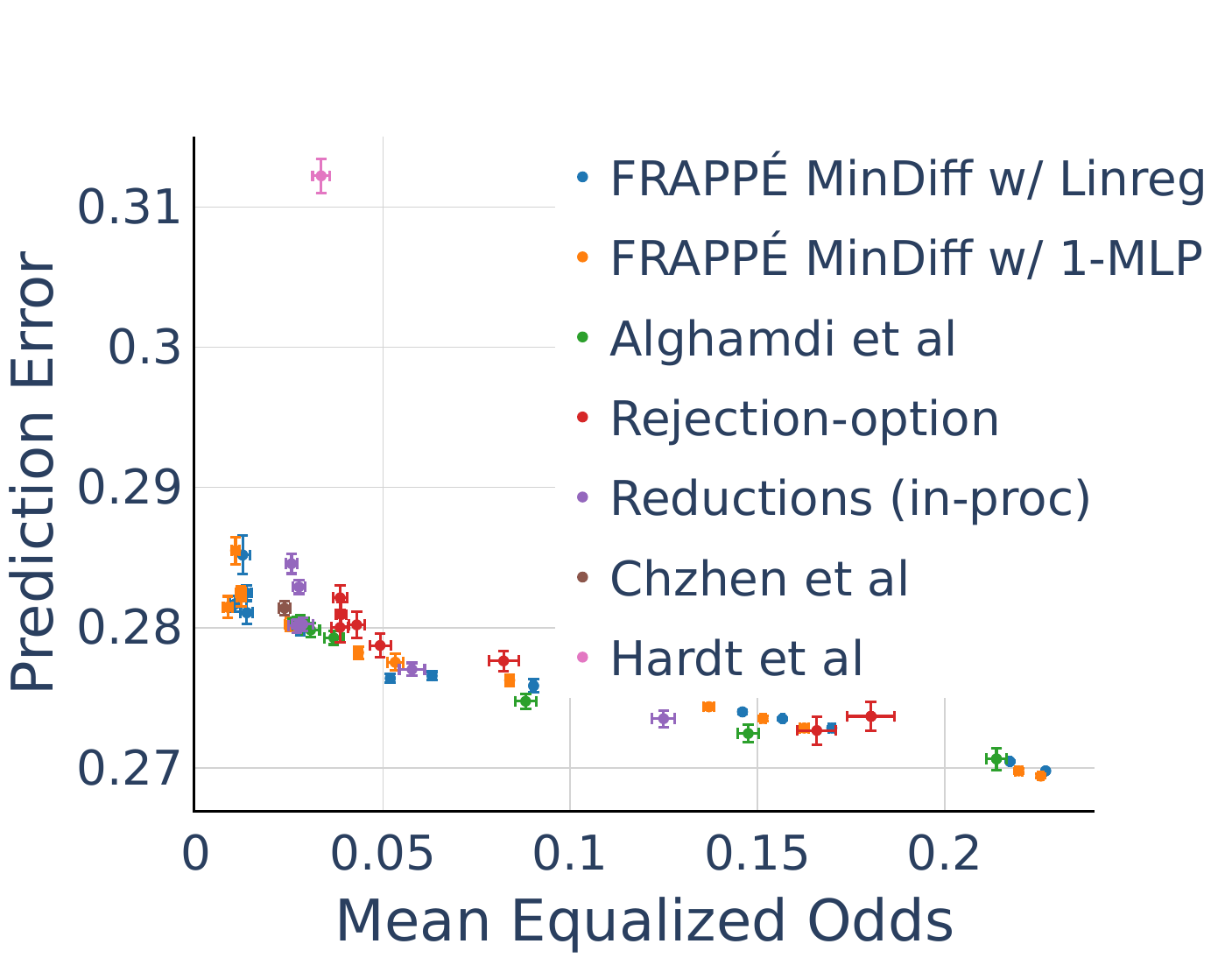}
    \caption{HSLS}
\end{subfigure}
\hfill
\begin{subfigure}[t]{0.24\textwidth}
    \includegraphics[width=\textwidth]{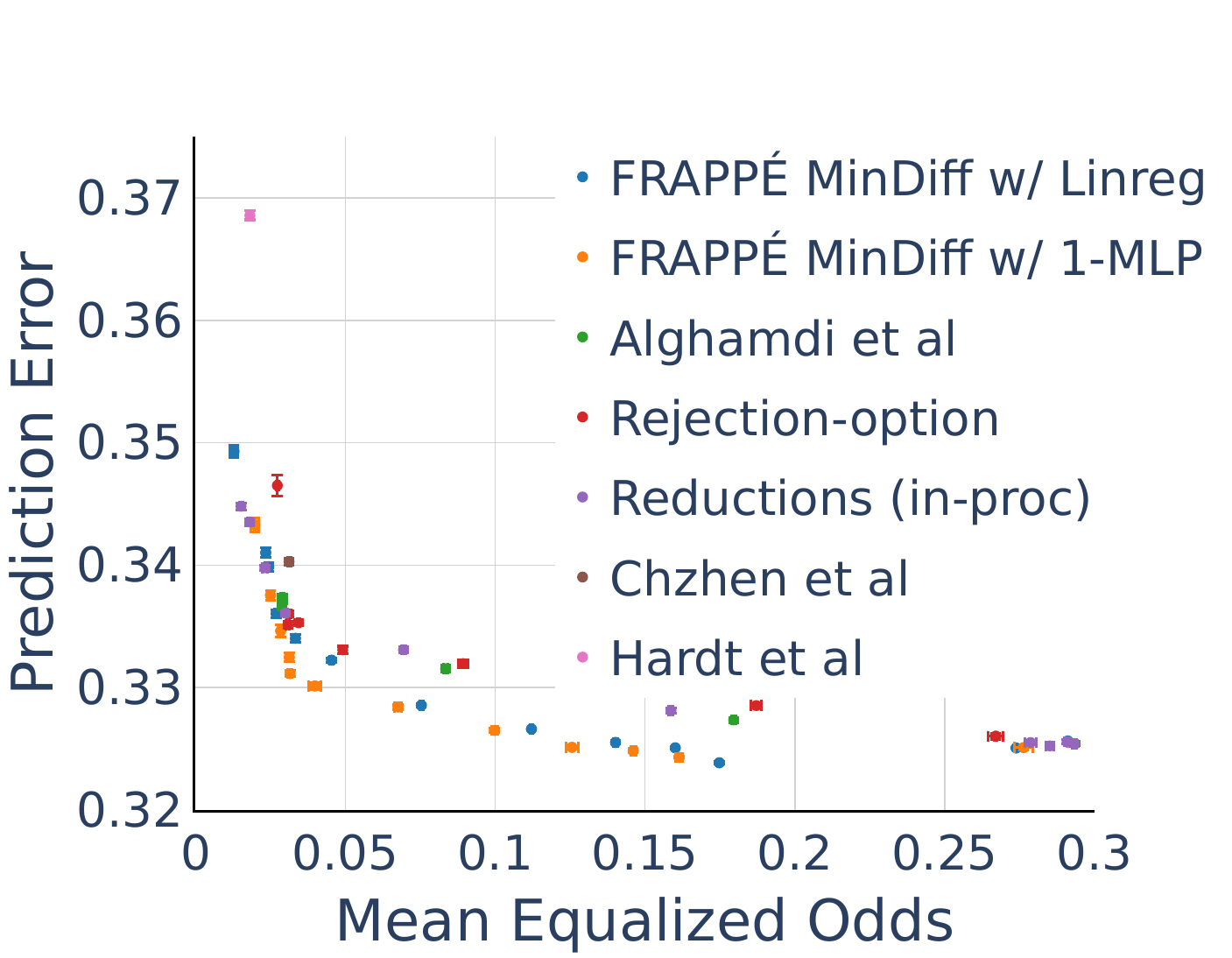}
    \caption{ENEM}
\end{subfigure}
\caption{Comparison between $\ours$ MinDiff for EqOdds and several other in- and post-processing methods for inducing group fairness. In contrast to \Cref{fig:app_more_baselines}, here the pre-trained model is \textbf{logistic regression}.}
\label{fig:app_alghamdi_logreg}
\vspace{-0.5cm}
\end{figure*}

\subsection{Comparison with model reprogramming}
\label{appendix:reprogramming}

Model reprogramming aims to reuse a pretrained model and adjust the inputs in order to elicit outputs with a desired property. In particular, for group fairness, \citet{zhang2022} consider a somewhat similar optimization objective to $\ours$. However, unlike $\ours$, the method of \citet{zhang2022} (i.e.\ Fairness Reprogramming) learns the parameters of a post-hoc transformation of the inputs of a pre-trained prediction model. On the one hand, on image data, choosing this transformation to be a border or a patch (see Figure 1 in \citet{zhang2022}) leads to remarkable results. To illustrate how $\ours$ methods perform on CelebA data, we consider the same experimental setting as in \citet{zhang2022} and provide in \Cref{fig:frappe_vs_reprogramming} the Pareto frontier obtained with the $\ours$ version of FERMI \citep{lowy2022}. 

\begin{figure}[h]
\centering
\includegraphics[width=0.5\textwidth]{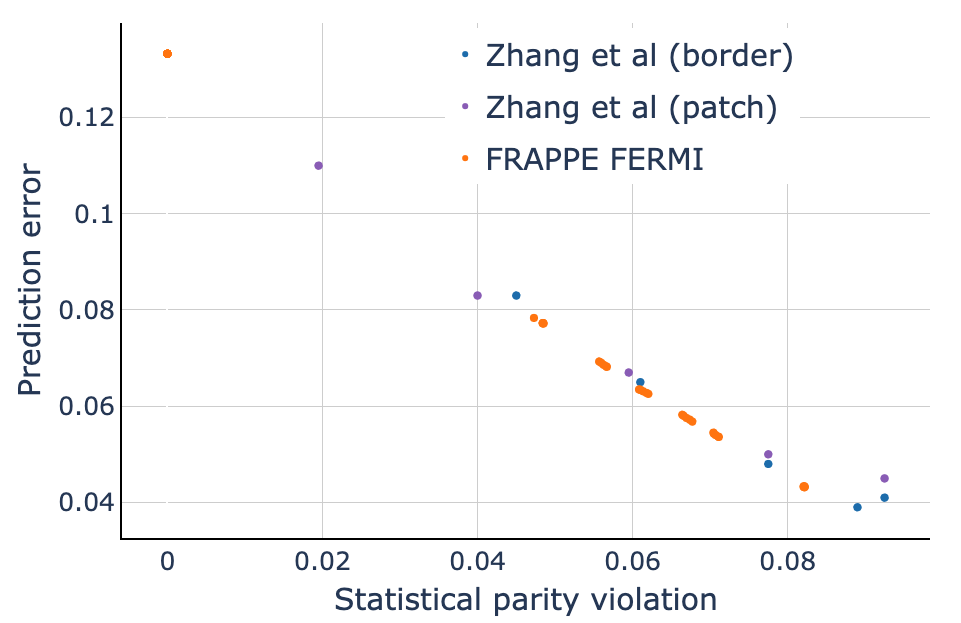}
\caption{$\ours$ FERMI leads to similar Pareto frontiers as Fairness Reprogramming \citep{zhang2022}.}
\label{fig:frappe_vs_reprogramming}
\end{figure}

Our experiments reveal that $\ours$ FERMI performs similarly to Fairness Reprogramming on this dataset. We note, however, that unlike Fairness Reprogramming, $\ours$ methods do not need to carefully select the family of post-hoc transformations. For Fairness Reprogramming, this choice has great influence on performance, as indicated, for instance, by Figures 3 and 4 in \citet{zhang2022}. Moreover, for Fairness Reprogramming the post-hoc transformation is specific to a data modality, and different problems may require a human expert to design a set of reasonable candidate post-hoc transformations. 

In fact, for tabular data, results in \citet{zhang2022} suggest that Fairness Reprogramming performs worse than standard techniques such as \citet{zhang2018}, which in turn is outperformed by more recent approaches like \citet{cho20}. We hypothesize that constructing an appropriate parametric transformation of the inputs (the so-called trigger) for the Fairness Reprogramming method is more challenging for structured tabular data than it is for image or text modalities. In contrast, the $\ours$ variant of \citet{cho20} matches the Pareto frontiers of the in-processing counterpart on several datasets (including Adult), as indicated in Figures~\ref{fig:ip_vs_pp_cho_main} and~\ref{fig:ip_vs_pp_cho}, and thus outperforms both \citet{zhang2018} and \citet{zhang2022}.

\subsection{In-processing MinDiff overfits the fairness regularizer}
\label{appendix:overfitting}

In this section we provide evidence that suggests that regularized in-processing objectives can overfit the fairness regularizer when trained on data with partial group labels. In particular, we consider MinDiff run on the Adult dataset. We subsample the dataset with sensitive attributes to be only $0.1\%$ of the original training data. As described in~\Cref{sec:novel_failure}, we use the entire training data for the predictive term in the loss.

\Cref{fig:app_overfitting} shows the median EqOpp violation (i.e.\ the FPR gap) as a function of the number of training epochs of in-processing MinDiff. The learning curves in the figure indicate that when the data with sensitive attributes is sufficiently large, the fairness violation is low on both training and test data. However, for partial group labels, in-processing MinDiff quickly achieves a vanishing fairness regularizer on the training data, while the test FPR gap continues to increase during training.

\begin{figure}[ht]
\centering
\begin{subfigure}[t]{0.4\textwidth}
\includegraphics[width=\textwidth]{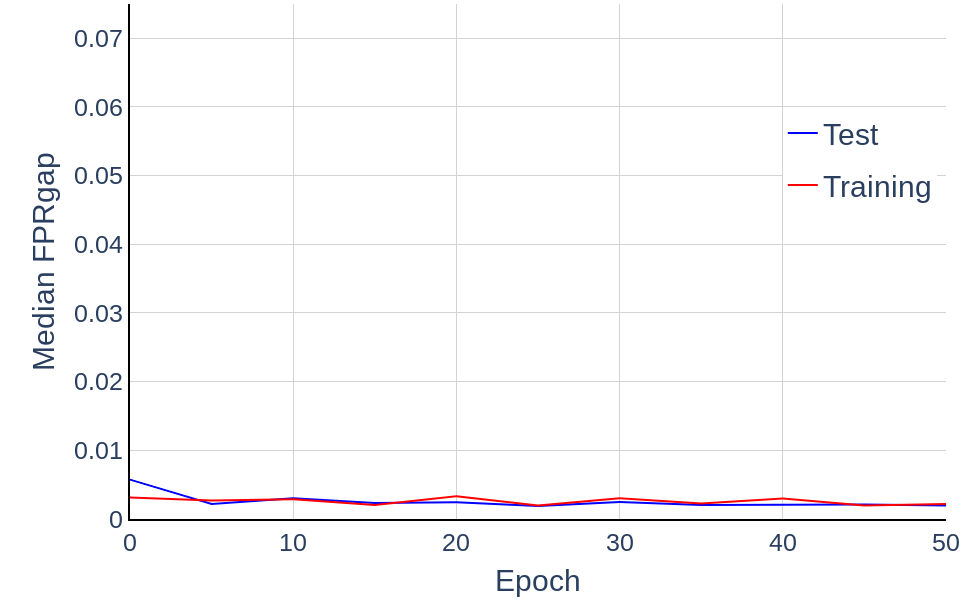}
\caption{$|\Dsens|=100\%\cdot|\Dpred|$}
\end{subfigure}
\begin{subfigure}[t]{0.4\textwidth}
\includegraphics[width=\textwidth]{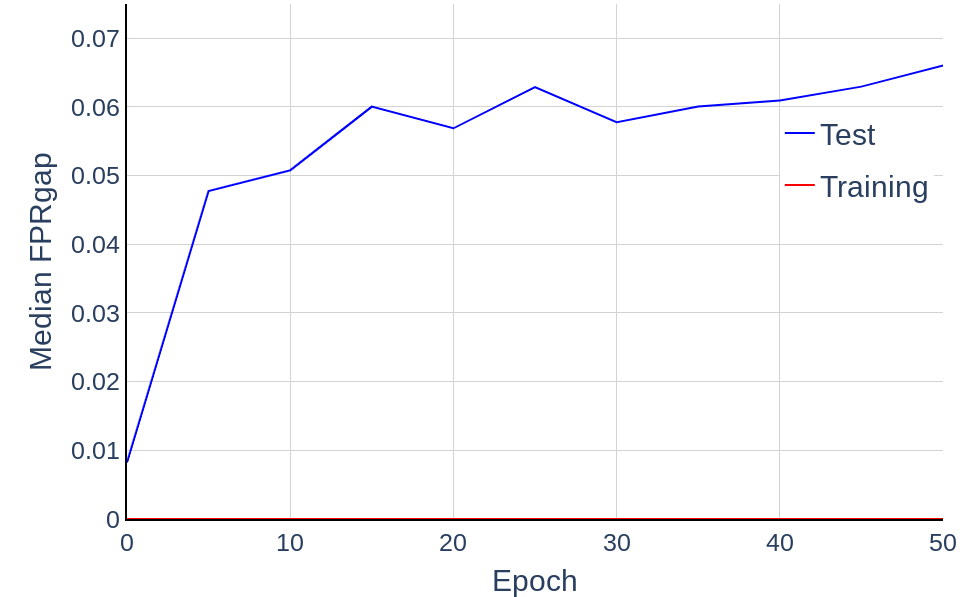}
\caption{$|\Dsens|=0.1\%\cdot|\Dpred|$}
\end{subfigure}
\caption{In-processing MinDiff achieves a low FPR both on training and test data if data with sensitive attributes is plentiful. When only partial group labels are available, the training FPR gap vanishes, while on the test set the unfairness of the model increases during training. Experiments run on the Adult dataset.}
\label{fig:app_overfitting}
\end{figure}

\subsection{$\ours$ MinDiff without early-stopping for data with partial group labels}
\label{appendix:frappe_without_es}

\Cref{fig:low_sample} shows that with optimal early-stopping regularization, $\ours$ MinDiff significantly outperforms its in-processing counterpart when only partial group labels are available for training. In this section, we present evidence that suggests that even without early-stopping, $\ours$ post-processing can perform well in this setting. In~\Cref{fig:app_low_sample} we compare in-processing MinDiff regularized with early-stopping (like in~\Cref{fig:low_sample}) to $\ours$ MinDiff \emph{with no regularization}. The Pareto frontier for $\ours$ is once again better than for the in-processing method. Moreover, without the need to regularize the $\ours$ method, training does away with careful hyperparameter tuning of the important early stopping iteration.

\begin{figure*}[ht]
\centering
\includegraphics[width=\textwidth]{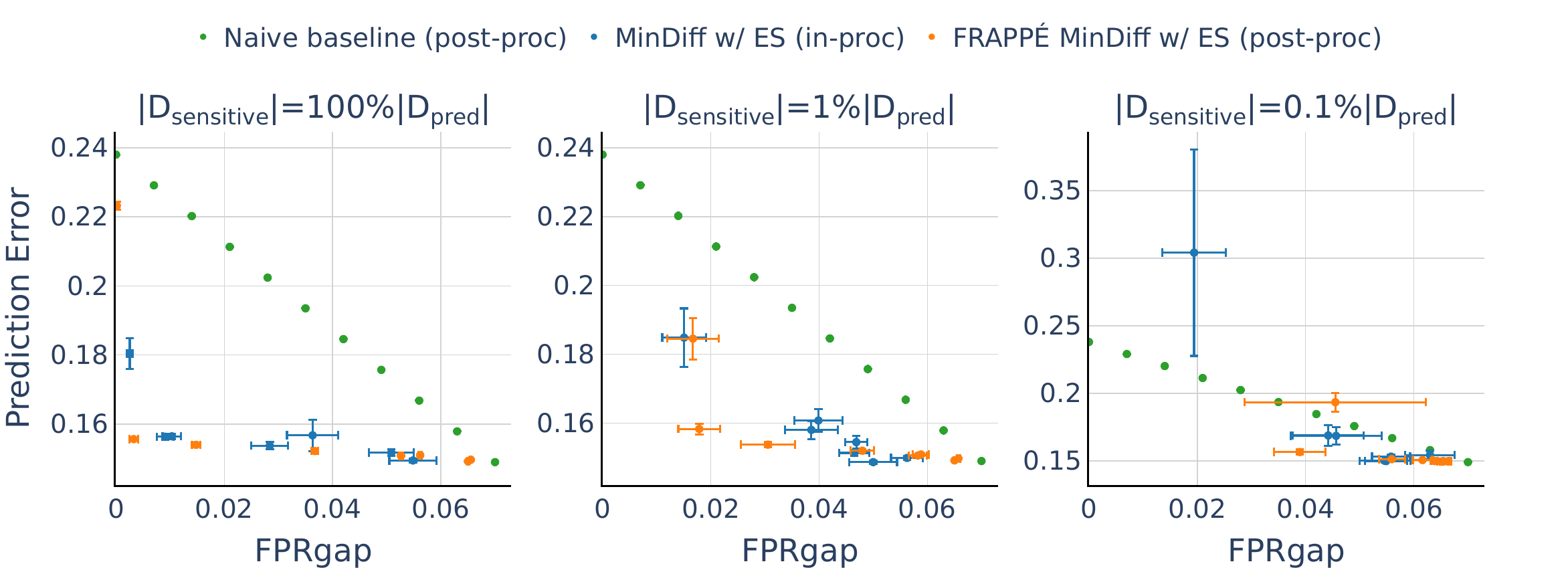}
\caption{In-processing MinDiff with early-stopping regularization and $\ours$ MinDiff \emph{without early-stopping} (ES). On data with partial group labels, the $\ours$ method continues to outperform the in-processing variant even without early-stopping regularization. Experiments run on the Adult dataset.}
\label{fig:app_low_sample}
\end{figure*}

\subsection{More results on what is captured by the learned post-hoc transformation}
\label{appendix:transformation_analysis}

In this section we complement \Cref{fig:analysis_main} with additional evidence that the post-hoc transformation learned by $\ours$ methods is correlated not only with the sensitive attribute $A$, but also with features that are predictive of the target class label. \Cref{fig:app_analysis_adult} shows that the same trends observed on COMPAS, also occur for the Adult dataset.

In addition, \Cref{fig:app_cond_analysis_compas,fig:app_cond_analysis_adult} show the (absolute value of the) conditional correlation between $\mult(X)$ and each of the features, given the sensitive attribute $A$. For reference, a group-dependent transformation, such as the prior post-processing techniques \citep{hardt16, alghamdi22, xian23} would be constant given $A$, and hence, statistically independent of all the other features. In contrast, the post-hoc transformation learned with $\ours$ is highly correlated with features that are predictive of the class label (e.g.\ priors count for COMPAS; age, education or marital status for Adult).

\begin{figure*}[!ht]
\centering
\includegraphics[width=0.6\textwidth]{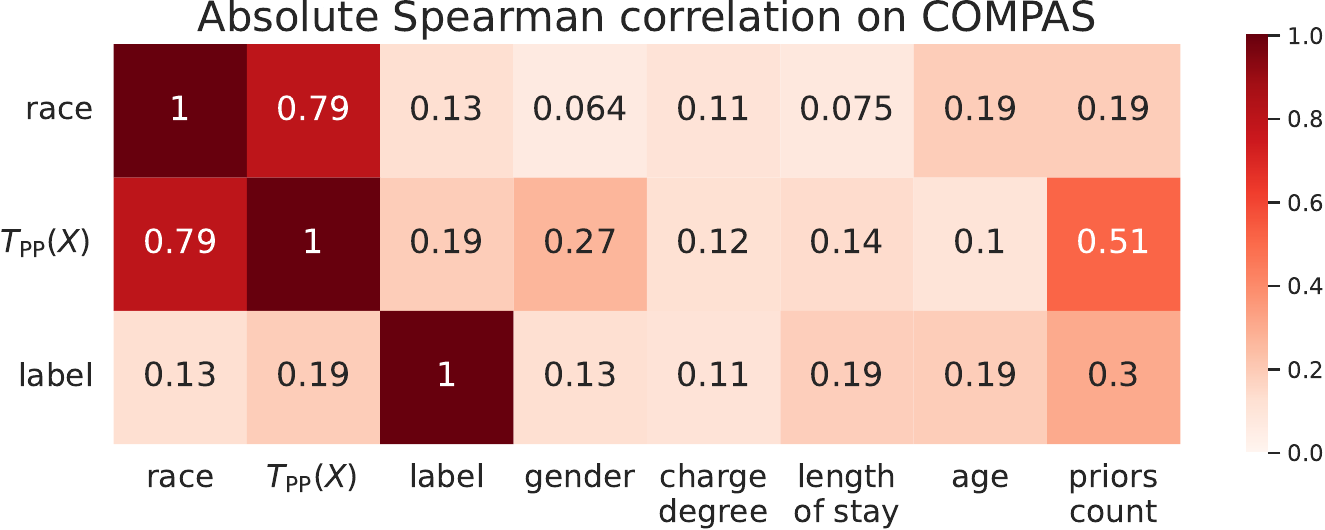}
\caption{The value of the learned post-hoc transformation $\mult(X)$ is highly correlated with both the sensitive attribute (i.e.\ race), as well as with features that are predictive of the class label (e.g.\ priors count). 
}
\label{fig:app_analysis_main}
\vspace{-0.5cm}
\end{figure*}

\begin{figure*}[!ht]
\centering
\includegraphics[width=0.8\textwidth]{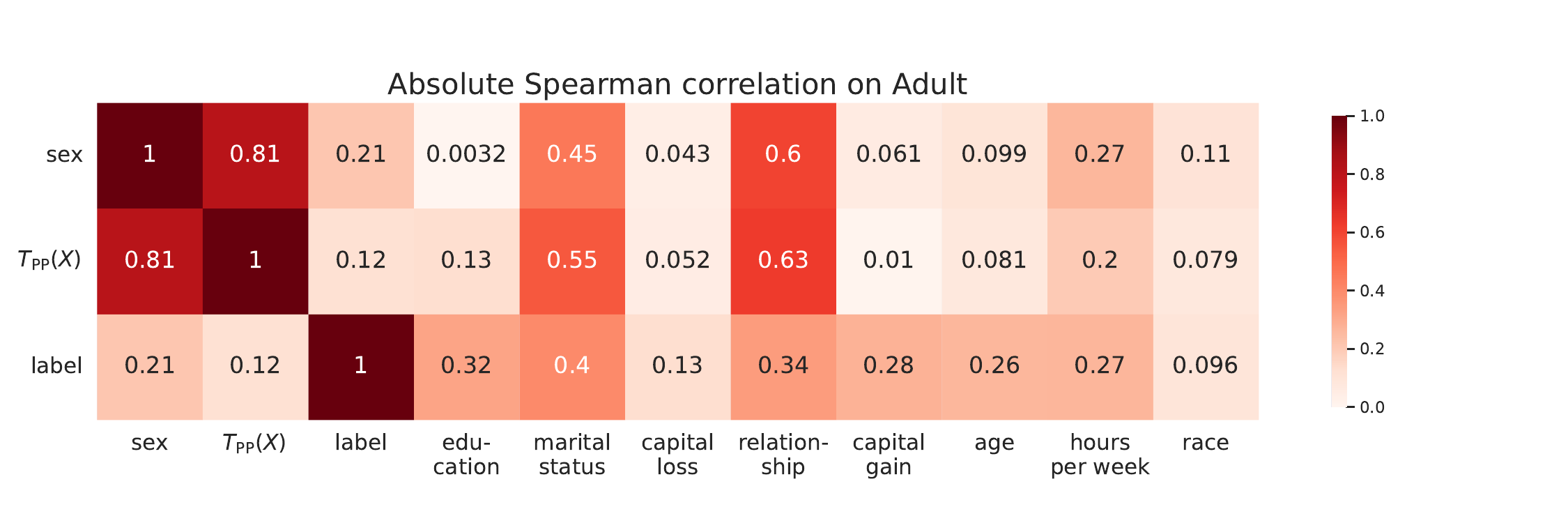}
\vspace{-0.3cm}
\caption{Counterpart of \Cref{fig:app_analysis_main}, but for the Adult dataset. The value of the learned post-hoc transformation $\mult(X)$ is highly correlated with the sensitive attribute (i.e.\ gender), as well as with features that are themselves correlated with $A$.}
\label{fig:app_analysis_adult}
\vspace{-0.5cm}
\end{figure*}

\begin{figure*}[!ht]
\centering
\includegraphics[width=0.6\textwidth]{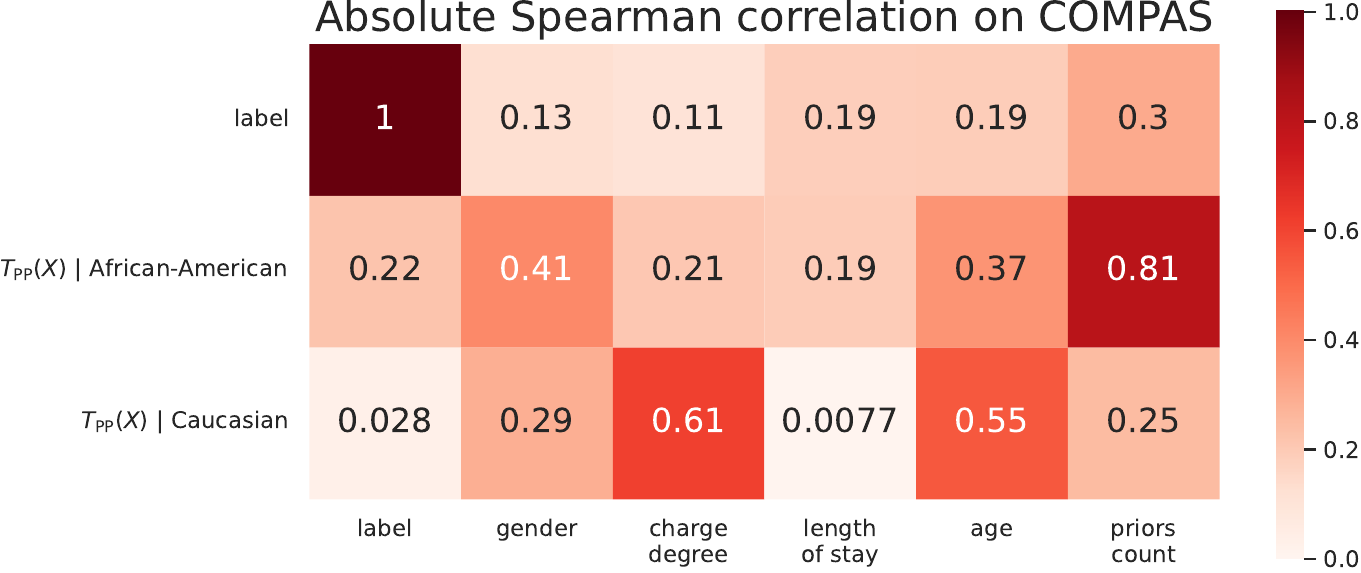}
\vspace{-0.1cm}
\caption{Conditional correlation between $\mult(X)$ and each of the features, on the COMPAS dataset. The correlation is higher for features that are predictive of the label. In contrast, a group-dependent transformation would be conditionally independent of all features given $A$.}
\label{fig:app_cond_analysis_compas}
\vspace{-0.5cm}
\end{figure*}

\begin{figure*}[!ht]
\centering
\includegraphics[width=0.7\textwidth]{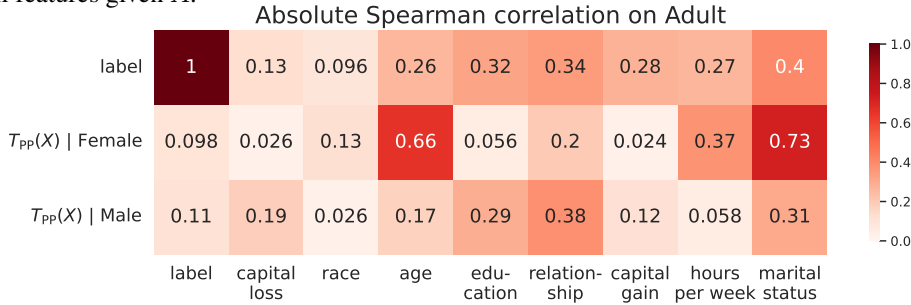}
\vspace{-0.1cm}
\caption{Conditional correlation between $\mult(X)$ and each of the features, on the Adult dataset. The correlation is higher for features that are predictive of the label. In contrast, a group-dependent transformation would be conditionally independent of all features given $A$}
\label{fig:app_cond_analysis_adult}
\vspace{-0.5cm}
\end{figure*}

\subsection{Varying function complexity for $\fctbase$ and $\mult$}
\label{appendix:vary_complexity}

In this section, we discuss how changing the function class of the prediction model $\fctbase$ or the post-hoc module $\mult$ impacts the performance of $\ours$ methods.

Figures~\ref{fig:app_more_baselines},~\ref{fig:app_alghamdi_gbm}, and~\ref{fig:app_alghamdi_logreg} show how $\ours$ MinDiff compares to several prior approaches when the base model is a random forest, a gradient-boosted machine or logistic regression, respectively. Notably, the computation cost for training $\ours$ MinDiff is roughly unchanged for all three base model classes.

Furthermore, in each of these figures, we consider two different function classes for the post-processing transformation $\mult$: a linear model and a $1$-hidden layer multi-layer perceptron. These experiments confirm the intuition that a more expressive post-hoc transformation for $\ours$ can lead to better Pareto frontiers.

Finally, in \Cref{fig:vary_inproc_complexity} we highlight how the complexity of the prediction model produced by an in-processing technique affects the Pareto frontier relative to its $\ours$ counterpart. We assume the same prediction tasks for Adult and COMPAS as in the rest of the paper. More specifically, the left and right panels use the same pre-processing steps as Figures~\ref{fig:comparison_alghamdi_compas} and~\ref{fig:ip_vs_pp_cho_compas}, respectively. For this experiment, we consider two different fairness definitions (EqOpp and EqOdds) and two different classes of base models for $\ours$ (3-layer multi-layer perceptron, i.e.\ 3-MLP, and a gradient-boosted machine, i.e.\ GBM). We compare $\ours$ MinDiff to two different in-processing methods, MinDiff \citep{beutel2019, prost19} and Reductions \citep{agarwal18}. We choose the optimal hyperparameters (e.g.\ learning rate) using a held-out validation set and the same methodology as in the rest of the experiments. These experiments reveal that $\ours$ methods match the performance of in-processing with a complex function class $\mathcal{F}$, which in turn outperforms in-processing with a simpler function class $\mathcal{F}={\text{linear models}}$. 

\begin{figure}[!h]
\centering
\begin{subfigure}[t]{0.49\textwidth}
    \includegraphics[width=\textwidth]{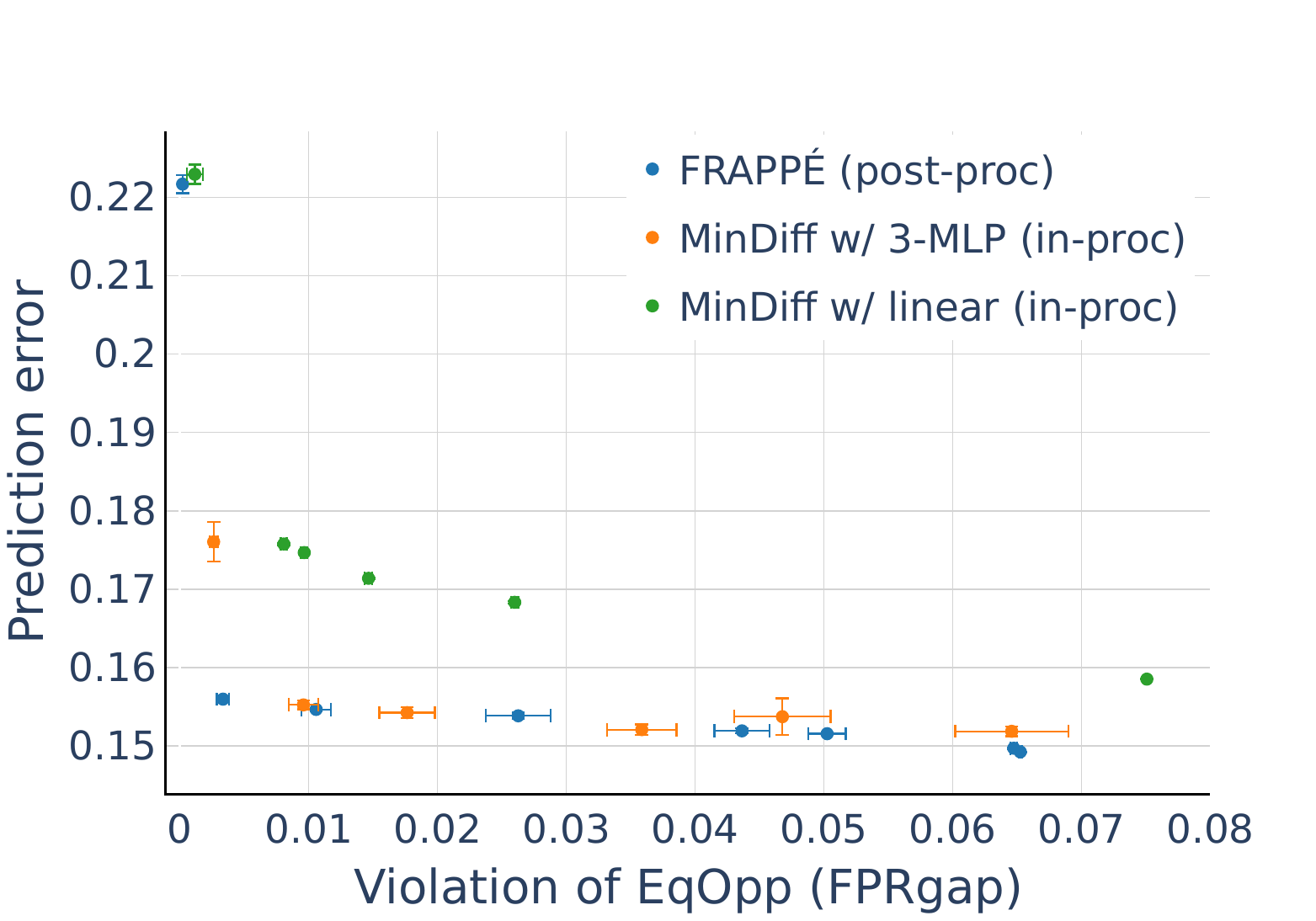}
    \caption{Inducing EqOpp on Adult.}
\end{subfigure}
\hfill
\begin{subfigure}[t]{0.47\textwidth}
    \includegraphics[width=\textwidth]{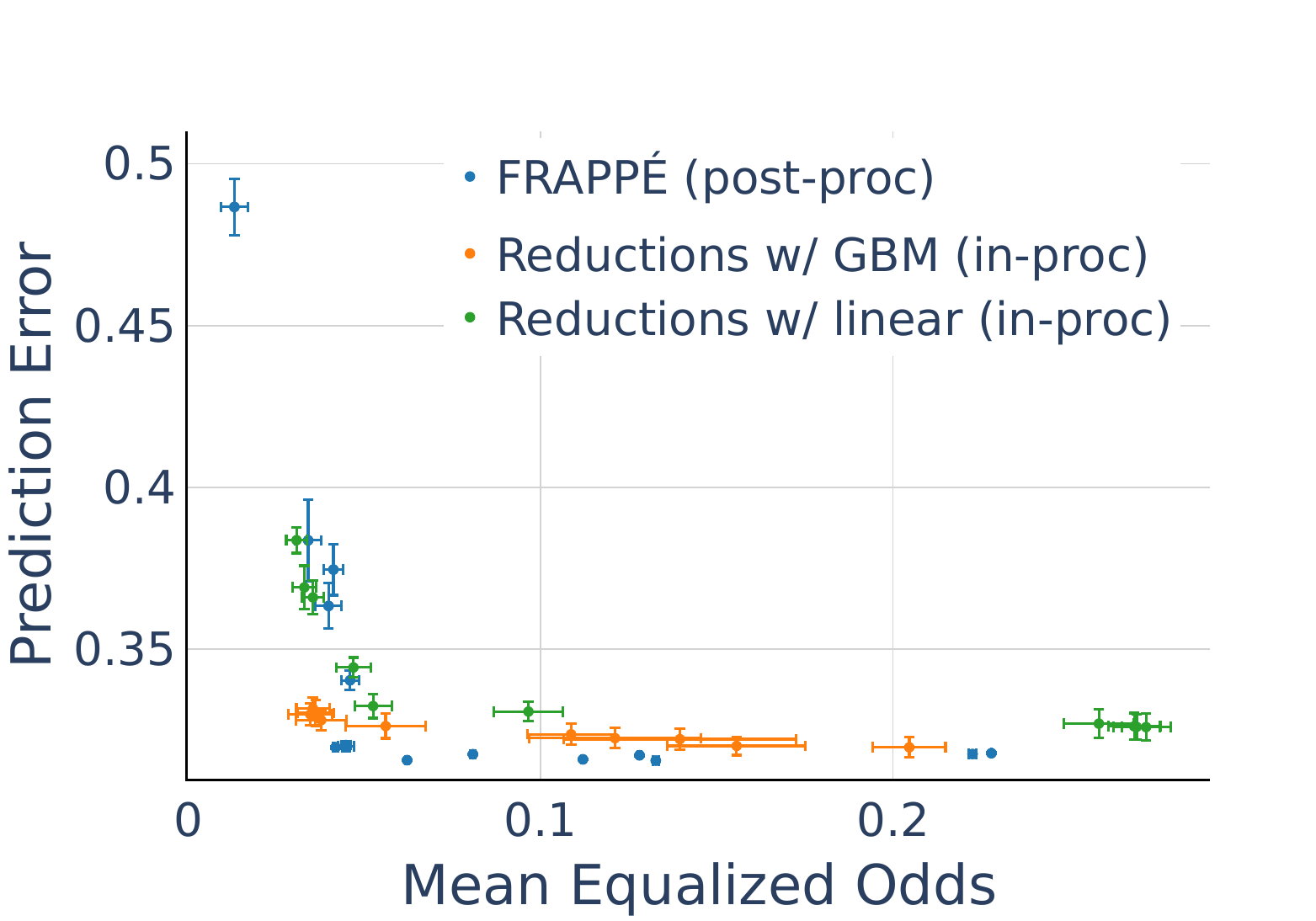}
    \caption{Inducing EqOdds on COMPAS.}
\end{subfigure}
\caption{$\ours$ (in \textcolor{blue}{blue}) matches the performance of in-processing with the same base model complexity (i.e.\ in \textcolor{orange}{orange}) and outperforms in-processing that uses less complex linear models (in \textcolor{green}{green}). For $\ours$, the post-hoc module is always linear, while the base model is a 3-layer MLP (left) or a GBM (right).}
\label{fig:vary_inproc_complexity}
\vspace{128in}
\end{figure}

\end{document}